\definecolor{codegreen}{rgb}{0,0.6,0}
\definecolor{codegray}{rgb}{0.5,0.5,0.5}
\definecolor{codepurple}{rgb}{0.58,0,0.82}
\definecolor{backcolour}{rgb}{0.95,0.95,0.92}
\definecolor{mediumtealblue}{rgb}{0.0, 0.33, 0.71}
\definecolor{darkpastelgreen}{rgb}{0.01, 0.75, 0.24}
\definecolor{azure}{rgb}{0.0, 0.5, 1.0}
\newcommand{\cellbest}{\cellcolor{orange!35}}
\newcommand{\cellsecond}{\cellcolor{orange!10}}
\newcommand{\highlighteqn}[2]{\raisebox{0.0ex}{\colorbox{#1}{$\displaystyle #2$}}}
\newcommand{\highlightalg}[2]{\colorbox{#1}{\parbox{\dimexpr\linewidth-2\fboxsep}{#2}}}
\newcommand{\emojiice}{\raisebox{-0.2\height}{\includegraphics[width=1em]{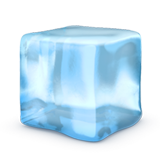}}}
\newcommand{\emojifire}{\raisebox{-0.1\height}{\includegraphics[width=1em]{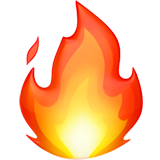}}}
\newcommand{\Mat}{\boldsymbol}
\newcommand{\Set}{\mathcal}
\newcommand{\real}{\mathbb{R}}
\newcommand{\nat}{\mathbb{N}}
\newcommand{\bigO}{\mathcal{O}}
\newcommand{\wt}[1]{\widetilde{#1}}
\newcommand{\wb}[1]{\overline{#1}}
\newcommand{\cL}{\mathcal{L}}
\newtheorem{theorem}{Theorem}
\newtheorem{lemma}[theorem]{Lemma}
\DeclareMathOperator{\trace}{tr}
\DeclareMathOperator{\mean}{\mathbb{E}}
\DeclareMathOperator{\gauss}{\mathcal{N}}
\definecolor{cvprblue}{rgb}{0.21,0.49,0.74}
\title{Steepest Descent Density Control for Compact 3D Gaussian Splatting}
\author{Peihao Wang\textsuperscript{1}\footnotemark[1]\hspace{0.4em}\footnotemark[2], Yuehao Wang\textsuperscript{1}\footnotemark[1], Dilin Wang\textsuperscript{2}, Sreyas Mohan\textsuperscript{2}, Zhiwen Fan\textsuperscript{1}, Lemeng Wu\textsuperscript{2}, \\ Ruisi Cai\textsuperscript{1}, Yu-Ying Yeh\textsuperscript{2},
Zhangyang Wang\textsuperscript{1}, Qiang Liu\textsuperscript{1}, Rakesh Ranjan\textsuperscript{2} \\
{\textsuperscript{1}The University of Texas at Austin, \textsuperscript{2}Meta Reality Labs}\\
{\tt\small\{peihaowang, yuehao, zhiwenfan, ruisi.cai, atlaswang\}@utexas.edu,} \\
{\tt\small lqiang@cs.utexas.edu, \{wdilin, sreyasmohan, lmwu, yyyeh, rakeshr\}@meta.com} \\
\tt\small \href{https://vita-group.github.io/SteepGS/}{vita-group.github.io/SteepGS}
}
\begin{document}

\twocolumn[{%
\renewcommand\twocolumn[1][]{#1}%
\maketitle
\begin{center}
\centering
\begin{tabular}{lr}
\includegraphics[trim= 10 12 10 10,clip,width=0.68\linewidth]{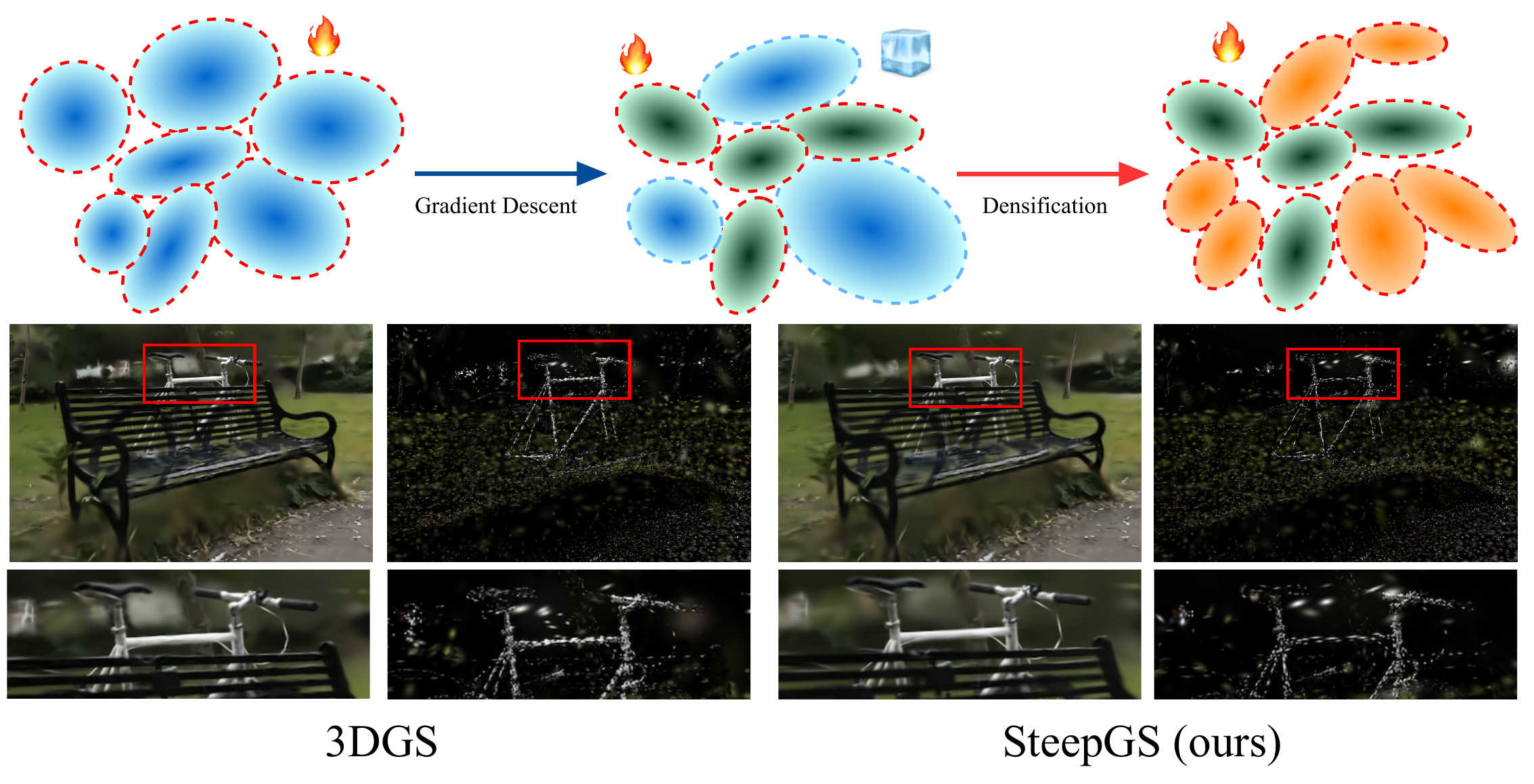} &
\includegraphics[trim= 20 1 14 20,clip,width=0.3\linewidth]{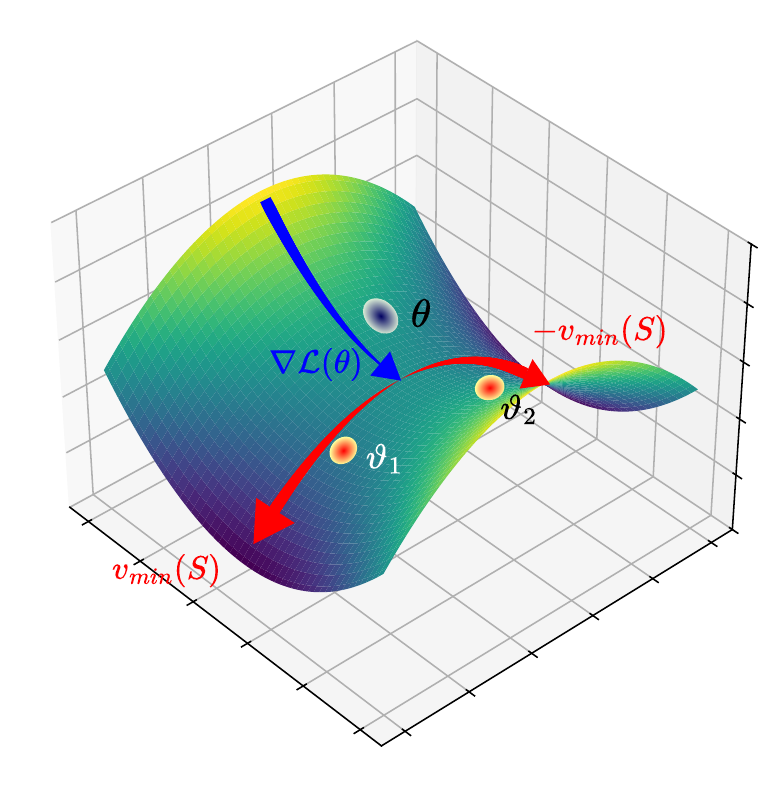}
\end{tabular}
\captionsetup{hypcap=false}
\vspace{-1em}
\captionof{figure}{
We theoretically investigate density control in 3DGS. As training via gradient descent progresses, many Gaussian primitives are observed to become stationary while failing to reconstruct the regions they cover (\eg the \textcolor{cyan}{cyan}-colored blobs in the top-left figure marked with \protect\emojiice).
From an optimization-theoretic perspective (see figure on the right), we reveal that these primitives are trapped in saddle points, the regions in the loss landscape where \textit{\textcolor{blue!75}{gradients}} are insufficient to further reduce loss, leaving parameters sub-optimal locally.
To address this, we introduce \textbf{SteepGS}, which efficiently identifies Gaussian points located in saddle area, splits them into two off-springs, and displaces new primitives along the \textit{\textcolor{red!65}{steepest descent directions}}.
This restores the effectiveness of successive gradient-based updates by escaping the saddle area (\eg the \textcolor{orange}{orange}-colored blobs in the top-left figure marked with \protect\emojifire~become optimizable after densification).
As shown in the bottom-left visualization, SteepGS achieves a more compact parameterization while preserving the fidelity of fine geometric details.
}
\label{fig:teaser}
\end{center}%
}]

\renewcommand{\thefootnote}{\fnsymbol{footnote}}
\footnotetext[1]{Equal contribution.}
\footnotetext[2]{Work done during an internship with Meta Reality Labs.}
\renewcommand{\thefootnote}{\arabic{footnote}}

\begin{abstract}
3D Gaussian Splatting (3DGS) has emerged as a powerful technique for real-time, high-resolution novel view synthesis.
By representing scenes as a mixture of Gaussian primitives, 3DGS leverages GPU rasterization pipelines for efficient rendering and reconstruction.
To optimize scene coverage and capture fine details, 3DGS employs a densification algorithm to generate additional points.
However, this process often leads to redundant point clouds, resulting in excessive memory usage, slower performance, and substantial storage demands--posing significant challenges for deployment on resource-constrained devices.
To address this limitation, we propose a theoretical framework that demystifies and improves density control in 3DGS.
Our analysis reveals that splitting is crucial for escaping saddle points.
Through an optimization-theoretic approach, we establish the necessary conditions for densification, determine the minimal number of offspring Gaussians, identify the optimal parameter update direction, and provide an analytical solution for normalizing off-spring opacity.
Building on these insights, we introduce \textbf{SteepGS}, incorporating \textit{steepest density control}, a principled strategy that minimizes loss while maintaining a compact point cloud. SteepGS achieves a $\sim$ 50\% reduction in Gaussian points without compromising rendering quality, significantly enhancing both efficiency and scalability.
\end{abstract}

\section{Introduction}
\label{sec:intro}

3D Gaussian Splatting (3DGS) \cite{kerbl20233d}, as a successor to Neural Radiance Fields (NeRF) \cite{mildenhall2021nerf} for novel view synthesis, excels in delivering impressive view synthesis results while achieving real-time rendering of large-scale scenes at high resolutions.
Unlike NeRF’s volumetric representation, 3DGS represents radiance fields of 3D scenes as a mixture of Gaussian primitives, each defined by parameters such as location, size, opacity, and appearance \cite{zwicker2002ewa}.
By utilizing the rasterization pipeline integrated with GPUs, this approach allows for ultra-efficient rendering and backpropagation, significantly accelerating scene reconstruction and view inference.

At the core of 3DGS is an alternating optimization process that enables accurate approximation of complex scenes using Gaussian primitives.
Starting with a precomputed sparse point cloud as the initialization, 3DGS cycles between standard gradient-based photometric error minimization to refine Gaussian parameters, and a tailored Adaptive Density Control (ADC) algorithm \cite{kerbl20233d} to adjusts the number of Gaussian points.
During the densification phase, ADC identifies a set of heavily optimized points and splits each into two offspring, assigning distinct parameter updates based on their absolute sizes to ensure comprehensive scene coverage and capture fine geometric details.
However, this reconstruction pipeline in 3DGS often produces excessively large point clouds, causing increased memory usage, slower rendering speed, and significant disk overhead.
This issue poses a critical bottleneck for deployment on resource-constrained devices such as mobile phones and VR headsets.

While post-hoc pruning and quantization-based compression algorithms have been widely used to address this challenge \cite{fan2023lightgaussian, lee2024compact, navaneet2023compact3d, morgenstern2023compact, niedermayr2024compressed, wang2024end, papantonakis2024reducing, hanson2024pup}, there are only few approaches that directly tackle this problem through the densification process.
Optimizing the densification phase could potentially yield compact Gaussian point clouds for faster rendering while simultaneously reducing training costs.
Some prior works have attempted to revise the density control algorithm using heuristics, such as modifying the splitting criteria \cite{bulo2024revising} or generating new Gaussians by sampling from the opacity distribution \cite{kheradmand20243d}. However, the densification process is not well understood,  and as a result, the existing solutions only achieve very limited improvement as they primarily rely on heuristics.

In this paper, we theoretically demystify the density control algorithms for 3DGS through the lens of non-convex optimization.
Our analysis characterizes the loss behavior after splitting by introducing a novel matrix, termed the \textit{splitting matrix}, which links the first-order gradient of each Gaussian to its corresponding Hessian.
We prove that the splitting operation is crucial in 3DGS for escaping saddle points.
However, not all points benefit from densification.
Specifically, we demonstrate that splitting only reduces the loss if the associated splitting matrix is not positive semi-definite. This insight highlights the importance of splitting in density control algorithms, offering a deeper optimization-theoretic perspective to complement the existing geometric understanding.

Based on these theoretical investigations, we further draw the following affirmative conclusions:
\textit{(i)} Splitting each Gaussian into \textit{two} offspring is sufficient to achieve the optimal descent on loss while ensuring a controlled growth rate of the number of points.
\textit{(ii)} The magnitude of the off-spring Gaussians should be halved to preserve the local density.
\textit{(iii)} To achieve the steepest descent in loss after splitting, the new Gaussians should be displaced along the positive and negative directions of the eigenvector corresponding to the least eigenvalue of the splitting matrix.
We consolidate these findings into a principled splitting strategy, termed \textit{Steepest Density Control (SDC)}, which provably maximizes loss reduction while minimizing the number of yielded Gaussian off-springs.

We further demonstrate that Steepest Density Control (SDC) can be efficiently implemented and seamlessly integrated into the existing 3DGS CUDA kernel.
To compute the splitting matrix, we propose a parallel algorithm that leverages the closed-form Hessian for each Gaussian, combined with gradient information reused from backpropagation.
We term this enhanced 3DGS-based reconstruction system \textbf{SteepGS}.
Empirically, SteepGS achieves over a 50\% reduction in Gaussian points while maintaining high rendering quality, significantly improving memory efficiency and rendering speed.

\section{Related Work}

\subsection{Efficient Scene Representations}

Recent advancements in neural scene representations have transformed view synthesis. Neural Radiance Fields (NeRF)~\cite{mildenhall2021nerf} introduced a method for synthesizing photorealistic views by optimizing a continuous volumetric scene function using sparse input views, however their high computational cost limits their applications. Building on NeRF, several methods have attempted to reduce the computational like, InstantNGP~\cite{muller2022instant} by employing a multiresolution hash encoding, TensoRF~\cite{chen2022tensorf} by modelling radiance field as a 4D tensor with compact low-rank factorization, Generalizable NeRFs~\cite{wang2021ibrnet, chen2021mvsnerf, wang2022attention} by leveraging attention to decode assosciations between multiple views, and Plenoxel\cite{yu2021plenoctrees, fridovich2022plenoxels} by using a view-dependent sparse voxel model.  Light Field Networks~\cite{sitzmann2021light} and Surface Based Rendering~\cite{liu2019soft, shen2021deep, yariv2020multiview}, and Point Based Rendering~\cite{yifan2019differentiable, aliev2020neural, xu2022point} have also made significant contributions by proposing novel neural scene representations and differentiable rendering techniques. Unlike these methods, 3DGS~\cite{kerbl20233d} offers an alternative representation by modeling scenes with learnable anisotropic Gaussian kernels, enabling fast rendering through point-based rasterization.

\subsection{Compact 3D Gaussian Splatting}

The original 3D Gaussian Splatting~\cite{kerbl20233d} optimization tends to represent the scene with redundant GS. Several works have been proposed to obtain a more compact scene with distilled or compact representation on GS attributes~\cite{fan2023lightgaussian,lee2024compact}, improved pruning~\cite{kheradmand20243d,fan2023lightgaussian}, or even efficient densification~\cite{lee2024compact,kheradmand20243d,bulo2024revising, mallick2024taming}. Among the densification approaches, Lee~\etal~\cite{lee2024compact} proposes a learnable masking strategy during densification while using compact grid-based neural field and codebook to represent color and geometry. 3DGS-MCMC~\cite{kheradmand20243d} rewrites densification and pruning as deterministic state transition of MCMC samples. Revising-3DGS~\cite{bulo2024revising} proposes pixel-error driven density control for densification. Taming-3DGS~\cite{mallick2024taming} applies score-based densification with predictable budget control. Instead, our method takes partial Hessian information to achieve steepest density control.

\subsection{Neural Architecture Splitting}
In the field of neural architecture search, a promising direction involves starting with a small seed network and gradually expanding its size based on target performance and user constraints. This approach, known as neural architecture growth or splitting, was initially proposed by Net2Net~\cite{chen2015net2net}, which introduced the concept of growing networks by duplicating neurons or layers with noise perturbation.
Follow-up works~\cite{evci2022gradmax, wang2023learning, chen2021bert2bert, wu2022residualmixtureexperts} extend network module-wise growth to enhance model performance across various applications.
The most relevant prior work is perhaps S2D~\cite{wu2019splitting, wang2019energy, wu2020steepest}, which refines the neuron splitting process by calculating subsets of neurons to split and applying a splitting gradient for optimal updating of the off-springs.
After that, Firefly~\cite{wu2020firefly} introduces a first-order gradient-based approach to approximate the S2D split matrix, accelerating the splitting process and making the splitting method scalable.
Although theoretically related, these methods are not directly applicable to 3DGS optimization.
Our investigation into splitting schemes in 3DGS aims to address this gap.

\section{Preliminaries: 3D Gaussian Splatting}
\label{sec:prelim}

In this section, we briefly review the reconstruction pipeline of 3DGS, along the way, introducing necessary notations.

\vspace{-1em}
\paragraph{Scene Representation.}
The core idea of 3D Gaussian Splatting (3DGS) \cite{kerbl20233d} is to approximate the radiance field \cite{mildenhall2021nerf} of complex scenes via a mixture of Gaussian primitives \cite{zwicker2002ewa}.
Formally, consider one scene that is represented by a set of $n$ primitives with parameters $\Mat{\theta} = \{\Mat{\theta}^{(i)} \in \Theta \}_{i=1}^{n}$, where $\Mat{\theta}^{(i)} \triangleq (\Mat{p}^{(i)}, \Mat{\Sigma}^{(i)}, o^{(i)}, \Mat{c}^{(i)})$, $\Mat{p}^{(i)} \in \real^3$ denotes the central position, $\Mat{\Sigma}^{(i)} \in \mathbb{S}_{+}^{3 \times 3}$ is the positive semi-definite covariance matrix, often re-parameterized via a quaternion plus a scaling vector, $o^{(i)} \in [0, 1]$ denotes the magnitude of the Gaussian function, often geometrically interpreted as the opacity value.
Moreover, each Gaussian point is associated with color attributes $\Mat{c}^{(i)} \in \real^{3}$, which are stored as spherical harmonics coefficients and converted to RGB values at the rendering time.
The entire density field of the scene $\alpha: \real^3 \rightarrow \real$ is expressed as a combination of all primitives $\alpha(\Mat{\xi}) = \sum_{i=1}^{N} \sigma(\Mat{\xi}; \Mat{\theta}^{(i)})$, where each Gaussian primitive $\sigma(\cdot ; \Mat{\theta}^{(i)}): \real^3 \rightarrow \real$ contributes as a scaled Gaussian kernel on 3D point $\Mat{\xi} \in \real^3$ \cite{zwicker2002ewa}:
\begin{align}
\label{eqn:sigma_3D}
\sigma(\Mat{\xi} ; \Mat{\theta}^{(i)}) =
o^{(i)} \gauss(\Mat{\xi}; \Mat{p}^{(i)}, \Mat{\Sigma}^{(i)}).
\end{align}
Here $\gauss(\Mat{\xi}; \Mat{p}, \Mat{\Sigma}) = \exp(-\frac{1}{2}(\Mat{\xi} - \Mat{p})^\top \Mat{\Sigma}^{-1} (\Mat{\xi} - \Mat{p}))$ denotes an unnormalized Gaussian function.

\vspace{-1em}
\paragraph{Rasterization.}
To display a pixel on the screen, 
\citet{zwicker2002ewa} shows that volume rendering \cite{max1995optical} can be realized by first projecting each primitive individually and then compositing each primitive via a back-to-front $\alpha$-blending.
Formally, suppose the pixel location is $\Mat{x} \in \real^2$ and world-to-camera projection is $\Pi: \real^3 \rightarrow \real^2$, which encompasses both camera extrinsics and intrinsics.
To obtain the rendered color for pixel $\Mat{x}$, 3DGS first sorts points according to view-dependent depth and then adopts an efficient rasterization process formulated as follows:
\begin{align}
\label{eqn:alpha_blend}
\Mat{C}_{\Pi}(\Mat{x}; \Mat{\theta}) = \sum_{i=1}^{n} \Mat{c}_i T_i \sigma_{\Pi}(\Mat{x}; \Mat{\theta}^{(i)}),
\end{align}
where $T_i = \prod_{j=1}^{i-1} (1-\sigma_{\Pi}(\Mat{x}; \Mat{\theta}^{(j)}))$, and $\sigma_{\Pi}(\Mat{x}; \Mat{\theta}^{(i)})$ has the analytical form:
\begin{align}
\label{eqn:sigma_2D}
\sigma_{\Pi}(\Mat{x}; \Mat{\theta}^{(i)}) = o^{(i)} \gauss\left(\Mat{x}; \Pi(\Mat{p}^{(i)}), \Pi(\Mat{\Sigma}^{(i)})\right),
\end{align}
denoting the Gaussian primitive being projected to the 2D space via the affine transformation $\Pi$\footnote{\label{fn:Pi}Suppose affine transformation $\Pi(x) = \Mat{P} \Mat{x} + \Mat{b}$ for some $\Mat{P} \in \real^{2 \times 3}$ and $\Mat{b} \in \real^2$, then when applied to matrix $\Mat{\Sigma} \in \real^{3 \times 3}$, $\Pi(\Mat{\Sigma}) = \Mat{P} \Mat{\Sigma} \Mat{P}^{\top}$.}.

\vspace{-1em}
\paragraph{Optimization.}
To acquire parameters $\Mat{\theta}$ to represent 3D scenes, \citet{kerbl20233d} employs a point cloud computed from Structure-from-Motion (SfM) software \cite{schonberger2016structure} with input images as the initial positions of Gaussian primitives.
Afterward, 3DGS optimizes parameters $\Mat{\theta}$ by minimizing the photometric error between captured images and images rendered from $\Mat{\theta}$ at the same viewpoint.
Suppose we have error function $\ell(\cdot, \cdot)$, the total photometric loss can be written as:
\begin{align} \label{eqn:loss}
\cL(\Mat{\theta}) = \mean_{(\Pi, \Mat{x}) \sim \Set{D}} \left[ \ell \left(\Mat{C}_{\Pi}(\Mat{x}; \Mat{\theta}), \widehat{\Mat{C}}_{\Pi}(\Mat{x})\right) \right],
\end{align}
where $\Set{D}$ denotes a sampling distribution of all collected pixels (and corresponding camera poses), and $\widehat{\Mat{C}}_{\Pi}(\Mat{x})$ is the corresponding ground-truth pixel color. 
In particular, 3DGS adopts $\ell_1$ distance as the loss function $\ell(\cdot, \cdot)$\footnote{Without loss of generality, we ignore the SSIM term for simplicity.}.
We note that this photometric loss can be end-to-end viewed as a functional, which maps a set of basis functions $\{\sigma_{\Pi}(\Mat{x}; \Mat{\theta}^{(i)})\}$ to a loss value according to the specified parameters $(\Pi, \Mat{x})$.

\paragraph{Adaptive Density Control.}
In conjunction with the standard gradient-based method for optimizing Eq. \ref{eqn:loss}, 3DGS incorporates an \textit{Adaptive Density Control (ADC)} procedure, which dynamically prunes invisible points and introduces new primitives using geometric heuristics to more effectively cover the scene.
We summarize this densification process here as it is of this paper's main interest.
\textit{(i)} ADC first identifies a set of points with a large expected gradient norm on the view space: $\Set{I} = \{i \in [n]: \mean [\lVert\nabla_{\Pi(\Mat{p}^{(i)})}\cL\rVert_2] \ge \epsilon_{adc}\}$. 
\textit{(ii)} For points in $\Set{I}$ with small scales $\Set{I}_{clone} = \{i \in \Set{I}: \lVert\Mat{\Sigma}^{(i)}\rVert_2 \le \tau_{adc}\}$, ADC considers them as \textit{under-reconstruction} cases. ADC yields \textit{two} off-spring primitives for each point in $\Set{I}_{clone}$ by duplicating their attributes and adjusting the positions of new offspring along the gradient direction. 
\textit{(iii)} For those who have large size $\Set{I}_{split} = \{i \in \Set{I}: \lVert\Mat{\Sigma}^{(i)}\rVert_2 \ge \tau_{adc}\}$, ADC regards them as \textit{over-reconstruction} cases. Similarly, 3DGS generates \textit{two} new off-spring points for each point in $\Set{I}_{split}$, while placing them at a random location drawn from the parent density function, and downsizing the scales by a factor of 0.8. 

\section{Methodology}

Despite the empirical success of the conventional ADC in 3DGS, it often produces redundant points given its heuristic and somewhat artisanal splitting criteria.
In this section, we establish a theoretical framework to investigate densification mechanism.
By this means, we affirmatively answer three key questions:
\textit{(i)} What is the necessary condition for a Gaussian primitive to be split? (Sec. \ref{sec:when_split})
\textit{(ii)} Where should new Gaussian off-springs be placed? (Sec. \ref{sec:optim_split})
and \textit{(iii)} How to adjust opacity for new Gaussian points? (Sec. \ref{sec:optim_split})
Combining these results, we propose a new density control strategy with hardware-efficient implementation (Sec. \ref{sec:impl}).

\subsection{Problem Setup}
\label{sec:setup}

Suppose the scene has been represented by $n$ Gaussian primitives with parameters collectively as $\Mat{\theta} = \{ \Mat{\theta}^{(i)} \}_{i=1}^{n}$ and the total loss is $\cL(\Mat{\theta})$ as defined in Eq. \ref{eqn:loss}.
Our goal is to split each Gaussian into $m_i \in \nat_{+}$ off-springs.
We denote the parameters of the $i$-th Gaussians' off-springs as $\Mat{\vartheta}^{(i)} = \{\Mat{\vartheta}^{(i)}_j\}_{j=1}^{m_i}$, where $\Mat{\vartheta}^{(i)}_j$ is the $j$-th off-spring.
Further on, we assign each Gaussian a group of coefficients $\Mat{w}^{(i)} = \{ w^{(i)}_j \in \real_{+} \}_{j=1}^{m_i}$ to reweigh their opacity.
Since the scene are approximated via summation, the process of splitting can be equivalently viewed as replacing each extant Gaussian locally with a combination of its off-springs: $\sum_{j=1}^{m_i} w^{(i)}_j \sigma(\cdot; \Mat{\vartheta}^{(i)}_j)$.
The 2D projections of each old Gaussian then become $\sum_{j=1}^{m_i} w^{(i)}_j \sigma_{\Pi}(\cdot; \Mat{\vartheta}^{(i)}_j)$ for every camera pose $\Pi$.
Note that $m_i = 1$ implies no splitting happens and the original Gaussian remains unaltered.

We collect parameters of all the new Gaussians as $\Mat{\vartheta} = \{\Mat{\vartheta}^{(i)}\}_{i=1}^{n}$, and reweighting coefficients as $\Mat{w} = \{ \Mat{w}^{(i)} \}_{i=1}^{n}$, for shorthand.
A densification algorithm determines values for $\{m_i\}_{i=1}^n$, $\Mat{w}$, and $\Mat{\vartheta}$ at each step.
After specifying values of $\Mat{w}$, these coefficients will be absorbed into the opacity values of the off-springs. 

The original ADC can be interpreted through the lens of this framework:
\textit{(i)} $m_i = 2$ if $\mean [\lVert\nabla_{\Pi(\Mat{p}^{(i)})}\cL\rVert_2] \ge \epsilon_{adc}$, or $m_i = 1$ otherwise.
\textit{(ii)} If $i \in \Set{I}_{clone}$, then $\Mat{p}^{(i)}_j - \Mat{p}^{(i)} \propto \nabla_{\Mat{p}^{(i)}}\cL$, otherwise $\Mat{p}^{(i)}_j \sim \gauss(\Mat{p}^{(i)}, \Mat{\Sigma}^{(i)})$, $\Mat{\Sigma}^{(i)}_j = 0.64 \Mat{\Sigma}^{(i)}$, for every $j \in [m_i]$.
\textit{(iii)} $w^{(i)}_j = 1, \forall j \in [m_i]$ for either case.

\subsection{When is Densification Helpful?}
\label{sec:when_split}

\begin{figure}
\centering
\includegraphics[trim=0 2 65 5,clip,width=0.9\linewidth]{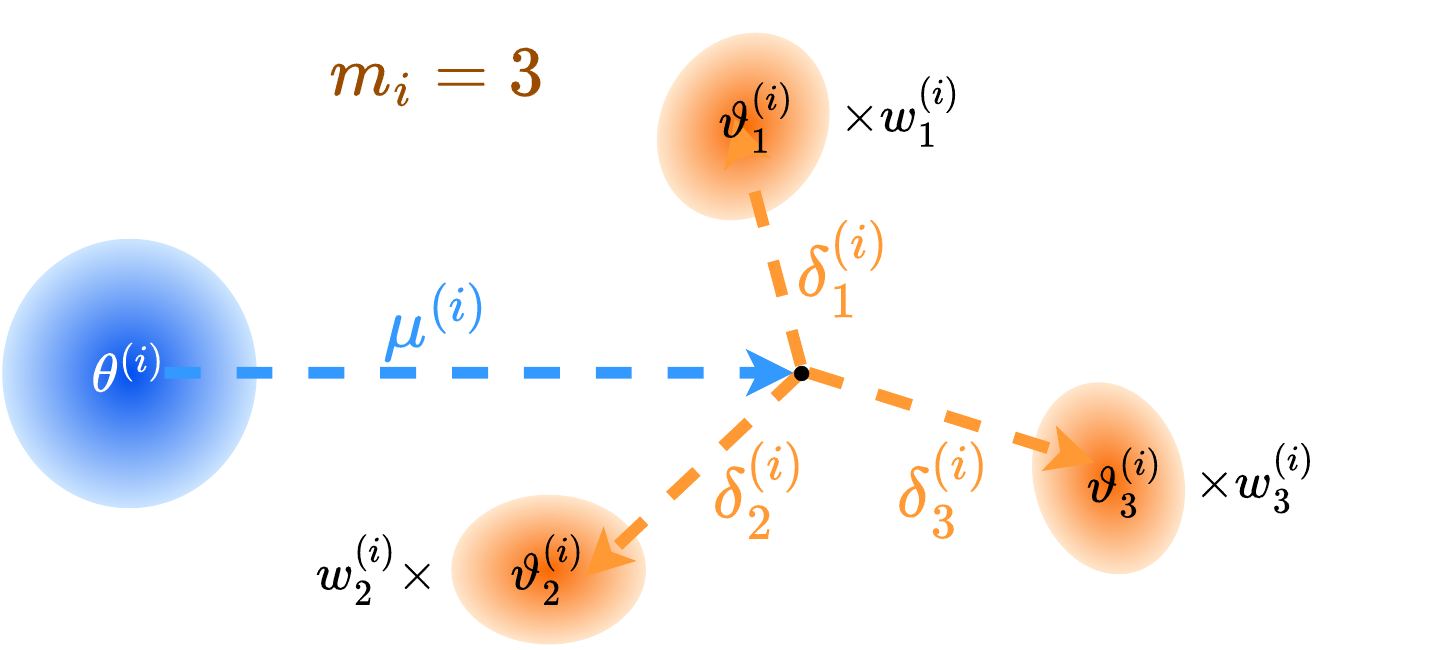}
\caption{\textbf{Illustrative notation} for the splitting process. The updates $\Mat{\vartheta}^{(i)}_j - \Mat{\theta}^{(i)}$ can be decomposed as first taking a mean-field shift $\Mat{\mu}^{(i)}$ and then applying individual updates $\Mat{\delta}^{(i)}_j$. By this decomposition, $\sum w^{(i)}_j \Mat{\delta}^{(i)}_j = \Mat{0}$.} 
\label{fig:notation}
\vspace{-1em}
\end{figure}

In this section, we theoretically elucidate the effect of the density control algorithms by examining the photometric loss after splitting.
With newly added Gaussian primitives, the loss can be evaluated as:
\begin{align} \label{eqn:aug_loss}
\cL(\Mat{\vartheta}, \Mat{w}) = \mean_{(\Pi, \Mat{x}) \sim \Set{D}} \left[ \ell \left(\Mat{C}_{\Pi}(\Mat{x}; \Mat{\vartheta}, \Mat{w}), \widehat{\Mat{C}}_{\Pi}(\Mat{x})\right) \right],
\end{align}
where $\Mat{C}_{\Pi}(\Mat{x}; \Mat{\vartheta}, \Mat{w})$ denotes the color rendered for pixel $\Mat{x}$ via Eq. \ref{eqn:alpha_blend} with new parameters $\Mat{\vartheta}$ and $\Mat{w}$.
The original ADC \cite{kerbl20233d} has no guarantee that the loss will decrease after splitting.
As we show later, densifying a random point can even increase the loss.
\citet{bulo2024revising} and \citet{kheradmand20243d} adjust opacity values for Gaussian off-springs to preserve total densities locally after splitting.
In this setup, the loss $\cL(\Mat{\vartheta}, \Mat{w})$ remains approximately equivalent to $\cL(\Mat{\theta})$.
However, as we will demonstrate next, this preservation principle is not necessarily the most effective approach.
By choosing appropriate Gaussian primitives, it is even possible to further reduce the loss.

From this point forward, we consider two additional practical conditions.
First, to ensure the total opacity is conservative after splitting, we impose the constraint that $\sum_{j=1}^{m_i} w^{(i)}_j = 1$ for every $i \in [n]$.
Second, we assume the parameters of off-springs are close to the original parameters, \ie $\{\Mat{\vartheta}_j^{(i)}\}_{j=1}^{m_i}$ are within a neighborhood of $\Mat{\theta}^{(i)}$: $\lVert  \Mat{\vartheta}_j^{(i)} - \Mat{\theta}^{(i)} \rVert_2 \le \epsilon$ for some $\epsilon > 0$.
Let us define $\Mat{\mu}^{(i)} = \sum_{j=1}^{m_i} w^{(i)}_j \Mat{\vartheta}^{(i)}_j - \Mat{\theta}^{(i)}$ as the average displacement of the $i$-th Gaussian after densification, and $\Mat{\delta}^{(i)}_j = (\Mat{\vartheta}^{(i)}_j - \Mat{\theta}^{(i)}) - \Mat{\mu}^{(i)}$ denotes an offset additional to $\Mat{\mu}^{(i)}$ for each off-spring.
We visualize the splitting process in Fig. \ref{fig:notation} for a better illustration of our notations.
The splitting process consists of two steps: first translating the parent Gaussian by an offset $\Mat{\mu}^{(i)}$, and second, generating offspring with individual shifts $\{\Mat{\delta}^{(i)}_j\}_{j \in [m_i]}$ whose mean is zero.

With all these settings, below we present our first main result, which decomposes the loss after splitting:
\begin{theorem} \label{thm:sec_ord_approx} 
Assume $\cL(\Mat{\vartheta}, \Mat{w})$ has bounded third-order derivatives with respect to $\Mat{\vartheta}$, then
\begin{align*}
\cL(\Mat{\vartheta}, \Mat{w}) &= \highlighteqn{azure!10}{\cL(\Mat{\theta}) + \nabla_{\Mat{\theta}} \cL(\Mat{\theta})^\top \Mat{\mu} + \frac{1}{2} \Mat{\mu}^\top \nabla_{\Mat{\theta}}^2 \cL(\Mat{\theta}) \Mat{\mu}} \\ 
&+ \highlighteqn{orange!10}{\frac{1}{2} \sum_{i=1}^{n} \sum_{j=1}^{m_i} w^{(i)}_j \Mat{\delta}^{(i)\top}_j \Mat{S}^{(i)}(\Mat{\theta}) \Mat{\delta}^{(i)}_j} + \bigO(\epsilon^3),
\end{align*}
where $\Mat{\mu} = \begin{bmatrix} \Mat{\mu}^{(1)\top} & \cdots & \Mat{\mu}^{(n)\top} \end{bmatrix}^{\top}$ concatenates all average offsets and $\Mat{S}^{(i)}(\Mat{\theta}) \in \real^{\dim\Theta \times \dim\Theta}$ is defined as:
\begin{align}
\Mat{S}^{(i)}(\Mat{\theta}) = \mean_{(\Pi, \Mat{x}) \sim \Set{D}} \left[ \frac{\partial \ell}{\partial \sigma_{\Pi}(\Mat{x}; \Mat{\theta}^{(i)})} \nabla^2_{\Mat{\theta}^{(i)}} \sigma_{\Pi}(\Mat{x}; \Mat{\theta}^{(i)}) \right] \nonumber.
\end{align}
\end{theorem}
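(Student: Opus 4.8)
The plan is to treat the photometric loss as a functional of the per-Gaussian ``activations'', as anticipated in Sec.~\ref{sec:prelim}, and to Taylor-expand it twice: once \emph{inside} each activation (in the offspring displacements) and once \emph{outside} (in the resulting perturbations of the activations). Write $\sigma^{(i)}_{\Pi}$ for $\sigma_{\Pi}(\Mat{x};\Mat{\theta}^{(i)})$ and $\Mat{\vartheta}^{(i)}_j-\Mat{\theta}^{(i)}=\Mat{\mu}^{(i)}+\Mat{\delta}^{(i)}_j$; by the definitions of $\Mat{\mu}^{(i)},\Mat{\delta}^{(i)}_j$ (Fig.~\ref{fig:notation}) together with the imposed opacity constraint we have $\sum_j w^{(i)}_j=1$ and $\sum_j w^{(i)}_j\Mat{\delta}^{(i)}_j=\Mat{0}$. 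The first step makes the functional structure explicit: the rendered color $\Mat{C}_{\Pi}(\Mat{x};\cdot)$, and hence $\ell$, depends on the offspring of the $i$-th Gaussian only through the single aggregate $\phi^{(i)}_{\Pi}:=\sum_{j=1}^{m_i}w^{(i)}_j\sigma_{\Pi}(\Mat{x};\Mat{\vartheta}^{(i)}_j)$, which reduces to $\sigma^{(i)}_{\Pi}$ before splitting. Viewing $\cL$ as a smooth function of the scalars $\{\phi^{(i)}_{\Pi}\}_i$ and composing with $\Mat{\theta}^{(i)}\mapsto\sigma^{(i)}_{\Pi}$, the chain rule gives $\nabla_{\Mat{\theta}^{(i)}}\cL(\Mat{\theta})=\mean[\,(\partial\ell/\partial\sigma^{(i)}_{\Pi})\,\nabla_{\Mat{\theta}^{(i)}}\sigma^{(i)}_{\Pi}\,]$ and the decomposition
\begin{align*}
\nabla^2_{\Mat{\theta}}\cL(\Mat{\theta}) = \Mat{H}(\Mat{\theta}) + \diag\big(\Mat{S}^{(1)}(\Mat{\theta}),\ldots,\Mat{S}^{(n)}(\Mat{\theta})\big),
\end{align*}
where the diagonal block $\Mat{S}^{(i)}$ is exactly the matrix in the statement --- the ``first-order residual $\times$ curvature-of-activation'' part of the chain rule --- and $\Mat{H}$ collects the remaining (Gauss--Newton-type) second-order terms, with block $(i,i')$ equal to $\mean[\,(\partial^2\ell/(\partial\sigma^{(i)}_{\Pi}\,\partial\sigma^{(i')}_{\Pi}))\,\nabla_{\Mat{\theta}^{(i)}}\sigma^{(i)}_{\Pi}\,\nabla_{\Mat{\theta}^{(i')}}\sigma^{(i')\top}_{\Pi}\,]$. (This uses the setup of Sec.~\ref{sec:setup}, in which the offspring cluster occupies a single slot of the $\alpha$-blending; the color coordinates, on which $\sigma_{\Pi}$ does not depend, contribute no entries to $\Mat{S}^{(i)}$.)

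The second step is a second-order Taylor expansion of $\sigma_{\Pi}(\Mat{x};\cdot)$ about $\Mat{\theta}^{(i)}$, summed against the weights $w^{(i)}_j$; using $\sum_j w^{(i)}_j=1$ and $\sum_j w^{(i)}_j\Mat{\delta}^{(i)}_j=\Mat{0}$ this collapses to
\begin{align*}
\phi^{(i)}_{\Pi} - \sigma^{(i)}_{\Pi}
&= \nabla_{\Mat{\theta}^{(i)}}\sigma^{(i)\top}_{\Pi}\Mat{\mu}^{(i)} + \tfrac{1}{2}\Mat{\mu}^{(i)\top}\nabla^2_{\Mat{\theta}^{(i)}}\sigma^{(i)}_{\Pi}\Mat{\mu}^{(i)} \\
&\quad + \tfrac{1}{2}\sum_{j} w^{(i)}_j\, \Mat{\delta}^{(i)\top}_j \nabla^2_{\Mat{\theta}^{(i)}}\sigma^{(i)}_{\Pi}\, \Mat{\delta}^{(i)}_j + \bigO(\epsilon^3),
\end{align*}
the key point being that the term linear in $\Mat{\delta}^{(i)}_j$ and the $\Mat{\mu}^{(i)}$--$\Mat{\delta}^{(i)}_j$ cross term both cancel after weighting by $w^{(i)}_j$, so all remaining $\Mat{\delta}$-dependence is $\bigO(\epsilon^2)$ and block-separated across $i$.

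The third step substitutes these activation perturbations into a second-order Taylor expansion of $\cL$ in $\{\phi^{(i)}_{\Pi}\}$ and sorts by powers of $\epsilon$. The $\bigO(1)$ term is $\cL(\Mat{\theta})$; the $\bigO(\epsilon)$ term is $\sum_i\mean[\,(\partial\ell/\partial\sigma^{(i)}_{\Pi})\,\nabla_{\Mat{\theta}^{(i)}}\sigma^{(i)\top}_{\Pi}\Mat{\mu}^{(i)}\,]=\nabla_{\Mat{\theta}}\cL(\Mat{\theta})^{\top}\Mat{\mu}$ by the first chain-rule identity; and the $\bigO(\epsilon^2)$ term has exactly two sources: (a) the first-order loss term paired with the quadratic part of the activation perturbation, producing $\tfrac{1}{2}\sum_i\Mat{\mu}^{(i)\top}\Mat{S}^{(i)}(\Mat{\theta})\Mat{\mu}^{(i)}+\tfrac{1}{2}\sum_i\sum_j w^{(i)}_j\Mat{\delta}^{(i)\top}_j\Mat{S}^{(i)}(\Mat{\theta})\Mat{\delta}^{(i)}_j$ (the second summand is precisely the $\Mat{\delta}$ term of the statement), and (b) the second-order loss term paired with the leading linear parts $\nabla_{\Mat{\theta}^{(i)}}\sigma^{(i)\top}_{\Pi}\Mat{\mu}^{(i)}$, producing $\tfrac{1}{2}\Mat{\mu}^{\top}\Mat{H}(\Mat{\theta})\Mat{\mu}$. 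Adding (b) to the $\Mat{\mu}$-part of (a) and invoking the decomposition of Step~1 recovers $\tfrac{1}{2}\Mat{\mu}^{\top}\nabla^2_{\Mat{\theta}}\cL(\Mat{\theta})\Mat{\mu}$, while every discarded contribution --- third-order loss terms on the leading parts, any loss term hitting the $\bigO(\epsilon^3)$ activation remainder, and the second-order loss term pairing one leading with one quadratic factor --- is $\bigO(\epsilon^3)$ under the bounded-third-derivative hypothesis (which also bounds the relevant third derivatives of $\sigma_{\Pi}$ and $\ell$). This yields exactly the asserted identity.

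The only place I expect to need genuine care rather than bookkeeping is Step~1: making the ``loss as a functional of activations'' viewpoint rigorous --- in particular establishing $\nabla^2_{\Mat{\theta}}\cL=\Mat{H}+\diag(\{\Mat{S}^{(i)}\})$ so that the cross-Gaussian curvature lives entirely in $\Mat{H}$ and therefore multiplies only $\Mat{\mu}$, never $\Mat{\delta}$ (this is what forces the last term of the statement to be block-diagonal in $i$ and weighted by $w^{(i)}_j$), and being careful that the offspring share the parent's color, equivalently that the color coordinates are held fixed in the split, so that the rendered color genuinely depends on $\Mat{\vartheta}^{(i)}$ only through $\phi^{(i)}_{\Pi}$. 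Once that accounting is pinned down, Steps~2--3 are a mechanical matching of Taylor coefficients driven by the two cancellations noted above.
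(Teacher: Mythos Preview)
Your proposal is correct and rests on the same two ingredients as the paper --- the chain-rule decomposition $\nabla^2_{\Mat{\theta}}\cL = \Mat{H} + \diag(\Mat{S}^{(1)},\ldots,\Mat{S}^{(n)})$ and the cancellation $\sum_j w^{(i)}_j\Mat{\delta}^{(i)}_j=\Mat{0}$ --- but the bookkeeping is organized differently. The paper factors through auxiliary \emph{single-split} losses $\cL(\Mat{\theta}^{(\setminus i)},\Mat{\vartheta}^{(i)},\Mat{w}^{(i)})$: a reduction lemma (Lemma~\ref{lem:split_one_vs_split_all}) shows that the full-split loss minus the sum of single-split losses is exactly the cross-Gaussian quadratic $\tfrac{\epsilon^2}{2}\sum_{i\ne i'}\Mat{\mu}^{(i)\top}\partial^2_{\Mat{\theta}^{(i)}\Mat{\theta}^{(i')}}\cL\,\Mat{\mu}^{(i')}$, and a separate lemma (Lemma~\ref{lem:loss_decomp_split_one}) Taylor-expands each single-split loss directly in $\Mat{\vartheta}^{(i)}$ using path derivatives $d/d\epsilon$. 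Your two-level expansion (inner: activations $\phi^{(i)}_{\Pi}$ in displacements; outer: $\ell$ in activations) is more compact and makes immediately visible \emph{why} the $\Mat{\delta}$ term is block-diagonal in $i$: the cross term $\partial^2\ell/(\partial\sigma^{(i)}\partial\sigma^{(i')})$ only ever meets the leading $\bigO(\epsilon)$ pieces of $\Delta\phi$, which carry no $\Mat{\delta}$. One small technical point: your remainder control tacitly uses bounded third derivatives of $\sigma_{\Pi}$ and $\ell$ separately, whereas the stated hypothesis is on the composition $\cL(\Mat{\vartheta},\Mat{w})$; the paper's route, which Taylor-expands the composition and invokes the chain rule only to \emph{identify} the coefficients (Lemmas~\ref{lem:derivatives_orig}--\ref{lem:derivatives_split_one}), stays strictly within the stated assumption.
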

\noindent All proofs are provided in the supplementary material, Appendix~\ref{sec:proofs}.
Theorem~\ref{thm:sec_ord_approx} decouples and groups the effects of the mean displacement and the individual offsets of each offspring.
The first three terms, highlighted in \colorbox{azure!10}{blue}, are referred to as the \textit{mean shift terms}, which collectively represent the impact of shifting the overall mean.
Since shifting all off-springs simultaneously is equivalent to applying the same offset to the original Gaussian, the term involving individual offsets, highlighted in \colorbox{orange!10}{orange}, fully captures the intrinsic effect of the splitting process.
We isolate each term within this summation and refer to it as a \textit{splitting characteristic function}, which fully describes the effect of splitting on a single Gaussian point:
\begin{align}
\label{eqn:Delta}
\Delta^{(i)}(\Mat{\delta}^{(i)}, \Mat{w}^{(i)}; \Mat{\theta}) \triangleq \frac{1}{2} \sum_{j=1}^{m_i} w^{(i)}_j \Mat{\delta}^{(i)\top}_j \Mat{S}^{(i)}(\Mat{\theta}) \Mat{\delta}^{(i)}_j.
\end{align}
Essentially, the splitting characteristic function takes a quadratic form with respect to the matrix $\Mat{S}^{(i)}(\Mat{\theta})$.
We refer to $\Mat{S}^{(i)}(\Mat{\theta})$ as the \textit{splitting matrix}, which fully governs the behavior of the splitting characteristic function.
Theorem~\ref{thm:sec_ord_approx} draws two insights as below:

\vspace{-0.5em}
\paragraph{Densification escapes saddle points.}
By using the RHS of Theorem~\ref{thm:sec_ord_approx} as a surrogate for minimizing the loss, one can observe that optimizing the mean shift terms does not require densification but can be achieved by standard gradient descent \footnote{More precisely, this aligns with the Newton method, where gradients are preconditioned by the inverse Hessian.}.
However, it is worth noting that the loss function in Eq. \ref{eqn:loss} is a highly non-convex objective, exhibiting numerous superfluous saddle points in its loss landscape (see the right of Fig. \ref{fig:teaser} for an illustration).
Gradients are prone to getting trapped at these saddle points, at which point gradient descent ceases to yield further improvements.
However, by densifying Gaussian points at saddle points into multiple particles, a new term -- captured by the splitting characteristic functions $\Delta^{(i)}$ -- emerges.
These terms can become negative to further reduce the loss.
\textit{Thus, densification serves as an effective mechanism for escaping saddle points.}

\vspace{-0.5em}
\paragraph{When does splitting decrease loss?}
In fact, a finer-grained analysis can be done with the splitting characteristic function $\Delta^{(i)}$.
Since the overall contribution of splitting to the loss is expressed as a sum of quadratic functions over the splitting matrices, $\Delta^{(i)}$ can be negative to decrease the loss only if the \textit{associated splitting matrix is not positive semi-definite}.
This implies that a necessary condition for performing a split is $\lambda_{min}(\Mat{S}^{(i)}(\Mat{\theta})) < 0$, where $\lambda_{min}(\cdot)$ denotes the smallest eigenvalue of the specified matrix.

\subsection{Optimal Density Control}
\label{sec:optim_split}

\begin{figure}[tb]
    \centering
    \includegraphics[trim= 20 10 3 5,clip,width=\linewidth]{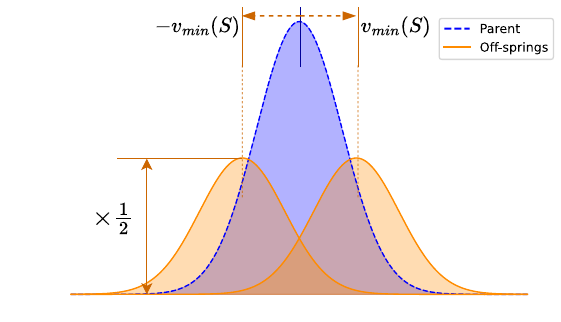}
    \vspace{-2em}
    \caption{\textbf{Illustration of Steepest Density Control}. SDC, as the optimal solution to Eq. \ref{eqn:optim_goal}, takes the steepest descent on the loss after splitting. Geometrically, it moves two off-spring Gaussians to opposite directions along the smallest eigenvector of the splitting matrix and shrinks the opacity of each Gaussian by 0.5.}
    \label{fig:optim_split}
    \vspace{-1em}
\end{figure}

Based on Theorem \ref{thm:sec_ord_approx}, we can derive even stronger results.
We intend to find a density control strategy that introduces a minimal number of points while achieving the steepest descent in the loss.
Maximizing loss descent at each step enforces low reconstruction errors and accelerates convergence.
To this end, we formulate the following constrained optimization objective:
\begin{align}
\label{eqn:optim_goal}
& \min \cL(\Mat{\vartheta}, \Mat{w}),
\text{s.t.} \left\lVert \Mat{\vartheta}^{(i)}_{j} - \Mat{\theta}^{(i)} \right\rVert_2 \le \epsilon, \sum_{j=1}^{m_i} w^{(i)}_{j} = 1,
\end{align}
which seeks an optimal configuration of the number of off-springs $m_i \in \nat_{+}$, reweighting coefficients $w^{(i)}_{j} \in \real_{+}$, and updates to the parameters for new primitives $\{\Mat{\delta}^{(i)}_{j}\}$ for all $i \in [n], j \in [m_i]$ to maximize loss descent.

\vspace{1em}
While solving Eq.~\ref{eqn:optim_goal} directly is difficult, Theorem \ref{thm:sec_ord_approx} provides an ideal second-order approximation as a surrogate.
Since the effect of splitting Gaussians is fully characterized by the splitting characteristic function, it is sufficient to consider minimizing $\Delta^{(i)}$ with respect to $\{\Mat{\delta}^{(i)}_j\}$. 
Consequently, the solution is simple and analytical.
Our main result is presented below:

\begin{theorem} \label{thm:optim_split}
The optimal solution to Eq. \ref{eqn:optim_goal} has two folds:
\begin{enumerate}
\item If splitting matrix $\Mat{S}^{(i)}(\Mat{\theta})$ is positive semi-definite $\lambda_{min}(\Mat{S}^{(i)}(\Mat{\theta})) \ge 0$, then splitting cannot decrease the loss.
In this case, we set $m_i = 1$ and no splitting happens.
\item Otherwise, if $\lambda_{min}(\Mat{S}^{(i)}(\Mat{\theta})) < 0$, then the following splitting strategy  minimizes $\Delta^{(i)}(\Mat{\delta}^{(i)}, \Mat{w}^{(i)}; \Mat{\theta})$ subject to $\lVert\Mat{\delta}^{(i)}_j \rVert \le 1$ for every $j \in [m_i]$:
\begin{align*}
\begin{array}{cc}
m^*_i = 2, & w^{(i)*}_1 = w^{(i)*}_2 = \frac{1}{2}, \\
\\
\Mat{\delta}^{(i)*}_1 = \Mat{v}_{min}(\Mat{S}^{(i)}(\Mat{\theta})), & \Mat{\delta}^{(i)*}_2 = -\Mat{v}_{min}(\Mat{S}^{(i)}(\Mat{\theta})),
\end{array}
\end{align*}
where $\Mat{v}_{min}(\cdot)$ denotes the eigenvector associated with the smallest eigenvalue $\lambda_{min}(\cdot)$.
\end{enumerate}
\end{theorem}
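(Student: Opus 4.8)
The plan is to use Theorem~\ref{thm:sec_ord_approx} to reduce the constrained problem in Eq.~\ref{eqn:optim_goal} to a point-wise quadratic minimization, and then solve that quadratic exactly. First I would expand $\cL(\Mat{\vartheta},\Mat{w})$ via Theorem~\ref{thm:sec_ord_approx} into the mean-shift terms (which depend only on $\Mat{\mu}$), the sum $\sum_{i=1}^{n}\Delta^{(i)}(\Mat{\delta}^{(i)},\Mat{w}^{(i)};\Mat{\theta})$, and an $\bigO(\epsilon^3)$ remainder. The mean-shift terms can be driven down by an ordinary (Hessian-preconditioned) gradient step and involve no splitting, so the choice of $\{m_i\}$, $\Mat{w}$ and $\{\Mat{\delta}^{(i)}_j\}$ only affects the loss through $\sum_i\Delta^{(i)}$; moreover the $\Delta^{(i)}$ are decoupled across $i$. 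Using $\Mat{\vartheta}^{(i)}_j-\Mat{\theta}^{(i)}=\Mat{\mu}^{(i)}+\Mat{\delta}^{(i)}_j$, fixing $\Mat{\mu}=\Mat{0}$ for the splitting-specific part and rescaling by $\epsilon$ turns the ball $\lVert\Mat{\vartheta}^{(i)}_j-\Mat{\theta}^{(i)}\rVert_2\le\epsilon$ into $\lVert\Mat{\delta}^{(i)}_j\rVert\le 1$ without altering the optimal direction; note also that $\sum_j w^{(i)}_j\Mat{\delta}^{(i)}_j=\Mat{0}$ holds automatically once $\sum_j w^{(i)}_j=1$, by the definition of $\Mat{\delta}^{(i)}_j$. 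This is exactly the problem stated in the theorem.

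\textbf{Part 1.} Suppose $\Mat{S}^{(i)}(\Mat{\theta})\succeq\Mat{0}$. Then $\Mat{\delta}^{(i)\top}_j\Mat{S}^{(i)}(\Mat{\theta})\Mat{\delta}^{(i)}_j\ge 0$ for all $j$, and since $w^{(i)}_j\ge 0$ we get $\Delta^{(i)}\ge 0$. But $m_i=1$ forces $\Mat{\delta}^{(i)}_1=\Mat{0}$ and hence $\Delta^{(i)}=0$, so no split can strictly decrease $\Delta^{(i)}$ (equivalently the loss, up to $\bigO(\epsilon^3)$), and $m_i=1$ is optimal.

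\textbf{Part 2.} Suppose $\lambda_{min}(\Mat{S}^{(i)}(\Mat{\theta}))<0$. I would first prove the lower bound: for any feasible $\Mat{\delta}^{(i)}_j$ with $\lVert\Mat{\delta}^{(i)}_j\rVert\le 1$ we have $\Mat{\delta}^{(i)\top}_j\Mat{S}^{(i)}(\Mat{\theta})\Mat{\delta}^{(i)}_j\ge\lambda_{min}(\Mat{S}^{(i)}(\Mat{\theta}))\lVert\Mat{\delta}^{(i)}_j\rVert^2\ge\lambda_{min}(\Mat{S}^{(i)}(\Mat{\theta}))$, where the last step uses $\lambda_{min}<0$; summing against the weights gives $\Delta^{(i)}\ge\tfrac{1}{2}\lambda_{min}(\Mat{S}^{(i)}(\Mat{\theta}))\sum_j w^{(i)}_j=\tfrac{1}{2}\lambda_{min}(\Mat{S}^{(i)}(\Mat{\theta}))$. (This bound uses neither the zero-mean constraint nor $m_i=2$, so it holds over a superset of the feasible region.) Then I would verify that the claimed configuration $m_i=2$, $w^{(i)}_1=w^{(i)}_2=\tfrac{1}{2}$, $\Mat{\delta}^{(i)}_1=\Mat{v}_{min}(\Mat{S}^{(i)}(\Mat{\theta}))$, $\Mat{\delta}^{(i)}_2=-\Mat{v}_{min}(\Mat{S}^{(i)}(\Mat{\theta}))$ is feasible ($\lVert\pm\Mat{v}_{min}\rVert=1$, weights sum to one, $\tfrac{1}{2}\Mat{v}_{min}-\tfrac{1}{2}\Mat{v}_{min}=\Mat{0}$) and attains $\Delta^{(i)}=\tfrac{1}{2}\big(\tfrac{1}{2}\lambda_{min}(\Mat{S}^{(i)}(\Mat{\theta}))+\tfrac{1}{2}\lambda_{min}(\Mat{S}^{(i)}(\Mat{\theta}))\big)=\tfrac{1}{2}\lambda_{min}(\Mat{S}^{(i)}(\Mat{\theta}))$, matching the lower bound; hence it is a global minimizer. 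Finally, since $m_i=1$ gives $\Delta^{(i)}=0>\tfrac{1}{2}\lambda_{min}(\Mat{S}^{(i)}(\Mat{\theta}))$, a single offspring cannot attain the optimum, so $m_i=2$ is the minimal number of offspring realizing the steepest descent.

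The algebra here is elementary; the step I expect to require the most care is the reduction in the first paragraph — namely, rigorously justifying that each $\Delta^{(i)}$ may be minimized in isolation (decoupled from the mean-shift terms and from the other Gaussians), that the $\bigO(\epsilon^3)$ remainder does not perturb the leading-order minimizer or its direction, and that rescaling to the unit-norm constraint is legitimate. I would also add a remark on non-uniqueness: whenever $\lambda_{min}$ has multiplicity greater than one, any unit eigenvector in its eigenspace works, and placing more than two offspring symmetrically along $\pm\Mat{v}_{min}$ with weights summing to one also attains the optimum — which is why the theorem is phrased in terms of the \emph{minimal sufficient} $m_i$ rather than a unique solution.
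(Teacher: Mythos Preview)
Your proposal is correct and follows essentially the same route as the paper: derive the lower bound $\Delta^{(i)}\ge\tfrac{1}{2}\lambda_{\min}(\Mat{S}^{(i)}(\Mat{\theta}))$ from the Rayleigh (variational) characterization of the smallest eigenvalue, then exhibit the two-offspring $\pm\Mat{v}_{\min}$ configuration as a feasible point attaining it. You are more careful than the paper's own proof about the reduction from Eq.~\ref{eqn:optim_goal} to the per-point problem, the feasibility of the zero-mean constraint, and the minimality of $m_i=2$, but these are elaborations of the same argument rather than a different approach.
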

\noindent We term the splitting strategy outlined in Theorem \ref{thm:optim_split} as \textit{Steepest Density Control (SDC)}.
It has four practical implications:
\begin{tcolorbox}[
    colback=orange!10,
    colframe=orange!10,
    boxrule=0pt,
    arc=2mm,
    left=0mm, right=2mm, top=0.5em, bottom=0.5em,
    enhanced,
    sharp corners
]
\begin{enumerate}[label={\roman*}),itemsep=0.25em,parsep=0.em,topsep=1em]
\item A necessary condition for densification is the \textit{positive indefiniteness} of the splitting matrices.
\item Splitting each Gaussian into \textit{two} off-springs is sufficient, and generating more off-springs offers no additional benefit.
\item The magnitudes of the new primitives must be downscaled by \textit{a factor of two} to preserve local opacity.
\item The parameters of the two off-springs should be updated along the positive/negative directions of the \textit{eigenvector corresponding to the least eigenvalue of the splitting matrices}.
\end{enumerate}
\end{tcolorbox}
Further on, the result in \textit{(i)}, as already shown in Sec. \ref{sec:when_split}, can be employed as a filter to reduce the number of points to be densified.
Implication by \textit{(ii)} justifies the common choice of ``two off-springs'' in existing ADC algorithms \cite{kerbl20233d} and ensures a moderate growth rate.
The result \textit{(iii)} contrasts with the approaches of \citet{bulo2024revising} and \citet{kheradmand20243d}, where opacities of new Gaussians are adjusted based on rendering-specific schemes.
Notably, both opacity adjustment methods in these works are inexact and do not necessarily preserve total opacity.
The point \textit{(iv)} gives two update directions that shift points to the non-saddle area when they are optimized to the stationary point, as illustrated on the right of Fig. \ref{fig:teaser}.
We further demonstrate this splitting scheme in Fig. \ref{fig:optim_split}, which visualizes how our splitting strategy locally refines the geometry.

\subsection{Steepest Gaussian Splatting}
\label{sec:impl}

In this section, we instantiate a 3DGS optimization algorithm with SDC as the density control scheme, dubbed \textbf{SteepGS}.
At the core of SteepGS is to compute the splitting matrices and leverage them to decide when and how to split points.
Algorithm \ref{alg:main} in Appendix \ref{sec:pseudocode} provides a reference implementation for SteepGS.
We show that these can be efficiently implemented and integrated into the CUDA kernel.

Revisiting the form of splitting matrix in Theorem \ref{thm:sec_ord_approx}, we note that $\partial \ell / \partial \sigma_{\Pi}(\Mat{x}; \Mat{\theta}^{(i)})$ denotes the gradient of (scalar) loss $\ell$ back-propagated to the output of the $i$-th Gaussian primitive, and $\nabla^2_{\Mat{\theta}} \sigma(\Mat{\theta}^{(i)}, \Mat{x})$ is the Hessian matrix of the $i$-th Gaussian with respect to its own parameters $\Mat{\theta}^{(i)}$.
It is noteworthy that splitting matrices are defined per point and only rely on the Hessian of each Gaussian individually.
Therefore, the total memory footprint to store the splitting matrices is $\bigO(n(\dim \Theta)^2)$.
When we only consider the mean positions $\{\Mat{p}^{(i)}\}_{i=1}^{n}$ as the parameters, $\dim\Theta = 3$.

To compute splitting matrix, we note that the gradient $\partial \ell/\partial \sigma_{\Pi}(\Mat{x}; \Mat{\theta}^{(i)})$ has already been acquired during the back-propagation when computing $\nabla_{\Mat{\theta}} \cL(\Mat{\theta}, \Mat{x})$.
The remaining part, the Hessian matrix $\nabla^2_{\Mat{\theta}} \sigma_{\Pi}(\Mat{x}; \Mat{\theta}^{(i)})$ can be approximated analytically as (see derivations in Appendix \ref{sec:hessian}):
\begin{align*}
&\nabla^2_{\Mat{\theta}} \sigma_{\Pi}(\Mat{x}; \Mat{\theta}^{(i)}) \approx \sigma^{(i)} \Mat{\Upsilon} \Mat{\Upsilon}^\top - \sigma^{(i)} \Mat{P}^\top \Pi(\Mat{\Sigma}^{(i)})^{-1}\Mat{P},
\end{align*}
where $\Mat{\Upsilon} \triangleq \Mat{P}^\top \Pi(\Mat{\Sigma}^{(i)})^{-1}(\Mat{x} - \Pi(\Mat{p}^{(i)}))$, $\Mat{P}$ denotes the projection matrix given by $\Pi$\footnotemark[\getrefnumber{fn:Pi}], and $\sigma^{(i)} \triangleq \sigma_{\Pi}(\Mat{x}; \Mat{\theta}^{(i)})$.
The $\Mat{S}^{(i)}$ can be computed in parallel, and intermediate results such as $\sigma, \Pi(\Mat{\mu}), \Pi(\Mat{\Sigma})$ can be reused from previous forward computation.
The complexity of computing the minimum eigenvalue and eigenvector for $\dim \Theta \times \dim \Theta$ matrices is $\bigO((\dim \Theta)^2)$.
As we only use the position parameter, the least eigenvalue and eigenvector for $3 \times 3$ matrices can be calculated using the root formula \cite{smith1961eigenvalues}.
Although splitting matrices leverage the second-order information, it is distinct from the full Hessian of the total loss $\nabla^2_{\Mat{\theta}} \cL$ (\eg used in \citet{hanson2024pup})
by precluding cross terms in full Hessian $\nabla^2_{\Mat{\theta}} \cL$ \cite{wu2019splitting}.
Surprisingly, the structure of the problem allows for the pointwise identification of saddle points with only partial Hessian information provided by splitting matrices.

\section{Experiments}

\begin{figure*}[tbh]
  \centering
  \includegraphics[width=0.98\textwidth]{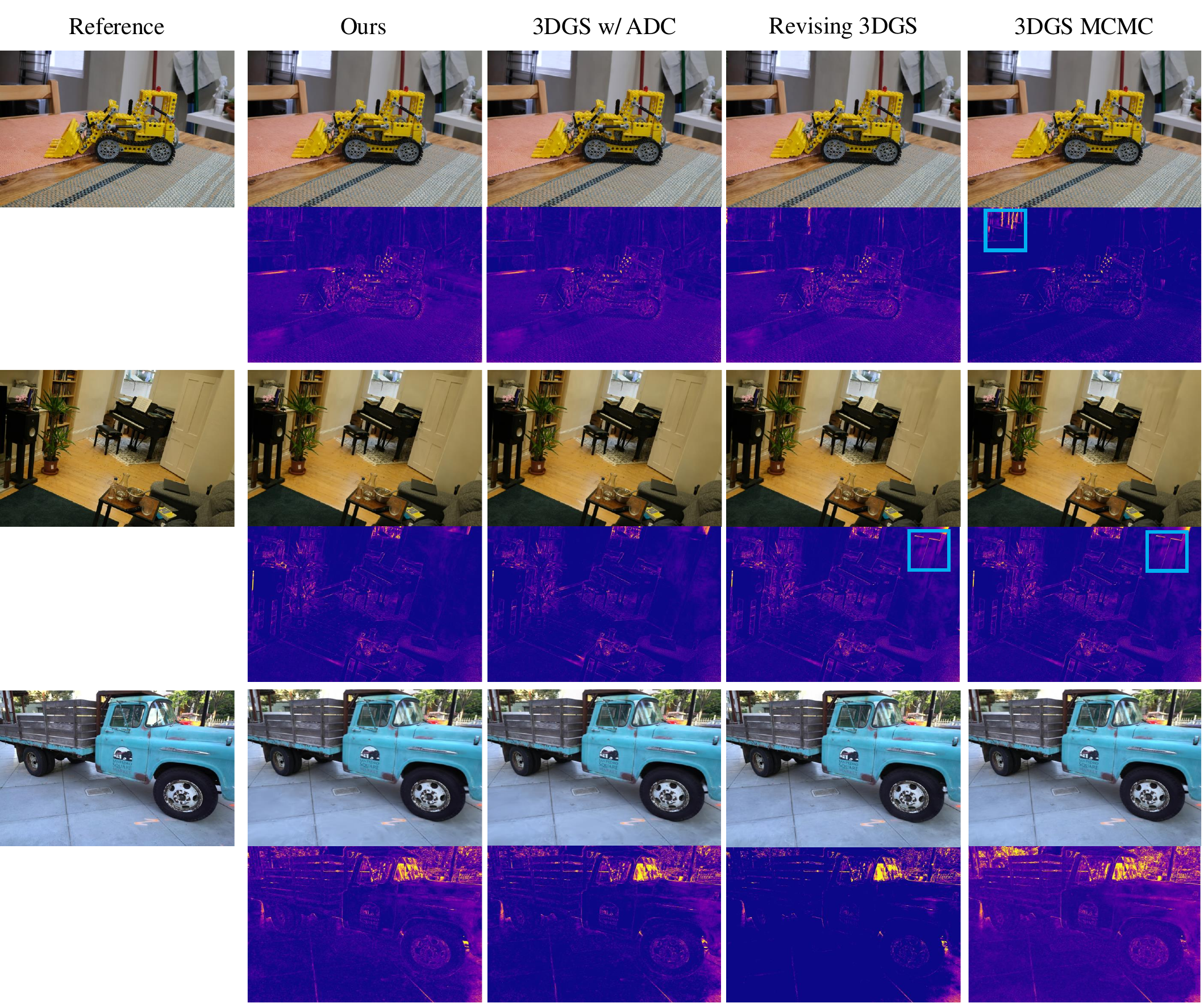}
  \caption{\textbf{Qualitative Results.} We compare our \textbf{SteepGS} with other densification baselines. For each scene, the first row shows the rendered view, while the second row visualizes the error with respect to the ground truth. Key details are highlighted in the \textcolor{azure!75}{blue} box. }
  \vspace{-1em}
  \label{fig:qualitative}
\end{figure*}
In this section, we empirically validate the effectiveness of our proposed SteepGS.

\subsection{Settings}

In our main experiments, we train 3DGS with various densification schemes and compare their final number of points and the novel view synthesis quality. 

\vspace{-1em}
\paragraph{Datasets.}
We compare our methods and other baselines on three challenging real-world datasets: Mip-NeRF 360 \cite{barron2022mip} including three outdoor and four indoor scenes, Tanks \& Temples \cite{knapitsch2017tanks} with two outdoor scenes, and Deep Blending \cite{hedman2018deep} containing two indoor scenes.

\vspace{-1.5em}
\paragraph{Baselines.}
We compare SteepGS with the original ADC and the other two densification schemes:
3DGS-MCMC \cite{kheradmand20243d} and Revising-GS \cite{bulo2024revising}.
We use the official codebases for 3DGS and 3DGS-MCMC, while re-implementing Revising-GS based on the codebase of the original 3DGS. We added a 3DGS-Thres. baseline which is modified from the original 3DGS ADC such that the training stops when meeting the same amount of points of other compared methods.
Both 3DGS-MCMC and Revising-GS require a maximum limit of the Gaussian number.
We choose this number as the number of Gaussion our SteepGS yields.
For all these baselines, we use their default hyper-parameters.

\vspace{-1em}
\paragraph{Implementation Details.}

Our SteepGS is implemented based on the codebase of the official 3DGS codebase \cite{kerbl20233d}.
We further customize the CUDA kernel to compute the splitting matrices and their eigen-decompositions. 
We follow the standard training pipeline of 3DGS and perform density control for every 100 steps starting from the 500th step.
The threshold for the smallest eigenvalues of splitting matrices is chosen as $-1e-6$. 
All other hyper-parameters are kept the same with 3DGS's default settings. Each of our per-scene training are conducted on a single NVIDIA V100 GPU.

\begin{table}[t!]
    \centering
    \renewcommand{\arraystretch}{1.5}
    \resizebox{\linewidth}{!}{
    \begin{tabular}{lcccc}
        \toprule
        & \multicolumn{4}{c}{MipNeRF360} \\
        & \# Points $\downarrow$ & PSNR $\uparrow$ & SSIM $\uparrow$ & LPIPS $\downarrow$ \\ \hline
        3DGS \cite{kerbl20233d} & 3.339 & 29.037 & 0.872 & 0.183 \\
        \hline
        3DGS + Thres. \cite{kerbl20233d} & 1.632 & 27.851 & 0.848 & 0.227 \\
        3DGS-MCMC \cite{kheradmand20243d} & 1.606 & \cellsecond{28.149} & \cellbest{0.853} & \cellbest{0.204} \\
        Revising 3DGS \cite{bulo2024revising} & 1.606 & 28.085 & \cellsecond{0.850} & \cellsecond{0.212} \\
        \hline
        SteepGS (Ours) & 1.606 & \cellbest{28.734} & 0.857 & 0.211 \\
    \end{tabular}}

    \resizebox{\linewidth}{!}{
    \begin{tabular}{lcccc}
        \hline
        & \multicolumn{4}{c}{Tank \& Temple} \\
        & \# Points $\downarrow$ & PSNR $\uparrow$ & SSIM $\uparrow$ & LPIPS $\downarrow$ \\ \hline
        3DGS \cite{kerbl20233d} & 1.830 & 23.743 & 0.848 & 0.177 \\
        \hline
        3DGS + Thres. \cite{kerbl20233d} & 0.973 & 22.415 & 0.812 & 0.218 \\
        3DGS-MCMC \cite{kheradmand20243d} & 0.957 & \cellsecond{22.545} & \cellsecond{0.817} & \cellsecond{0.204} \\
        Revising 3DGS \cite{bulo2024revising} & 0.957 & 22.339 & 0.811 & 0.216 \\
        \hline
        SteepGS (Ours) & 0.958 & \cellbest{23.684} & \cellbest{0.840} & \cellbest{0.194} \\
    \end{tabular}}

    \resizebox{\linewidth}{!}{
    \begin{tabular}{lcccc}
        \hline
        & \multicolumn{4}{c}{Deep Blending} \\
        & \# Points $\downarrow$ & PSNR $\uparrow$ & SSIM $\uparrow$ & LPIPS $\downarrow$ \\ \hline
        3DGS \cite{kerbl20233d} & 2.818 & 29.690 & 0.904 & 0.244 \\
        \hline
        3DGS + Thres. \cite{kerbl20233d} & 1.326 & 29.374 & 0.900 & 0.250 \\
        3DGS-MCMC \cite{kheradmand20243d} & 1.296 & \cellsecond{29.439} & \cellsecond{0.901} & \cellbest{0.237} \\
        Revising 3DGS \cite{bulo2024revising} & 1.296 & 29.439 & 0.895 & \cellsecond{0.245} \\
        \hline
        SteepGS (Ours) & 1.296 & \cellbest{29.963} & \cellbest{0.905} & 0.250 \\ \bottomrule
    \end{tabular}}

    \caption{Comparison with representative baselines, including 3DGS \cite{kerbl20233d}, 3DGS + Thres. \cite{kerbl20233d}, and Revising 3DGS \cite{bulo2024revising}. The unit of ``\# Points'' is million. The \colorbox{orange!35}{best} and \colorbox{orange!10}{second best} approaches for improving densification are marked in colors.}
    \label{tab:quan_res}
    \vspace{-2em}
\end{table}

\subsection{Results}
We adopt PSNR, SSIM \cite{wang2004image}, and LPIPS \cite{zhang2018perceptual} as evaluation metrics for view synthesis quality, and present the final number of yielded points and training time as the efficiency metrics.
Tab. \ref{tab:quan_res} presents our quantitative experiments.
Compared to the original adaptive density control, our method achieves comparable rendering performance on the Mip-NeRF 360 dataset while reducing the number of Gaussians by approximately $50\%$. Our method outperforms 3DGS-Thres., 3DGS-MCMC and Revising 3DGS baselines when the two methods are trained to generate the same number of Gaussians. Similarly, on the Tank \& Temple and DeepBlending datasets, our approach attains around a $50\%$ reduction in the number of Gaussians and even achieves higher PSNR values for DeepBlending scenes. Fig.~\ref{fig:qualitative} presents qualitative comparisons between our densification algorithm and baseline methods. As illustrated by the difference maps, although our method generates only half the number of new points compared to the original densification approach, it maintains competitive rendering quality and preserves many details.
These results suggest that the additional Gaussians produced by the original adaptive density control are redundant. In contrast, our method mitigates redundancy during the training process through the densification procedure. Notably, unlike compression-based methods such as LightGaussian \cite{fan2023lightgaussian}, which identify and prune less important Gaussians after the entire training process, our method achieves these savings without the need for post-training steps.

\begin{figure}[t]
  \centering
  \includegraphics[width=0.45\textwidth]{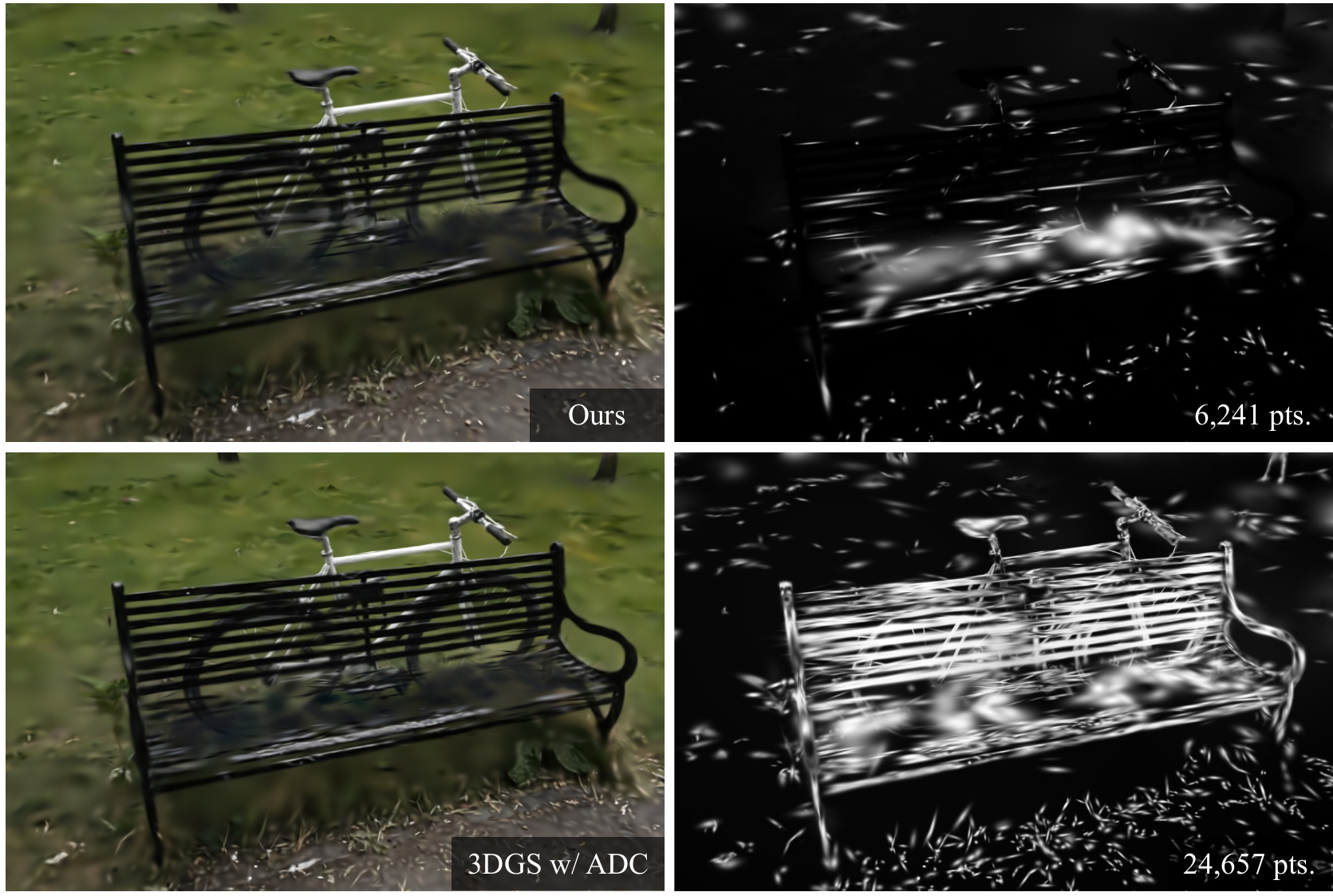}
  \caption{\textbf{Visualization of splitting points.} The rendered views are present on the left and the corresponding points to be split are visualized on the right.}
  \label{fig:vis_densify}
  \vspace{-1em}
\end{figure}

\subsection{Visualization and Interpretation}

In Fig. \ref{fig:vis_densify}, we visualize the points filtered by our strategy and the original splitting strategy at the 1,000th iteration, respectively. The rendered RGB images show that the backrest of the bench has been trained to capture its basic shape and appearance, whereas the seat still lacks clear details. As depicted in the figure, our method effectively concentrates Gaussian splitting on the seat, leaving other regions to the optimizer. In contrast, the original adaptive density control allocates splitting to the backrest, resulting in approximately four times as many splitting points as our method. This suggests that our approach enhances efficiency by focusing splitting on areas that require more detail, thereby improving overall rendering performance in a more efficient way.

\section{Conclusion}
This work addresses the inefficiencies in 3D Gaussian Splatting (3DGS), a leading technique for real-time, high-resolution novel view synthesis. While effective, its densification process often generates redundant points, leading to high memory usage, slower performance, and increased storage demands--hindering deployment on resource-constrained devices.
To tackle this, we introduced a theoretical framework that clarifies and optimizes density control in 3DGS. Our analysis highlights the necessity of splitting for escaping saddle points and establishes optimal conditions for densification, including the minimal number of offspring Gaussians and their parameter updates.
Building on these insights, we proposed \textbf{SteepGS}, which integrates \textit{steepest density control} to maintain compact point clouds. SteepGS reduces Gaussian points by 50\% without sacrificing rendering quality, improving efficiency and scalability for practical use.

\subsubsection*{Acknowledgments}
PW thanks Forrest Iandola, Zhen Wang, Jonathon Luiten, Nikolaos Sarafianos, and Amit Kumar for helpful discussion during the preparation of this work.
Any statements, opinions, findings, and conclusions or recommendations expressed in this material are those of the authors and do not necessarily reflect the views of their employers or the supporting entities. 

{
    \small
    \bibliographystyle{ieeenat_fullname}
    \bibliography{main}
}

\clearpage
\setcounter{page}{1}
\appendix
\onecolumn

{\centering
\Large
\textbf{\thetitle}\\
\vspace{0.5em}Supplementary Material \\
\vspace{1.0em}
}

\section{Implementation Details}
\label{sec:impl_details}

\subsection{Pseudocode}
\label{sec:pseudocode}

We provide a reference pseudocode for our method, SteepGS, in Algorithm~\ref{alg:main}.
We highlight the main differences from the original ADC in \colorbox{orange!10}{orange}.
The overall procedure consists of two main components.
First, the algorithm estimates the splitting matrices on the fly in a mini-batch manner.
Second, at regular intervals, the accumulated splitting matrices are used to decide whether to split a Gaussian point and where to place the resulting offspring.
Our algorithm is designed to be general and can be integrated with other point selection criteria, such as the gradient-based strategy used in the original ADC.
Finally, we note that all \texttt{for} loops in the pseudocode are executed in parallel for efficiency.

\begin{algorithm*}[h]
\caption{Steepest Gaussian Splatting (SteepGS)}
\label{alg:main}
\begin{algorithmic}
\STATE {\bf Input}: An initial point cloud of Gaussians $\Mat{\theta} = \{(\Mat{\theta}^{(i)}, o^{(i)})\}_{i=1}^{|\Mat{\theta}|}$; A loss function $\cL(\Mat{\theta})$ associated with a training set $\Set{D}(\Set{X})$; A stepsize $\epsilon > 0$; A splitting matrix threshold $\varepsilon_{split} \le 0$; Total number of iterations $T$; Densification interval $T_{split}$.
\FOR{each training step $t = 1, \cdots, T$}
\IF{$t\mod{T_{split}} \ne 0$}
\STATE Sample a batch of data points $\Mat{x} \sim \Set{D}(\Set{X})$ and compute loss function $\cL(\Mat{\theta}, \Mat{x})$. 
\FOR{each Gaussian $i = 1, \cdots, |\Mat{\theta}|$}
\STATE Update each Gaussian parameters $\Mat{\theta}^{(i)}, o^{(i)}$ via standard gradient descent.
\STATE Accumulate gradients: $\Mat{G}^{(i)} \gets \Mat{G}^{(i)} + \nabla_{\Mat{\theta}^{(i)}} \cL(\Mat{\theta}, \Mat{x})$.
\STATE \highlightalg{orange!10}{Accumulate splitting matrix: $\Mat{S}^{(i)} \gets \Mat{S}^{(i)} + \partial_{\sigma^{(i)}} \ell(\Mat{\theta}, \Mat{x}) \nabla^2 \sigma(\Mat{\theta}^{(i)}, \Mat{x})$.}
\ENDFOR
\ELSE
\FOR{each Gaussian $i = 1, \cdots, |\Mat{\theta}|$}
\STATE Obtain average gradient and splitting matrix: $\Mat{G}^{(i)} \gets \Mat{G}^{(i)} / T_{split}$, $\Mat{S}^{(i)} \gets \Mat{S}^{(i)} / T_{split}$.
\STATE \highlightalg{orange!10}{Compute the smallest eigenvalue and the associated eigenvector for the splitting matrix: \\
$\lambda \gets \lambda_{min}(\Mat{S}^{(i)})$, $\Mat{\delta} \gets \Mat{v}_{min}(\Mat{S}^{(i)})$.}
\IF{condition on $\Mat{G}^{(i)}$ and \colorbox{orange!10}{$\lambda < \varepsilon_{split}$}}
\STATE \highlightalg{orange!10}{Replace this Gaussian with two Gaussian off-springs:\\
$\Mat{\theta} \gets \Mat{\theta} \setminus \{(\Mat{\theta}^{(i)}, o^{(i)})\} \cup \{ (\Mat{\theta}^{(i)} + \epsilon \Mat{\delta}, o^{(i)}/2), (\Mat{\theta}^{(i)} - \epsilon \Mat{\delta}, o^{(i)}/2) \}$}
\ENDIF
\ENDFOR
\ENDIF
\ENDFOR
\STATE {\bf Return} $\Mat{\theta}$
\end{algorithmic}
\end{algorithm*}

\subsection{Variants}

\paragraph{Densification with Increment Budget.}
Recent densification algorithms~\cite{bulo2024revising, kheradmand20243d} have shown that fixing the number or ratio of incremental points can lead to a more compact Gaussian point cloud.
This corresponds to imposing a global constraint on the total number of new points, $\sum_{i \in [n]} m_i \le 2K$, when solving the objective in Eq.~\ref{eqn:optim_goal}:
\begin{align}
& \min \cL(\Mat{\vartheta}, \Mat{w}),
\quad \text{s.t.} \left\lVert \Mat{\vartheta}^{(i)}_{j} - \Mat{\theta}^{(i)} \right\rVert_2 \le \epsilon, \sum_{j=1}^{m_i} w^{(i)}_{j} = 1, \sum_{i \in [n]} m_i \le 2K,
\end{align}
where $K$ is the maximum number of increased points.
According to Theorem~\ref{thm:optim_split}, the maximal loss reduction achieved by splitting the $i$-th Gaussian is given by $\Delta^{(i)*} \propto \lambda_{min}(\Mat{S}^{(i)}(\Mat{\theta}))/2$.
Therefore, the optimal point selection maximizing loss descent can be done by efficiently choosing Gaussians with least-$K$ values of $\lambda_{min}(\Mat{S}^{(i)})$ once the total number of Gaussians with negative $\lambda_{min}(\Mat{S}^{(i)})$ surpasses $K$, \ie $|\{ i \in [n] : \lambda_{min}(\Mat{S}^{(i)}) < 0 \}| > K$.

\paragraph{Compactest Splitting Strategy.}
There also exists a theoretically most compact splitting strategy.
Theorem~\ref{thm:sec_ord_approx} suggests that the optimal displacement $\Mat{\mu}$ corresponds to the standard negative gradient $\nabla \cL(\Mat{\theta})$, which yields a typical $\bigO(\epsilon)$ decrease in loss at non-stationary points.
In contrast, splitting introduces a summation of splitting characteristic functions, each governed by its associated splitting matrix, resulting in a cumulative effect of order $\bigO(\epsilon^2)$.
This theoretical insight leads to an important implication: \textit{a Gaussian should be split only when its gradient is small}.
Otherwise, splitting introduces redundant Gaussians that offer little improvement in loss.
The compactest splitting condition can be formulated as below:
\begin{align*}
\lVert \Mat{G}^{(i)} \rVert \le \varepsilon_{grad} \quad \text{and} \quad \lambda_{min}(\Mat{S}^{(i)}) < \varepsilon_{split}, \quad \forall i \in [n],
\end{align*}
where $\varepsilon_{grad} > 0$ is a chosen hyper-parameter.
While this conclusion may appear to contradict the original ADC strategy, we argue that ADC actually examines the variance of the gradient by estimating $\mean[\lVert \Mat{G}^{(i)} \rVert]$, rather than the norm of its expectation, \ie $\mean[\lVert \Mat{G}^{(i)} \rVert]$.
Thus, our condition does not contradict the original approach, but rather complements it by offering a more principled criterion.

\subsection{Eigendecomposition}

In our experiments, we only take position parameters into the consideration for steepest splitting descent.
This simplifies the eigendecomposition of splitting matrices to be restricted to symmetric $3 \times 3$ matrices.
We can follow the method by \cite{smith1961eigenvalues} to compute the eigenvalues.
The characteristic equation of a symmetric $3 \times 3$ matrix $\Mat{A}$ is:
\begin{align*}
\det(\alpha \Mat{I} - \Mat{A}) = \alpha^3 - \alpha^2 \operatorname{tr}(\Mat{A})
- \alpha \frac{1}{2} \left( \trace(\Mat{A}^2) - \trace^2(\Mat{A}) \right)
- \det(\Mat{A}) = 0.
\end{align*}
An affine change to $\Mat{A}$ will simplify the expression considerably, and lead directly to a trigonometric solution. 
If $\Mat{A} = p\Mat{B} + q\Mat{I}$, then $\Mat{A}$ and $\Mat{B}$ have the same eigenvectors, and $\beta$ is an eigenvalue of $\Mat{B}$ if and only if $\alpha = p \beta + q$ is an eigenvalue of $\Mat{A}$.
Let $q = \frac{\operatorname{tr}(\Mat{A})}{3}$ and $p = \left( \trace \left( \frac{(\Mat{A} - q\Mat{I})^2}{6} \right) \right)^{1/2}$, we derive $\det(\beta \Mat{I} - \Mat{B}) = \beta^3 - 3\beta - \det(\Mat{B}) = 0$.
Substitute $\beta = 2 \cos \theta$ and some algebraic simplification using the identity $\cos 3\theta = 4 \cos^3 \theta - 3 \cos \theta$, we can obtain
$\cos 3\theta = \frac{\det(\Mat{B})}{2}$.
Thus, the roots of characteristic equation are given by:
\begin{align*}
\beta = 2 \cos \left( \frac{1}{3} \arccos\left( \frac{\det(\Mat{B})}{2} \right) + \frac{2k\pi}{3} \right), \quad k = 0, 1, 2.    
\end{align*}
When $\Mat{A}$ is real and symmetric, $\det(\Mat{B})$ is also real and no greater than $2$ in absolute value.

\begin{table}[t]
\centering
\resizebox{\linewidth}{!}{
\begin{tabular}{c|cc|cc|ccc|cccc}
\toprule
& \multicolumn{2}{c|}{Tank \& Temple} & \multicolumn{2}{c|}{Deep Blending} & \multicolumn{3}{c|}{mip-NeRF 360 Outdoor} & \multicolumn{4}{c}{mip-NeRF 360 Indoor} \\
 & Train & Truck & Dr. Johnson & Playroom & Bicycle & Garden & Stump & Bonsai & Counter & Kitchen & Room \\
\hline

3DGS & 22.091 & 25.394 & 29.209 & 30.172 & 25.253 & 27.417 & 26.705 & 32.298 & 29.006 & 31.628 & 31.540 \\
SteepGS & 21.974 & 25.395 & 29.478 & 30.447 & 24.890 & 27.159 & 26.115 & 31.911 & 28.737 & 31.030 & 31.401 \\
\bottomrule
\end{tabular}
}
\caption{Breakdown table for per-scene PNSR of 3DGS and our SteepGS.}
\label{tab:breakdown_psnr}
\end{table}

\begin{table}[t]
\centering
\resizebox{\linewidth}{!}{
\begin{tabular}{c|cc|cc|ccc|cccc}
\toprule
& \multicolumn{2}{c|}{Tank \& Temple} & \multicolumn{2}{c|}{Deep Blending} & \multicolumn{3}{c|}{mip-NeRF 360 Outdoor} & \multicolumn{4}{c}{mip-NeRF 360 Indoor} \\
 & Train & Truck & Dr. Johnson & Playroom & Bicycle & Garden & Stump & Bonsai & Counter & Kitchen & Room \\
\hline

3DGS & 0.813 & 0.882 & 0.901 & 0.907 & 0.766 & 0.867 & 0.908 & 0.773 & 0.942 & 0.928 & 0.919 \\
SteepGS & 0.802 & 0.879 & 0.902 & 0.909 & 0.734 & 0.851 & 0.742 & 0.938 & 0.900 & 0.922 & 0.915 \\
\bottomrule
\end{tabular}
}
\caption{Breakdown table for per-scene SSIM of 3DGS and our SteepGS.}
\label{tab:breakdown_ssim}
\end{table}

\begin{table}[t]
\centering
\resizebox{\linewidth}{!}{
\begin{tabular}{c|cc|cc|ccc|cccc}
\toprule
& \multicolumn{2}{c|}{Tank \& Temple} & \multicolumn{2}{c|}{Deep Blending} & \multicolumn{3}{c|}{mip-NeRF 360 Outdoor} & \multicolumn{4}{c}{mip-NeRF 360 Indoor} \\
 & Train & Truck & Dr. Johnson & Playroom & Bicycle & Garden & Stump & Bonsai & Counter & Kitchen & Room \\
\hline

3DGS & 0.207 & 0.147 & 0.244 & 0.244 & 0.210 & 0.107 & 0.215 & 0.203 & 0.200 & 0.126 & 0.219 \\
SteepGS & 0.230 & 0.160 & 0.251 & 0.250 & 0.268 & 0.142 & 0.271 & 0.211 & 0.217 & 0.137 & 0.233\\
\bottomrule
\end{tabular}
}
\caption{Breakdown table for per-scene LPIPS of 3DGS and our SteepGS.}
\label{tab:breakdown_lpips}
\end{table}

\begin{table}[t]
\centering
\resizebox{\linewidth}{!}{
\begin{tabular}{c|cc|cc|ccc|cccc}
\toprule
& \multicolumn{2}{c|}{Tank \& Temple} & \multicolumn{2}{c|}{Deep Blending} & \multicolumn{3}{c|}{mip-NeRF 360 Outdoor} & \multicolumn{4}{c}{mip-NeRF 360 Indoor} \\
 & Train & Truck & Dr. Johnson & Playroom & Bicycle & Garden & Stump & Bonsai & Counter & Kitchen & Room \\
\hline

3DGS & 1088197 & 2572172 & 3316036 & 2320830 & 6074705 & 5845401 & 4863462 & 1260017 & 1195896 & 1807771 & 1550152 \\
SteepGS & 530476 & 1387065 & 1485567 & 1107604 & 2900640 & 2195185 & 3175021 & 746163 & 591508 & 922717 & 710212\\
\bottomrule
\end{tabular}
}
\caption{Breakdown table for the number of points densified by 3DGS and our SteepGS.}
\label{tab:breakdown_pts}
\end{table}

\begin{figure}[t]
  \begin{minipage}[t]{0.45\linewidth}
      \centering
      \includegraphics[width=\textwidth]{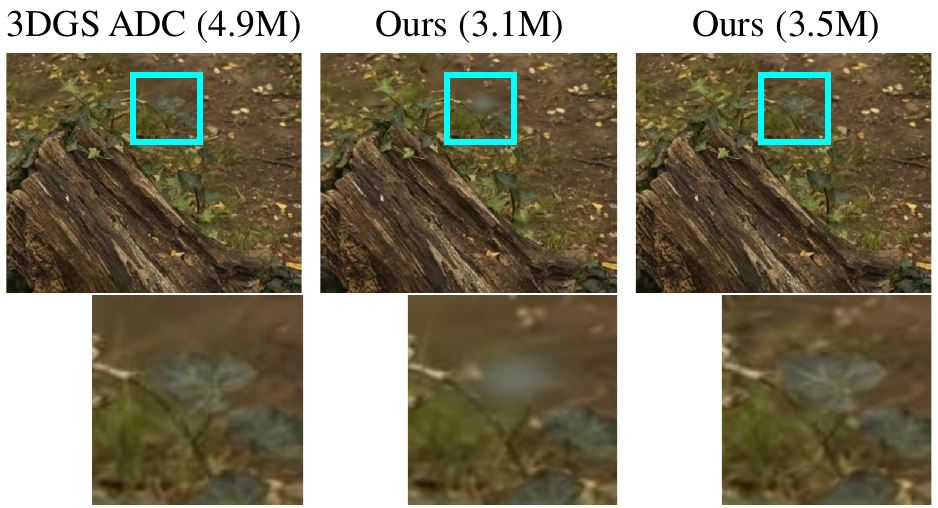}
      \caption{Improved visual quality of our method after more steps of Gaussian splitting.}
      \label{fig:perf-improve}
  \end{minipage}
  \hfill
  \begin{minipage}{0.45\linewidth}
    \centering
    \vspace{-0.52in}
    \resizebox{\linewidth}{!}{
    \begin{tabular}{lccc}
        \toprule
        & Bicycle & Garden & Stump \\ \hline
        3DGS & 25.25 & 27.42 & 26.70 \\
        Ours & 24.89 & 27.16 & 26.11 \\
        Ours (more steps) & 25.23 & 27.38 & 26.65 \\
        \bottomrule
    \end{tabular}}
    \vspace{0.2in}
    \captionsetup{type=table}
    \caption{Improved performance of our method evaluated in PSNR after more steps of Gaussian splitting.}
    \label{tab:perf-improve}
  \end{minipage}
\end{figure}

\section{More Experiment Results}

\paragraph{Metrics Breakdown.} Tables \ref{tab:breakdown_psnr}, \ref{tab:breakdown_ssim}, \ref{tab:breakdown_lpips} and \ref{tab:breakdown_pts} provide breakdown numerical evaluations of PSNR, SSIM, LPIPS, and the number of points for both our method and the original adaptive density control. The results demonstrate that our method achieves performance comparable to the original densification across all scenes. Notably, in the \texttt{Playroom} and \texttt{Dr.~Johnson} scenes, our method outperforms the original adaptive density control while utilizing only half the number of points.

\paragraph{More Visualizations.} Fig. \ref{fig:vis_densify_more} visualizes the points selected for densification in four scenes. It can be observed that our method selects fewer points by concentrating on regions with blurry under-reconstructed areas. In contrast, the original adaptive density control performs more densifications on high-frequency details, which is less likely to effectively enhance rendering quality. These findings validate that our method conserves computational resources by directing densification toward areas that result in the steepest descent in rendering loss.

\paragraph{More Metrics and Compared Methods.} In addition to the compared methods in the main text, we test two more baselines: Compact-3DGS \cite{lee2024compact} and LP-3DGS \cite{zhanglp}. We also include elapsed time on GPU for training, mean and peak GPU memory usage for training, and rendering FPS\footnote{We observed that measuring FPS can be inconsistent, and the values reported in the table should be considered as a reference.} as additional metrics. Table \ref{tab:quan_res_more_metrics} presents the comparison results evaluated on MipNeRF360, Temple\&Tanks, and Deep Blending datasets. Although Compact-3DGS and LP-3DGS yield fewer points in the final results, our method achieves better metrics in PSNR and significantly reduces training time on GPUs. Moreover, our method consistently decreases GPU memory usage and improves rendering FPS compared to the original 3DGS ADC, performing on par with the two newly compared methods.

\paragraph{Improved Performance.} Readers might feel curious if our method could achieve even more closer performance to that of the original 3DGS ADC. In our main experiments, to ensure fair comparisons, we reuse the hyper-parameters of ADC. However, we found that extending the densification iterations to 25K and the total training steps to 40K on some MipNeRF360 scenes allows our method to achieve better performance and further mitigates the blurriness observed in the rendered images. As a reference, Table \ref{tab:perf-improve} demonstrates performance improvements with more densification iterations. Figure \ref{fig:perf-improve} shows reduced blurriness in the \texttt{stump} scene.

\begin{figure*}[t]
  \centering
  \includegraphics[width=\textwidth]{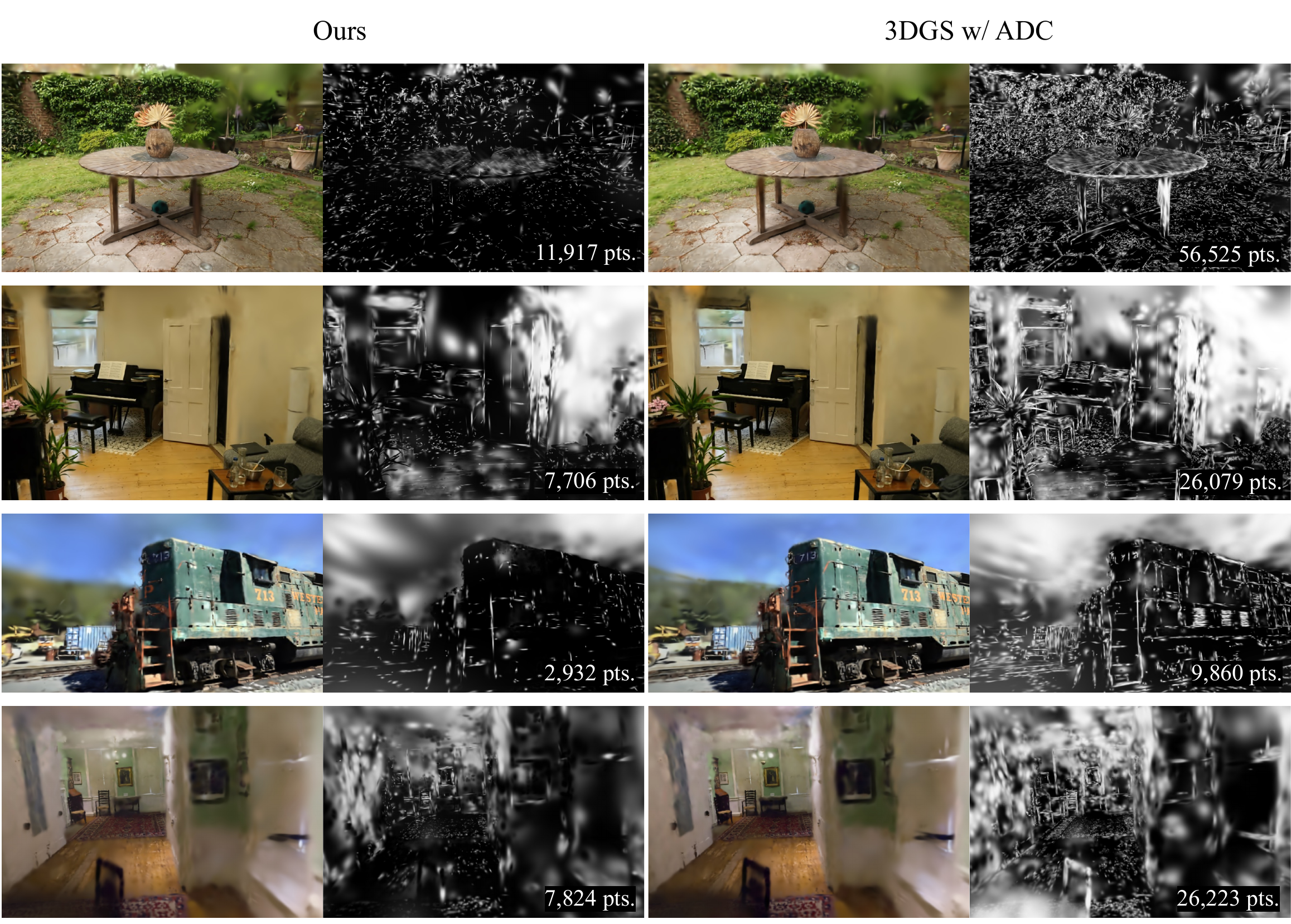}
  \caption{More visualizations of splitting points. We compare the number of points split by our proposed method and the original ADC.}
  \label{fig:vis_densify_more}
\end{figure*}

\begin{table}[t!]
    \centering
    \renewcommand{\arraystretch}{1.5}
    \resizebox{\linewidth}{!}{
    \begin{tabular}{lcccccccc}
        \toprule
        \multicolumn{9}{c}{MipNeRF360} \\
        & \# Points $\downarrow$ & PSNR $\uparrow$ & SSIM $\uparrow$ & LPIPS $\downarrow$ & GPU elapse $\downarrow$ & mean GPU mem. $\downarrow$ & peak GPU mem. $\downarrow$ & FPS $\uparrow$ \\ \hline
        3DGS & 3.339 M & 29.037 & 0.872 & 0.183 & 1550.925 s & 10.262 GB & 12.110 GB & 179 \\ \hline
        LP-3DGS & 1.303 M & 28.640 & 0.865 & 0.198 & 1177.648 s & 10.027 GB & 12.458 GB & 350 \\
        Compact-3DGS & 1.310 M & 28.504 & 0.856 &	0.208 & 4063.203 s & 7.274 GB & 9.044 GB & 98 \\ \hline
        SteepGS (Ours) & 1.606 M & 28.734 & 0.857 & 0.211 & 1051.276 s & 7.597 GB & 8.957 GB & 252 \\ \hline
    \end{tabular}}

    \resizebox{\linewidth}{!}{
    \begin{tabular}{lcccccccc}
        \hline
        \multicolumn{9}{c}{Tank \& Temple} \\
        & \# Points $\downarrow$ & PSNR $\uparrow$ & SSIM $\uparrow$ & LPIPS $\downarrow$ & GPU elapse $\downarrow$ & mean GPU mem. $\downarrow$ & peak GPU mem. $\downarrow$ & FPS $\uparrow$ \\ \hline
        3DGS & 1.830 M & 23.743 & 0.848 & 0.177 & 803.542 s & 5.193 GB & 6.241 GB & 248 \\
        \hline
        LP-3DGS &  0.671 M & 23.424 & 0.839 & 0.197 & 1021.806 s & 5.045 GB & 6.489 GB & 150 \\
        Compact-3DGS & 0.836 M & 23.319 & 0.835 & 0.200 & 1255.748 s & 3.802 GB & 4.774 GB & 357 \\ \hline
        SteepGS (Ours) & 0.958 M & 23.684 & 0.840 & 0.194 & 539.048 s & 4.701 GB & 5.607 GB & 343 \\ \hline
    \end{tabular}}

    \resizebox{\linewidth}{!}{
    \begin{tabular}{lcccccccc}
        \hline
        \multicolumn{9}{c}{Deep Blending} \\
        & \# Points $\downarrow$ & PSNR $\uparrow$ & SSIM $\uparrow$ & LPIPS $\downarrow$ & GPU elapse $\downarrow$ & mean GPU mem. $\downarrow$ & peak GPU mem. $\downarrow$ & FPS $\uparrow$ \\ \hline
        3DGS & 2.818 M & 29.690 & 0.904 & 0.244 & 1429.878 s & 8.668 GB & 10.218 GB & 187 \\ \hline
        LP-3DGS & 0.861 M & 29.764 & 0.906 & 0.249 & 1697.793 s & 8.354 GB & 10.115 GB & 134 \\
        Compact-3DGS & 1.054 M & 29.896 & 0.905 & 0.255 & 1861.897 s & 6.332 GB & 8.026 GB & 312 \\ \hline
        SteepGS (Ours) & 1.296 M & 29.963 & 0.905 & 0.250 & 956.536 s & 5.928 GB & 9.506 GB & 280 \\ \bottomrule
    \end{tabular}}

    \caption{Comparison with LP-3DGS \cite{zhanglp} and Compact-3DGS \cite{lee2024compact} baselines on MipNeRF360, Tank \& Temple, and Deep Blending datasets. Additional metrics: GPU elapsed time for training, mean \& peak GPU memory usage, and FPS are included.}
    \label{tab:quan_res_more_metrics}
\end{table}

\section{Theory}
\label{sec:proofs}

\subsection{Notations and Setup}
To begin with, we re-introduce our notations and the problem setup more rigorously.
We abstract each Gaussian as a function $\sigma_{\Pi}(\Mat{x}; \Mat{\theta}^{(i)}): \Theta 
\times \Set{X} \rightarrow \Set{O}$ where $\Mat{\theta}^{(i)} \in \Theta$ are parameters encapsulating mean, covariance, density, SH coefficients, $(\Pi, \Mat{x}) \in \Set{X}$ denote the camera transformations and the 2D-pixel coordinates respectively, and output includes density and RGB color in space $\Set{O}$.
Further on, we assign the input space a probability measure $\Set{D}(\Set{X})$.
We combine $\alpha$-blending and the photometric loss as a single function $\ell(\cdot): \mathbb{P}(\Set{O}) \mapsto \real$, where $\mathbb{P}(\Set{O})$ denotes the entire output space, i.e., all multisets whose elements are in the output space $\Set{O}$.
Suppose the scene has $n$ Gaussians, then we denote the all parameters as $\Mat{\theta} = \{ \Mat{\theta}^{(i)} \}_{i=1}^{n}$ for shorthand and the total loss function can be expressed as:
\begin{align}
\cL(\Mat{\theta}) = \mean_{\Pi, \Mat{x} \sim \Set{D}(\Set{X})} [\ell(\sigma_{\Pi}(\Mat{x}; \Mat{\theta}^{(1)}), \cdots, \sigma_{\Pi}(\Mat{x}; \Mat{\theta}^{(n)}))].
\end{align}
Now our goal is to split each Gaussian into $m_i$ off-springs.
We denote the parameters of the $i$-th Gaussian's off-springs as $\Mat{\vartheta}^{(i)} = \{\Mat{\vartheta}^{(i)}_j\}_{j=1}^{m_i}$, where $\Mat{\vartheta}^{(i)}_j$ is the $j$-th off-spring of the $i$-th Gaussian and assign it a group of reweighting coefficients $\Mat{w}^{(i)} = \{ w^{(i)}_j \}_{j=1}^{m_i}$ to over-parameterize the original Gaussian as: $\sum_{j=1}^{m_i} w^{(i)}_j \sigma_{\Pi}(\Mat{x}; \Mat{\vartheta}^{(i)}_j)$ such that $\sum_{j=1}^{m_i} w^{(i)}_j = 1$ for every $i \in [n]$
We collect parameters of all the new Gaussians as $\Mat{\vartheta} = \{\Mat{\vartheta}^{(i)}\}_{i=1}^{n}$, and reweighting coefficients as $\Mat{w} = \{ \Mat{w}^{(i)} \}_{i=1}^{n}$, for shorthand.
With newly added Gaussians, the augmented loss function becomes:
\begin{align}
\cL(\Mat{\vartheta}, \Mat{w}) = \mean_{\Pi, \Mat{x} \sim \Set{D}(\Set{X})} \left[\ell\left(\sum_{j=1}^{m_1} w^{(1)}_j \sigma_{\Pi}(\Mat{x}; \Mat{\vartheta}^{(1)}_j), \cdots, \sum_{j=1}^{m_n} w^{(n)}_j \sigma_{\Pi}(\Mat{x}; \Mat{\vartheta}^{(n)}_j) \right) \right].
\end{align}

\subsection{Main Results}

\begin{proof}[Proof of Theorem \ref{thm:sec_ord_approx}]

We define $\Mat{\mu}^{(i)}$ as the average displacement on $\Mat{\theta}^{(i)}$: $\Mat{\mu}^{(i)} = (\sum_{j=1}^{m_i} w^{(i)}_j \Mat{\vartheta}^{(i)}_j - \Mat{\theta}^{(i)}) / \epsilon$ and $\Mat{\delta}^{(i)}_j = (\Mat{\vartheta}^{(i)}_j - \Mat{\theta}^{(i)})/\epsilon - \Mat{\mu}^{(i)}$ as offset additional to $\Mat{\mu}^{(i)}$ for the $j$-th off-spring.
It is obvious that:
\begin{align}
\sum_{j=1}^{m_i} w^{(i)}_j \Mat{\delta}^{(i)}_j &= \sum_{j=1}^{m_i} w^{(i)}_j \left(\frac{\Mat{\vartheta}^{(i)}_j - \Mat{\theta}^{(i)}}{\epsilon} - \Mat{\mu}^{(i)}\right) = \frac{1}{\epsilon} \sum_{j=1}^{m_i} w^{(i)}_j \Mat{\vartheta}^{(i)}_j - \frac{1}{\epsilon} \sum_{j=1}^{m_i} w^{(i)}_j \Mat{\theta}^{(i)} - \sum_{j=1}^{m_i} w^{(i)}_j \Mat{\mu}^{(i)} \nonumber \\
\label{eqn:delta_sum_zero} &= \frac{1}{\epsilon} \sum_{j=1}^{m_i} w^{(i)}_j \Mat{\vartheta}^{(i)}_j - \frac{1}{\epsilon} \Mat{\theta}^{(i)} - \Mat{\mu}^{(i)}
= \Mat{0}.
\end{align}
In addition, we let $\Mat{\Delta}^{(i)}_{j} = \Mat{\mu}^{(i)} + \Mat{\delta}^{(i)}_j$, and
$\Mat{\vartheta}^{(i)}_j$ can be written as: $\Mat{\vartheta}^{(i)}_j = \Mat{\theta}^{(i)} + \epsilon \Mat{\Delta}^{(i)}_j = \Mat{\theta}^{(i)} + \epsilon (\Mat{\mu}^{(i)} + \Mat{\delta}^{(i)}_j)$.
We define an auxiliary function: $\cL(\Mat{\theta}^{(\setminus i)}, \Mat{\vartheta}^{(i)}, \Mat{w}^{(i)})$ as:
\begin{align}
\cL(\Mat{\theta}^{(\setminus i)}, \Mat{\vartheta}^{(i)}, \Mat{w}^{(i)}) = \mean_{\Pi, \Mat{x} \sim \Set{D}(\Set{X})} \left[\ell\left( \sigma_{\Pi}(\Mat{x}; \Mat{\theta}^{(1)}), \cdots, \sum_{j=1}^{m_i} w^{(i)}_j \sigma_{\Pi}(\Mat{x}; \Mat{\vartheta}^{(i)}_j), \cdots, \sigma_{\Pi}(\Mat{x}; \Mat{\theta}^{(n)}) \right) \right],
\end{align}
which only splits the $i$-th Gaussian $\Mat{\theta}^{(i)}$ as $\Mat{\vartheta}^{(i)}$.
By Lemma \ref{lem:split_one_vs_split_all}, we have that:
\begin{align} \label{eqn:split_one_vs_split_all}
\left(\cL(\Mat{\vartheta}, \Mat{w}) - \cL(\Mat{\theta})\right) =
\sum_{i=1}^{n} \left(\cL(\Mat{\theta}^{(\setminus i)}, \Mat{\vartheta}^{(i)}, \Mat{w}^{(i)}) - \cL(\Mat{\theta}) \right) + \frac{\epsilon^2}{2} \sum_{\substack{i, i' \in [n] \\ i \ne i'}} \Mat{\mu}^{(i)\top} \partial^2_{\Mat{\theta}^{(i)} \Mat{\theta}^{(i')}} \cL(\Mat{\theta}) \Mat{\mu}^{(i')} + \bigO(\epsilon^3).
\end{align}
By Lemma \ref{lem:loss_decomp_split_one}, we have:
\begin{align} \label{eqn:loss_decomp_split_one}
\cL(\Mat{\theta}^{(\setminus i)}, \Mat{\vartheta}^{(i)}, \Mat{w}^{(i)}) - \cL(\Mat{\theta}) &= \epsilon \partial_{\Mat{\theta}^{(i)}} \cL(\Mat{\theta})^{\top} \Mat{\mu}^{(i)} + \frac{\epsilon^2}{2} \Mat{\mu}^{(i)\top} 
\partial^2_{\Mat{\theta}^{(i)} \Mat{\theta}^{(i)}} \cL(\Mat{\theta}) \Mat{\mu}^{(i)} \\
& \quad + \frac{\epsilon^2}{2} \sum_{j=1}^{m_i} w^{(i)}_{j} \Mat{\delta}^{(i)\top}_j \Mat{S}^{(i)}(\Mat{\theta}) \Mat{\delta}^{(i)}_j + \bigO(\epsilon^3).
\end{align}
Let $\Mat{\mu} = \begin{bmatrix} \Mat{\mu}^{(1)} & \cdots & \Mat{\mu}^{(n)} \end{bmatrix}$ concatenate the average displacement on all Gaussians. Combining Eq. \ref{eqn:split_one_vs_split_all} and Eq. \ref{eqn:loss_decomp_split_one}, we can conclude:
\begin{align*}
\left(\cL(\Mat{\vartheta}, \Mat{w}) - \cL(\Mat{\theta})\right) &= \sum_{i=1}^{n} \left[ \epsilon \partial_{\Mat{\theta}^{(i)}} \cL(\Mat{\theta})^{\top} \Mat{\mu}^{(i)} + \frac{\epsilon^2}{2} \Mat{\mu}^{(i)\top} 
\partial^2_{\Mat{\theta}^{(i)} \Mat{\theta}^{(i)}} \cL(\Mat{\theta}) \Mat{\mu}^{(i)} + \frac{\epsilon^2}{2} \sum_{j=1}^{m_i} w^{(i)}_{j} \Mat{\delta}^{(i)\top}_j \Mat{S}^{(i)}(\Mat{\theta}) \Mat{\delta}^{(i)}_j \right] \\
& \quad + \frac{\epsilon^2}{2} \sum_{\substack{i, i' \in [n] \\ i \ne i'}} \Mat{\mu}^{(i)\top} \partial^2_{\Mat{\theta}^{(i)} \Mat{\theta}^{(i')}} \cL(\Mat{\theta}) \Mat{\mu}^{(i')} + \bigO(\epsilon^3) \\
&= \epsilon \sum_{i=1}^{n}  \partial_{\Mat{\theta}^{(i)}} \cL(\Mat{\theta})^{\top} \Mat{\mu}^{(i)} + \frac{\epsilon^2}{2} \sum_{\substack{i, i' \in [n]}} \Mat{\mu}^{(i)\top} \partial^2_{\Mat{\theta}^{(i)} \Mat{\theta}^{(i')}} \cL(\Mat{\theta}) \Mat{\mu}^{(i')} \\
& \quad + \frac{\epsilon^2}{2} \sum_{j=1}^{m_i} w^{(i)}_{j} \Mat{\delta}^{(i)\top}_j \Mat{S}^{(i)}(\Mat{\theta}) \Mat{\delta}^{(i)}_j + \bigO(\epsilon^3) \\
&= \epsilon \nabla_{\Mat{\theta}} \cL(\Mat{\theta})^{\top} \Mat{\mu} + \frac{\epsilon^2}{2} \Mat{\mu}^{\top} \nabla^2 \cL(\Mat{\theta}) \Mat{\mu} + \frac{\epsilon^2}{2} \sum_{j=1}^{m_i} w^{(i)}_{j} \Mat{\delta}^{(i)\top}_j \Mat{S}^{(i)}(\Mat{\theta}) \Mat{\delta}^{(i)}_j + \bigO(\epsilon^3),
\end{align*}
as desired.
\end{proof}

\begin{proof}[Proof of Theorem \ref{thm:optim_split}]
By standard variational characterization, we have the following lower bound:
\begin{align*}
\Delta^{(i)}(\Mat{\delta}^{(i)}, \Mat{w}^{(i)}; \Mat{\theta}) := \frac{\epsilon^2}{2} \sum_{j=1}^{m_i} w^{(i)}_j \Mat{\delta}^{(i)\top}_j \Mat{S}^{(i)}(\Mat{\theta}) \Mat{\delta}^{(i)}_j \ge \frac{\epsilon^2}{2} \sum_{j=1}^{m_i} w^{(i)}_j  \lambda_{min}(\Mat{S}^{(i)}(\Mat{\theta})) = \frac{\epsilon^2}{2} \lambda_{min}(\Mat{S}^{(i)}(\Mat{\theta})),
\end{align*}
subject to $\lVert \Mat{\delta}^{(i)}_j \rVert \le 1$.
The equality holds only if $\Mat{\delta}^{(i)}_j$ equals to the smallest eigenvector of $\Mat{S}^{(i)}(\Mat{\theta})$.

Hence, there is no decrease on the loss if $\lambda_{min}(\Mat{S}^{(i)}(\Mat{\theta})) \ge 0$.
Otherwise, we can simply choose $m_i = 2$, $w^{(i)}_1 = w^{(i)}_2 = 1/2$, $\Mat{\delta}^{(i)}_1 = \Mat{v}_{min}(\Mat{S}^{(i)}(\Mat{\theta}))$, and $\Mat{\delta}^{(i)}_2 = -\Mat{v}_{min}(\Mat{S}^{(i)}(\Mat{\theta}))$ to achieve this lower bound.
\end{proof}

\subsection{Auxiliary Results}

\begin{lemma} \label{lem:derivatives_orig}
The following equalities hold for $\cL(\Mat{\theta})$ for every $i \in [n]$
\begin{align*}
\partial_{\Mat{\theta}^{(i)}} \cL(\Mat{\theta}) &= \mean_{\Pi, \Mat{x} \sim \Set{D}(\Set{X})} \left[\partial_{\sigma^{(i)}} \ell\left(\sigma_{\Pi}(\Mat{x}; \Mat{\theta}^{(1)}), \cdots, \sigma_{\Pi}(\Mat{x}; \Mat{\theta}^{(n)}) \right) \nabla \sigma_{\Pi}(\Mat{x}; \Mat{\theta}^{(i)}) \right], \\
\partial^2_{\Mat{\theta}^{(i)}\Mat{\theta}^{(i)}} \cL(\Mat{\theta}) &= \Mat{T}^{(i)}(\Mat{\theta}) + \Mat{S}^{(i)}(\Mat{\theta}), \\
\partial^2_{\Mat{\theta}^{(i)}\Mat{\theta}^{(i')}} \cL(\Mat{\theta}) &= \mean_{\Pi, \Mat{x} \sim \Set{D}(\Set{X})} \left[\partial^2_{\sigma^{(i)}\sigma^{(i')}} \ell\left(\sigma_{\Pi}(\Mat{x}; \Mat{\theta}^{(1)}), \cdots, \sigma_{\Pi}(\Mat{x}; \Mat{\theta}^{(n)}) \right) \nabla \sigma_{\Pi}(\Mat{x}; \Mat{\theta}^{(i)}) \nabla \sigma_{\Pi}(\Mat{x}; \Mat{\theta}^{(i')})^{\top} \right], \forall i' \in [n], i' \ne i,
\end{align*}
where $\Mat{T}^{(i)}(\Mat{\theta}) = \mean_{\Pi, \Mat{x} \sim \Set{D}(\Set{X})} \left[\partial^2_{\sigma^{(i)}\sigma^{(i)}} \ell\left(\sigma_{\Pi}(\Mat{x}; \Mat{\theta}^{(1)}), \cdots, \sigma_{\Pi}(\Mat{x}; \Mat{\theta}^{(n)}) \right) \nabla \sigma_{\Pi}(\Mat{x}; \Mat{\theta}^{(i)}) \nabla \sigma_{\Pi}(\Mat{x}; \Mat{\theta}^{(i)})^{\top} \right]$.
\end{lemma}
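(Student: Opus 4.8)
The plan is to derive all three identities by the multivariate chain and product rules, exploiting the single structural fact that the $k$-th Gaussian response $\sigma_{\Pi}(\Mat{x};\Mat{\theta}^{(k)})$ depends only on its own parameter block $\Mat{\theta}^{(k)}$ and on nothing else in $\Mat{\theta}$. I would first fix $(\Pi,\Mat{x})$ and work with the integrand $\ell(\sigma_{\Pi}(\Mat{x};\Mat{\theta}^{(1)}),\dots,\sigma_{\Pi}(\Mat{x};\Mat{\theta}^{(n)}))$, writing $\partial_{\sigma^{(i)}}\ell$ and $\partial^2_{\sigma^{(i)}\sigma^{(i')}}\ell$ for the partial derivatives of $\ell$ in its $i$-th, respectively $(i,i')$-th, argument slots, evaluated at the current responses. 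Since the bounded third-derivative hypothesis already invoked in Theorem~\ref{thm:sec_ord_approx} supplies an integrable dominating function, I can differentiate under $\mean_{\Pi,\Mat{x}\sim\Set{D}(\Set{X})}[\cdot]$ by dominated convergence, so it suffices to verify each identity at the level of the integrand and then take expectations.

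For the first identity I would apply the chain rule once: only the $i$-th argument of $\ell$ varies with $\Mat{\theta}^{(i)}$, so $\nabla_{\Mat{\theta}^{(i)}}\ell(\cdots) = \partial_{\sigma^{(i)}}\ell(\cdots)\,\nabla_{\Mat{\theta}^{(i)}}\sigma_{\Pi}(\Mat{x};\Mat{\theta}^{(i)})$, and taking the expectation gives the stated gradient formula. For the second identity I would differentiate that integrand once more in $\Mat{\theta}^{(i)}$ via the product rule on the scalar$\times$vector product $\partial_{\sigma^{(i)}}\ell(\cdots)\cdot\nabla\sigma_{\Pi}(\Mat{x};\Mat{\theta}^{(i)})$: differentiating the scalar factor contributes, again by the chain rule through the $i$-th slot, $\partial^2_{\sigma^{(i)}\sigma^{(i)}}\ell(\cdots)\,\nabla\sigma_{\Pi}(\Mat{x};\Mat{\theta}^{(i)})\nabla\sigma_{\Pi}(\Mat{x};\Mat{\theta}^{(i)})^{\top}$, whose expectation is $\Mat{T}^{(i)}(\Mat{\theta})$; differentiating the vector factor produces the Hessian $\nabla^2_{\Mat{\theta}^{(i)}}\sigma_{\Pi}(\Mat{x};\Mat{\theta}^{(i)})$ scaled by $\partial_{\sigma^{(i)}}\ell(\cdots)$, whose expectation is exactly the splitting matrix $\Mat{S}^{(i)}(\Mat{\theta})$ of Theorem~\ref{thm:sec_ord_approx}; adding the two yields $\partial^2_{\Mat{\theta}^{(i)}\Mat{\theta}^{(i)}}\cL = \Mat{T}^{(i)}(\Mat{\theta}) + \Mat{S}^{(i)}(\Mat{\theta})$.

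For the third identity ($i'\ne i$) I would differentiate the same integrand of $\partial_{\Mat{\theta}^{(i)}}\cL$ with respect to $\Mat{\theta}^{(i')}$. The key observation is that the vector factor $\nabla\sigma_{\Pi}(\Mat{x};\Mat{\theta}^{(i)})$ is constant in $\Mat{\theta}^{(i')}$, so only the scalar factor moves, and the chain rule through the $i'$-th argument gives $\partial^2_{\sigma^{(i)}\sigma^{(i')}}\ell(\cdots)\,\nabla\sigma_{\Pi}(\Mat{x};\Mat{\theta}^{(i')})$; forming the outer product with $\nabla\sigma_{\Pi}(\Mat{x};\Mat{\theta}^{(i)})$ and taking expectations finishes it. I would emphasize that no Hessian-of-$\sigma$ term survives in the off-diagonal blocks precisely because $\sigma_{\Pi}(\Mat{x};\Mat{\theta}^{(i)})$ carries no dependence on $\Mat{\theta}^{(i')}$ — this is the structural reason the splitting matrices capture only the block-diagonal ``curvature-of-$\sigma$'' part of the full Hessian.

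I do not expect a serious obstacle: the computation is essentially an exercise in the chain and product rules. The only point requiring real care is the interchange of differentiation and expectation, and even that is routine given the boundedness assumption already standing; a minor bookkeeping nuisance is being explicit about what $\partial_{\sigma^{(i)}}\ell$ means, since $\ell$ was introduced as a functional on the multiset space $\mathbb{P}(\Set{O})$ — I would resolve this at the outset by identifying $\ell$ with the ordinary function of its $n$ ordered arguments $\sigma^{(1)},\dots,\sigma^{(n)}$, which is all that is used anywhere in the paper.
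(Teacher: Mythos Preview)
Your proposal is correct and follows essentially the same route as the paper's own proof: apply the chain rule for the gradient, then differentiate the resulting scalar$\times$vector product via the product rule, and distinguish the diagonal case (where $\partial_{\Mat{\theta}^{(i')}}\nabla\sigma_{\Pi}(\Mat{x};\Mat{\theta}^{(i)}) = \nabla^2\sigma_{\Pi}(\Mat{x};\Mat{\theta}^{(i)})$, yielding $\Mat{T}^{(i)}+\Mat{S}^{(i)}$) from the off-diagonal case (where that derivative vanishes). Your additional remarks on justifying differentiation under the expectation and on interpreting $\partial_{\sigma^{(i)}}\ell$ go a bit beyond what the paper spells out, but they are sound and do not alter the argument.
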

\begin{proof}
The gradient of $\cL(\Mat{\theta})$ is proved via simple chain rule.
And then
\begin{align*}
&\partial^2_{\Mat{\theta}^{(i)}\Mat{\theta}^{(i')}} \cL(\Mat{\theta}) = \partial_{\Mat{\theta}^{(i')}} \left[\partial_{\Mat{\theta}^{(i)}} \cL(\Mat{\theta})\right] \\
&= \mean_{\Pi, \Mat{x} \sim \Set{D}(\Set{X})} \left[\partial^2_{\sigma^{(i)}\sigma^{(i')}} \ell\left(\sigma_{\Pi}(\Mat{x}; \Mat{\theta}^{(1)}), \cdots, \sigma_{\Pi}(\Mat{x}; \Mat{\theta}^{(n)}) \right) \nabla \sigma_{\Pi}(\Mat{x}; \Mat{\theta}^{(i)}) \nabla \sigma_{\Pi}(\Mat{x}; \Mat{\theta}^{(i')})^{\top} \right] \\
&\quad + \mean_{\Pi, \Mat{x} \sim \Set{D}(\Set{X})} \left[\partial_{\sigma^{(i)}} \ell\left(\sigma_{\Pi}(\Mat{x}; \Mat{\theta}^{(1)}), \cdots, \sigma_{\Pi}(\Mat{x}; \Mat{\theta}^{(n)}) \right) \partial_{\Mat{\theta}^{(i')}} \nabla \sigma_{\Pi}(\Mat{x}; \Mat{\theta}^{(i)}) \right].
\end{align*}
When $i = i'$, $\partial_{\Mat{\theta}^{(i')}} \nabla \sigma_{\Pi}(\Mat{x}; \Mat{\theta}^{(i)}) = \nabla^2 \sigma_{\Pi}(\Mat{x}; \Mat{\theta}^{(i)})$, henceforth:
\begin{align*}
\partial^2_{\Mat{\theta}^{(i)}\Mat{\theta}^{(i)}} \cL(\Mat{\theta}) &= \Mat{T}^{(i)}(\Mat{\theta}) + \Mat{S}^{(i)}(\Mat{\theta}).
\end{align*}
Otherwise, $\partial_{\Mat{\theta}^{(i')}} \nabla \sigma_{\Pi}(\Mat{x}; \Mat{\theta}^{(i)}) = \Mat{0}$, and thus:
\begin{align*}
\partial^2_{\Mat{\theta}^{(i)}\Mat{\theta}^{(i')}} \cL(\Mat{\theta}) &= \mean_{\Pi, \Mat{x} \sim \Set{D}(\Set{X})} \left[\partial^2_{\sigma^{(i)}\sigma^{(i')}} \ell\left(\sigma_{\Pi}(\Mat{x}; \Mat{\theta}^{(1)}), \cdots, \sigma_{\Pi}(\Mat{x}; \Mat{\theta}^{(n)}) \right) \nabla \sigma_{\Pi}(\Mat{x}; \Mat{\theta}^{(i)}) \nabla \sigma_{\Pi}(\Mat{x}; \Mat{\theta}^{(i')})^{\top} \right],
\end{align*}
all as desired.
\end{proof}

\begin{lemma} \label{lem:derivatives_split_all}
The following equalities hold for $\cL(\Mat{\vartheta}, \Mat{w})$ at $\epsilon = 0$:
\begin{align}
\left.\partial_{\Mat{\vartheta}^{(i)}_j} \cL(\Mat{\vartheta}, \Mat{w}) \right\vert_{\epsilon=0} &= w^{(i)}_j \partial_{\Mat{\theta}^{(i)}} \cL(\Mat{\theta}), \quad \forall i \in [n], j \in [m_i], \\
\left.\partial^2_{\Mat{\vartheta}^{(i)}_j \Mat{\vartheta}^{(i)}_{j}} \cL(\Mat{\vartheta}, \Mat{w}) \right\vert_{\epsilon=0} &= w^{(i)}_j \Mat{S}^{(i)}(\Mat{\theta}) + w^{(i)2}_j \Mat{T}^{(i)}(\Mat{\theta}), \quad \forall i \in [n], j \in [m_i], \\
\left.\partial^2_{\Mat{\vartheta}^{(i)}_j \Mat{\vartheta}^{(i)}_{j'}} \cL(\Mat{\vartheta}, \Mat{w}) \right\vert_{\epsilon=0} &= w^{(i)}_j  w^{(i)}_{j'} \Mat{T}^{(i)}(\Mat{\theta}), \quad \forall i \in [n], j, j' \in [m_i], j \ne j', \\
\left.\partial^2_{\Mat{\vartheta}^{(i)}_j, \Mat{\vartheta}^{(i')}_{j'}} \cL(\Mat{\vartheta}, \Mat{w}) \right\vert_{\epsilon=0} &= w^{(i)}_j  w^{(i')}_{j'} \partial^2_{\Mat{\theta}^{(i)}\Mat{\theta}^{(i')}} \cL(\Mat{\theta}), \quad \forall i, i' \in [n], i \ne i', j \in [m_i], j' \in [m_{i'}], \label{eqn:derivatives_split_all_cross}
\end{align}
where $\Mat{T}^{(i)}(\Mat{\theta}) = \mean_{\Pi, \Mat{x} \sim \Set{D}(\Set{X})} \left[\partial^2_{\sigma^{(i)}\sigma^{(i)}} \ell\left(\sigma_{\Pi}(\Mat{x}; \Mat{\theta}^{(1)}), \cdots, \sigma_{\Pi}(\Mat{x}; \Mat{\theta}^{(n)}) \right) \nabla \sigma_{\Pi}(\Mat{x}; \Mat{\theta}^{(i)}) \nabla \sigma_{\Pi}(\Mat{x}; \Mat{\theta}^{(i)})^{\top} \right]$ is as defined in Lemma \ref{lem:derivatives_orig}.
\end{lemma}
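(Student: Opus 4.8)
The plan is a careful application of the chain rule through $\ell$, followed by evaluation at $\epsilon=0$, where every off-spring $\Mat{\vartheta}^{(i)}_j$ coincides with its parent $\Mat{\theta}^{(i)}$ and the reweighting constraint $\sum_j w^{(i)}_j = 1$ makes the $i$-th argument of $\ell$ collapse back to $\sigma_{\Pi}(\Mat{x};\Mat{\theta}^{(i)})$. Concretely, for fixed $(\Pi,\Mat{x})$ write $S_i(\Mat{\vartheta}^{(i)},\Mat{w}^{(i)}) \triangleq \sum_{j=1}^{m_i} w^{(i)}_j \sigma_{\Pi}(\Mat{x};\Mat{\vartheta}^{(i)}_j)$ for the $i$-th input fed into $\ell$, so that $\cL(\Mat{\vartheta},\Mat{w}) = \mean[\ell(S_1,\dots,S_n)]$. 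The two facts I would record up front are $\partial_{\Mat{\vartheta}^{(i)}_j} S_{i} = w^{(i)}_j \nabla\sigma_{\Pi}(\Mat{x};\Mat{\vartheta}^{(i)}_j)$, and $\partial_{\Mat{\vartheta}^{(i')}_{j'}} S_i = \Mat{0}$ whenever $i'\ne i$; also $\partial_{\Mat{\vartheta}^{(i')}_{j'}}\nabla\sigma_{\Pi}(\Mat{x};\Mat{\vartheta}^{(i)}_j) = \Mat{0}$ unless $(i',j')=(i,j)$, in which case it equals $\nabla^2\sigma_{\Pi}(\Mat{x};\Mat{\vartheta}^{(i)}_j)$.

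For the gradient, the chain rule gives $\partial_{\Mat{\vartheta}^{(i)}_j}\cL = \mean[\,\partial_{\sigma^{(i)}}\ell \cdot w^{(i)}_j\,\nabla\sigma_{\Pi}(\Mat{x};\Mat{\vartheta}^{(i)}_j)\,]$. Setting $\epsilon=0$ replaces $\Mat{\vartheta}^{(i)}_j$ by $\Mat{\theta}^{(i)}$ inside both $\partial_{\sigma^{(i)}}\ell$ and $\nabla\sigma_{\Pi}$, so the scalar $w^{(i)}_j$ factors out and what remains is exactly the expression for $\partial_{\Mat{\theta}^{(i)}}\cL(\Mat{\theta})$ in Lemma~\ref{lem:derivatives_orig}; this yields the first identity. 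For the Hessian, I differentiate the gradient expression once more with respect to $\Mat{\vartheta}^{(i')}_{j'}$, producing two groups of terms by the product rule: (a) the term in which the derivative hits $\nabla\sigma_{\Pi}(\Mat{x};\Mat{\vartheta}^{(i)}_j)$, which by the bookkeeping above survives only when $(i',j')=(i,j)$ and contributes $w^{(i)}_j\,\mean[\partial_{\sigma^{(i)}}\ell\cdot\nabla^2\sigma_{\Pi}]$; and (b) the term in which the derivative hits $\partial_{\sigma^{(i)}}\ell$, which via $\partial_{\Mat{\vartheta}^{(i')}_{j'}}S_{i'} = w^{(i')}_{j'}\nabla\sigma_{\Pi}(\Mat{x};\Mat{\vartheta}^{(i')}_{j'})$ contributes $w^{(i)}_j w^{(i')}_{j'}\,\mean[\partial^2_{\sigma^{(i)}\sigma^{(i')}}\ell\cdot\nabla\sigma_{\Pi}(\Mat{x};\Mat{\vartheta}^{(i')}_{j'})\,\nabla\sigma_{\Pi}(\Mat{x};\Mat{\vartheta}^{(i)}_j)^{\top}]$.

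It then remains to split into the three index cases and evaluate at $\epsilon=0$ using Lemma~\ref{lem:derivatives_orig}. When $(i',j')=(i,j)$, both (a) and (b) are present and give $w^{(i)}_j\Mat{S}^{(i)}(\Mat{\theta}) + w^{(i)2}_j\Mat{T}^{(i)}(\Mat{\theta})$. When $i'=i$ but $j'\ne j$, only (b) survives and gives $w^{(i)}_j w^{(i)}_{j'}\Mat{T}^{(i)}(\Mat{\theta})$, since at $\epsilon=0$ the two gradient factors are both $\nabla\sigma_{\Pi}(\Mat{x};\Mat{\theta}^{(i)})$ and the second-order derivative of $\ell$ is the same-coordinate one. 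When $i'\ne i$, only (b) survives and the mixed second-order derivative $\partial^2_{\sigma^{(i)}\sigma^{(i')}}\ell$ together with the two gradient factors reproduces $\partial^2_{\Mat{\theta}^{(i)}\Mat{\theta}^{(i')}}\cL(\Mat{\theta})$ up to the scalar $w^{(i)}_j w^{(i')}_{j'}$, establishing~\eqref{eqn:derivatives_split_all_cross}. There is no conceptual obstacle here; the only thing requiring care is the bookkeeping of which terms vanish and the combinatorial powers of $w$ — in particular noticing that the $w^{(i)2}_j$ factor on the $\Mat{T}^{(i)}$ piece appears only in the diagonal block because that is the one case where the $\partial_{\sigma^{(i)}}\ell$-derivative and the explicit $w^{(i)}_j$ from the first differentiation both attach to the same index.
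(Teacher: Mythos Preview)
Your proposal is correct and follows essentially the same approach as the paper: introduce the weighted sum $S_i$ (the paper calls it $\wt{\sigma}_{\Pi}(\Mat{x};\Mat{\vartheta}^{(i)})$), apply the chain rule through $\ell$, split the second derivative into the two product-rule pieces, case-split on whether $(i',j')$ coincides with $(i,j)$, and evaluate at $\epsilon=0$ using Lemma~\ref{lem:derivatives_orig}. The only cosmetic slip is the order of the outer-product factors in your term~(b), which should read $\nabla\sigma_{\Pi}(\Mat{x};\Mat{\vartheta}^{(i)}_j)\,\nabla\sigma_{\Pi}(\Mat{x};\Mat{\vartheta}^{(i')}_{j'})^{\top}$; this is immaterial at $\epsilon=0$ since the relevant matrices are symmetric.
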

\begin{proof}
Let $\wt{\sigma}_{\Pi}(\Mat{x}; \Mat{\vartheta}^{(i)}) = \sum_{j=1}^{m_i} w^{(i)}_j \sigma_{\Pi}(\Mat{x}; \Mat{\vartheta}^{(i)}_j)$ and we can express $\cL(\Mat{\vartheta}, \Mat{w})$ as:
\begin{align}
\cL(\Mat{\vartheta}, \Mat{w}) = \mean_{\Pi, \Mat{x} \sim \Set{D}(\Set{X})} \left[\ell\left(\wt{\sigma}_{\Pi}(\Mat{x}; \Mat{\vartheta}^{(1)}), \cdots, \wt{\sigma}_{\Pi}(\Mat{x}; \Mat{\vartheta}^{(n)}) \right) \right].
\end{align}
To take derivatives of $\cL(\Mat{\vartheta}, \Mat{w})$, we leverage the chain rule. For every $i \in [n], j \in [m_i]$:
\begin{align}
\partial_{\Mat{\vartheta}^{(i)}_j} \cL(\Mat{\vartheta}, \Mat{w})
&= \partial_{\Mat{\vartheta}^{(i)}_j} \mean_{\Pi, \Mat{x} \sim \Set{D}(\Set{X})} \left[\ell\left(\wt{\sigma}_{\Pi}(\Mat{x}; \Mat{\vartheta}^{(1)}), \cdots, \wt{\sigma}_{\Pi}(\Mat{x}; \Mat{\vartheta}^{(n)}) \right) \right] \nonumber \\
&= \mean_{\Pi, \Mat{x} \sim \Set{D}(\Set{X})} \left[\partial_{\Mat{\vartheta}^{(i)}_j} \ell\left(\wt{\sigma}_{\Pi}(\Mat{x}; \Mat{\vartheta}^{(1)}), \cdots, \wt{\sigma}_{\Pi}(\Mat{x}; \Mat{\vartheta}^{(n)}) \right) \right] \nonumber \\
&= \mean_{\Pi, \Mat{x} \sim \Set{D}(\Set{X})} \left[\partial_{\sigma^{(i)}} \ell\left(\wt{\sigma}_{\Pi}(\Mat{x}; \Mat{\vartheta}^{(1)}), \cdots, \wt{\sigma}_{\Pi}(\Mat{x}; \Mat{\vartheta}^{(n)}) \right) \partial_{\Mat{\vartheta}^{(i)}_{j}} \wt{\sigma}_{\Pi}(\Mat{x}; \Mat{\vartheta}^{(i)})  \right] \nonumber \\
&= w^{(i)}_j \mean_{\Pi, \Mat{x} \sim \Set{D}(\Set{X})} \left[\partial_{\sigma^{(i)}} \ell\left(\wt{\sigma}_{\Pi}(\Mat{x}; \Mat{\vartheta}^{(1)}), \cdots, \wt{\sigma}_{\Pi}(\Mat{x}; \Mat{\vartheta}^{(n)}) \right) \nabla \sigma_{\Pi}(\Mat{x}; \Mat{\vartheta}^{(i)}_j)  \right]. \label{eqn:d_vartheta_chain_split_all}
\end{align}
Since $\epsilon = 0$, we have $\Mat{\vartheta}^{(i)}_j = \Mat{\theta}^{(i)}$ and $\wt{\sigma}_{\Pi}(\Mat{x}; \Mat{\vartheta}^{(i)}) = \sum_{j=1}^{m_i} w^{(i)}_j \sigma_{\Pi}(\Mat{x}; \Mat{\theta}^{(i)}) = \sigma_{\Pi}(\Mat{x}; \Mat{\theta})$.
Hence, we can further simplify Eq. \ref{eqn:d_vartheta_chain_split_all} as:
\begin{align*}
\left.\partial_{\Mat{\vartheta}^{(i)}_j} \cL(\Mat{\vartheta}, \Mat{w}) \right\vert_{\epsilon=0} &= w^{(i)}_j \left.\mean_{\Pi, \Mat{x} \sim \Set{D}(\Set{X})} \left[\partial_{\sigma^{(i)}} \ell\left(\wt{\sigma}_{\Pi}(\Mat{x}; \Mat{\vartheta}^{(1)}), \cdots, \wt{\sigma}_{\Pi}(\Mat{x}; \Mat{\vartheta}^{(n)}) \right) \nabla \sigma_{\Pi}(\Mat{x}; \Mat{\vartheta}^{(i)}_j)  \right] \right\vert_{\epsilon=0} \\
&= w^{(i)}_j \mean_{\Pi, \Mat{x} \sim \Set{D}(\Set{X})} \left[\partial_{\sigma^{(i)}} \ell\left(\sigma_{\Pi}(\Mat{x}; \Mat{\theta}^{(1)}), \cdots, \sigma_{\Pi}(\Mat{x}; \Mat{\theta}^{(n)}) \right)  \nabla \sigma_{\Pi}(\Mat{x}; \Mat{\theta}^{(i)}) \right] \\
&= w^{(i)}_j \partial_{\Mat{\theta}^{(i)}} \cL(\Mat{\theta}),
\end{align*}
where the last step is due to Lemma \ref{lem:derivatives_orig}.

Next we derive second-order derivatives.
Taking derivatives of Eq. \ref{eqn:d_vartheta_chain_split_all} in terms of $\Mat{\vartheta}^{(i')}_{j'}$ for some $i' \in [n], j' \in [m_{i'}]$, and by chain rule:
\begin{align*}
&\partial^2_{\Mat{\vartheta}^{(i)}_j\Mat{\vartheta}^{(i')}_{j'}} \cL(\Mat{\vartheta}, \Mat{w}) = \partial_{\Mat{\vartheta}^{(i')}_{j'}}  w^{(i)}_j \mean_{\Pi, \Mat{x} \sim \Set{D}(\Set{X})} \left[\partial_{\sigma^{(i)}} \ell\left(\wt{\sigma}_{\Pi}(\Mat{x}; \Mat{\vartheta}^{(1)}), \cdots, \wt{\sigma}_{\Pi}(\Mat{x}; \Mat{\vartheta}^{(n)}) \right) \nabla \sigma_{\Pi}(\Mat{x}; \Mat{\vartheta}^{(i)}_j)  \right] \\
&= w^{(i)}_j \mean_{\Pi, \Mat{x} \sim \Set{D}(\Set{X})} \left[\partial^2_{\sigma^{(i)}\sigma^{(i')}} \ell\left(\wt{\sigma}_{\Pi}(\Mat{x}; \Mat{\vartheta}^{(1)}), \cdots, \wt{\sigma}_{\Pi}(\Mat{x}; \Mat{\vartheta}^{(n)}) \right) \nabla \sigma_{\Pi}(\Mat{x}; \Mat{\vartheta}^{(i)}_j) \partial_{\Mat{\vartheta}^{(i')}_{j'}} \wt{\sigma}_{\Pi}(\Mat{x}; \Mat{\vartheta}^{(i')})^{\top} \right] \\
&\quad + w^{(i)}_j \mean_{\Pi, \Mat{x} \sim \Set{D}(\Set{X})} \left[\partial_{\sigma^{(i)}} \ell\left(\wt{\sigma}_{\Pi}(\Mat{x}; \Mat{\vartheta}^{(1)}), \cdots, \wt{\sigma}_{\Pi}(\Mat{x}; \Mat{\vartheta}^{(n)}) \right) \partial_{\Mat{\vartheta}^{(i')}_{j'}} \nabla \sigma_{\Pi}(\Mat{x}; \Mat{\vartheta}^{(i)}_j) \right] \\
&= w^{(i)}_j w^{(i')}_{j'} \mean_{\Pi, \Mat{x} \sim \Set{D}(\Set{X})} \left[\partial^2_{\sigma^{(i)}\sigma^{(i')}} \ell\left(\wt{\sigma}_{\Pi}(\Mat{x}; \Mat{\vartheta}^{(1)}), \cdots, \wt{\sigma}_{\Pi}(\Mat{x}; \Mat{\vartheta}^{(n)}) \right) \nabla \sigma_{\Pi}(\Mat{x}; \Mat{\vartheta}^{(i)}_j) \nabla \sigma_{\Pi}(\Mat{x}; \Mat{\vartheta}^{(i')}_{j'})^{\top} \right] \\
&\quad + w^{(i)}_j \mean_{\Pi, \Mat{x} \sim \Set{D}(\Set{X})} \left[\partial_{\sigma^{(i)}} \ell\left(\wt{\sigma}_{\Pi}(\Mat{x}; \Mat{\vartheta}^{(1)}), \cdots, \wt{\sigma}_{\Pi}(\Mat{x}; \Mat{\vartheta}^{(n)}) \right) \partial_{\Mat{\vartheta}^{(i')}_{j'}} \nabla \sigma_{\Pi}(\Mat{x}; \Mat{\vartheta}^{(i)}_j) \right].
\end{align*}
Now we discuss three scenarios:
\begin{enumerate}
\item When $i = i'$ and $j = j'$, $\partial_{\Mat{\vartheta}^{(i')}_{j'}} \nabla \sigma_{\Pi}(\Mat{x}; \Mat{\vartheta}^{(i)}_j) = \nabla^2 \sigma_{\Pi}(\Mat{x}; \Mat{\vartheta}^{(i)}_j)$, and then
\begin{align}
\partial^2_{\Mat{\vartheta}^{(i)}_j\Mat{\vartheta}^{(i)}_{j}} \cL(\Mat{\vartheta}, \Mat{w}) &= w^{(i)2}_j \mean_{\Pi, \Mat{x} \sim \Set{D}(\Set{X})} \left[\partial^2_{\sigma^{(i)}\sigma^{(i)}} \ell\left(\wt{\sigma}_{\Pi}(\Mat{x}; \Mat{\vartheta}^{(1)}), \cdots, \wt{\sigma}_{\Pi}(\Mat{x}; \Mat{\vartheta}^{(n)}) \right) \nabla \sigma_{\Pi}(\Mat{x}; \Mat{\vartheta}^{(i)}_j) \nabla \sigma_{\Pi}(\Mat{x}; \Mat{\vartheta}^{(i)}_j)^{\top} \right] \nonumber \\
&\quad + w^{(i)}_j \mean_{\Pi, \Mat{x} \sim \Set{D}(\Set{X})} \left[\partial_{\sigma^{(i)}} \ell\left(\wt{\sigma}_{\Pi}(\Mat{x}; \Mat{\vartheta}^{(1)}), \cdots, \wt{\sigma}_{\Pi}(\Mat{x}; \Mat{\vartheta}^{(n)}) \right) \nabla^2 \sigma_{\Pi}(\Mat{x}; \Mat{\vartheta}^{(i)}_j) \right] \label{eqn:d2_vartheta_chain_split_all_same_ij}
\end{align}

\item When $i = i'$ and $j \ne j'$, $\partial_{\Mat{\vartheta}^{(i')}_{j'}} \nabla \sigma_{\Pi}(\Mat{x}; \Mat{\vartheta}^{(i)}_j) = \Mat{0}$, and thus
\begin{align}
\partial^2_{\Mat{\vartheta}^{(i)}_j\Mat{\vartheta}^{(i)}_{j'}} \cL(\Mat{\vartheta}, \Mat{w}) &= w^{(i)}_j w^{(i)}_{j'} \mean_{\Pi, \Mat{x} \sim \Set{D}(\Set{X})} \left[\partial^2_{\sigma^{(i)}\sigma^{(i)}} \ell\left(\wt{\sigma}_{\Pi}(\Mat{x}; \Mat{\vartheta}^{(1)}), \cdots, \wt{\sigma}_{\Pi}(\Mat{x}; \Mat{\vartheta}^{(n)}) \right) \nabla \sigma_{\Pi}(\Mat{x}; \Mat{\vartheta}^{(i)}_j) \nabla \sigma_{\Pi}(\Mat{x}; \Mat{\vartheta}^{(i)}_{j'})^{\top} \right] \label{eqn:d2_vartheta_chain_split_all_same_i}
\end{align}

\item When $i \ne i'$, $\partial_{\Mat{\vartheta}^{(i')}_{j'}} \nabla \sigma_{\Pi}(\Mat{x}; \Mat{\vartheta}^{(i)}_j) = \Mat{0}$, and henceforth
\begin{align}
\partial^2_{\Mat{\vartheta}^{(i)}_j\Mat{\vartheta}^{(i')}_{j'}} \cL(\Mat{\vartheta}, \Mat{w}) &= w^{(i)}_j w^{(i')}_{j'} \mean_{\Pi, \Mat{x} \sim \Set{D}(\Set{X})} \left[\partial^2_{\sigma^{(i)}\sigma^{(i')}} \ell\left(\wt{\sigma}_{\Pi}(\Mat{x}; \Mat{\vartheta}^{(1)}), \cdots, \wt{\sigma}_{\Pi}(\Mat{x}; \Mat{\vartheta}^{(n)}) \right) \nabla \sigma_{\Pi}(\Mat{x}; \Mat{\vartheta}^{(i)}_j) \nabla \sigma_{\Pi}(\Mat{x}; \Mat{\vartheta}^{(i')}_{j'})^{\top} \right] \label{eqn:d2_vartheta_chain_split_all_no_same}
\end{align}
\end{enumerate}
Using this fact again: $\Mat{\vartheta}^{(i)}_j = \Mat{\theta}^{(i)}$ and $\wt{\sigma}_{\Pi}(\Mat{x}; \Mat{\vartheta}^{(i)}) = \sum_{j=1}^{m_i} w^{(i)}_j \sigma_{\Pi}(\Mat{x}; \Mat{\theta}^{(i)}) = \sigma_{\Pi}(\Mat{x}; \Mat{\theta}^{(i)})$ when $\epsilon = 0$, Eq. \ref{eqn:d2_vartheta_chain_split_all_same_ij} becomes:
\begin{align*}
\left.\partial^2_{\Mat{\vartheta}^{(i)}_j\Mat{\vartheta}^{(i)}_{j}} \cL(\Mat{\vartheta}, \Mat{w})\right\vert_{\epsilon=0} &= w^{(i)2}_j \mean_{\Pi, \Mat{x} \sim \Set{D}(\Set{X})} \left[\partial^2_{\sigma^{(i)}\sigma^{(i)}} \ell\left(\sigma_{\Pi}(\Mat{x}; \Mat{\theta}^{(1)}), \cdots, \sigma_{\Pi}(\Mat{x}; \Mat{\theta}^{(n)}) \right) \nabla \sigma_{\Pi}(\Mat{x}; \Mat{\theta}^{(i)}) \nabla \sigma_{\Pi}(\Mat{x}; \Mat{\theta}^{(i)})^{\top} \right] \\
&\quad + w^{(i)}_j \mean_{\Pi, \Mat{x} \sim \Set{D}(\Set{X})} \left[\partial_{\sigma^{(i)}} \ell\left(\sigma_{\Pi}(\Mat{x}; \Mat{\theta}^{(1)}), \cdots, \sigma_{\Pi}(\Mat{x}; \Mat{\theta}^{(n)}) \right) \nabla^2 \sigma_{\Pi}(\Mat{x}; \Mat{\theta}^{(i)}) \right] \\
&= w^{(i)2}_j \Mat{T}^{(i)}(\Mat{\theta}) + w^{(i)}_j \Mat{S}^{(i)}(\Mat{\theta}),
\end{align*}
Eq. \ref{eqn:d2_vartheta_chain_split_all_same_i} can be simplified as:
\begin{align*}
\left.\partial^2_{\Mat{\vartheta}^{(i)}_j\Mat{\vartheta}^{(i)}_{j'}} \cL(\Mat{\vartheta}, \Mat{w})\right\vert_{\epsilon=0} &= w^{(i)}_j w^{(i)}_{j'} \mean_{\Pi, \Mat{x} \sim \Set{D}(\Set{X})} \left[\partial^2_{\sigma^{(i)}\sigma^{(i)}} \ell\left(\sigma_{\Pi}(\Mat{x}; \Mat{\theta}^{(1)}), \cdots, \sigma_{\Pi}(\Mat{x}; \Mat{\theta}^{(n)}) \right) \nabla \sigma_{\Pi}(\Mat{x}; \Mat{\theta}^{(i)}) \nabla \sigma_{\Pi}(\Mat{x}; \Mat{\theta}^{(i)})^{\top} \right] \\
&= w^{(i)}_j w^{(i)}_{j'} \Mat{T}^{(i)}(\Mat{\theta}),
\end{align*}
and by Lemma \ref{lem:derivatives_orig}, Eq. \ref{eqn:d2_vartheta_chain_split_all_no_same} turns into:
\begin{align*}
\left.\partial^2_{\Mat{\vartheta}^{(i)}_j\Mat{\vartheta}^{(i')}_{j'}} \cL(\Mat{\vartheta}, \Mat{w})\right\vert_{\epsilon=0} &= w^{(i)}_j w^{(i')}_{j'} \mean_{\Pi, \Mat{x} \sim \Set{D}(\Set{X})} \left[\partial^2_{\sigma^{(i)}\sigma^{(i')}} \ell\left(\sigma_{\Pi}(\Mat{x}; \Mat{\theta}^{(1)}), \cdots, \sigma_{\Pi}(\Mat{x}; \Mat{\theta}^{(n)}) \right) \nabla \sigma_{\Pi}(\Mat{x}; \Mat{\theta}^{(i)}) \nabla \sigma_{\Pi}(\Mat{x}; \Mat{\theta}^{(i')})^{\top} \right] \\
&= w^{(i)}_j w^{(i')}_{j'} \partial^2_{\Mat{\vartheta}^{(i)}_j\Mat{\vartheta}^{(i')}_{j'}} \cL(\Mat{\theta}),
\end{align*}
all as desired.
\end{proof}

\begin{lemma} \label{lem:derivatives_split_one}
The following equalities hold for $\cL(\Mat{\theta}^{(\setminus i)}, \Mat{\vartheta}^{(i)}, \Mat{w}^{(i)})$ at $\epsilon = 0$ for any $i \in [n]$:
\begin{align}
\left.\partial_{\Mat{\vartheta}^{(i)}_j} \cL(\Mat{\theta}^{(\setminus i)}, \Mat{\vartheta}^{(i)}, \Mat{w}^{(i)}) \right\vert_{\epsilon=0} &= w^{(i)}_j \partial_{\Mat{\theta}^{(i)}} \cL(\Mat{\theta}), \quad \forall j \in [m_i], \\
\left.\partial^2_{\Mat{\vartheta}^{(i)}_j \Mat{\vartheta}^{(i)}_{j}} \cL(\Mat{\theta}^{(\setminus i)}, \Mat{\vartheta}^{(i)}, \Mat{w}^{(i)}) \right\vert_{\epsilon=0} &= w^{(i)}_j \Mat{S}^{(i)}(\Mat{\theta}) + w^{(i)2}_j \Mat{T}^{(i)}(\Mat{\theta}), \quad \forall j \in [m_i], \\
\left.\partial^2_{\Mat{\vartheta}^{(i)}_j \Mat{\vartheta}^{(i)}_{j'}} \cL(\Mat{\theta}^{(\setminus i)}, \Mat{\vartheta}^{(i)}, \Mat{w}^{(i)}) \right\vert_{\epsilon=0} &= w^{(i)}_j  w^{(i)}_{j'} \Mat{T}^{(i)}(\Mat{\theta}), \quad \forall j, j' \in [m_i], j \ne j',
\end{align}
where $\Mat{T}^{(i)}(\Mat{\theta}) = \mean_{\Pi, \Mat{x} \sim \Set{D}(\Set{X})} \left[\partial^2_{\sigma^{(i)}\sigma^{(i)}} \ell\left(\sigma_{\Pi}(\Mat{x}; \Mat{\theta}^{(1)}), \cdots, \sigma_{\Pi}(\Mat{x}; \Mat{\theta}^{(n)}) \right) \nabla \sigma_{\Pi}(\Mat{x}; \Mat{\theta}^{(i)}) \nabla \sigma_{\Pi}(\Mat{x}; \Mat{\theta}^{(i)})^{\top} \right]$ is as defined in Lemma \ref{lem:derivatives_orig}.
\end{lemma}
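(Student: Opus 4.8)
The plan is to observe that $\cL(\Mat{\theta}^{(\setminus i)}, \Mat{\vartheta}^{(i)}, \Mat{w}^{(i)})$ is precisely the special case of the fully-split loss $\cL(\Mat{\vartheta}, \Mat{w})$ analyzed in Lemma~\ref{lem:derivatives_split_all} in which every Gaussian other than the $i$-th is left unsplit, i.e. $m_{i'} = 1$, $w^{(i')}_1 = 1$, and $\Mat{\vartheta}^{(i')}_1 = \Mat{\theta}^{(i')}$ for all $i' \ne i$. Under this identification the three required identities are exactly the ``$i = i'$'' instances of Lemma~\ref{lem:derivatives_split_all}: the gradient formula is its first line, the $j = j'$ Hessian block its second line, and the $j \ne j'$ Hessian block its third line. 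So one option is simply to cite Lemma~\ref{lem:derivatives_split_all} after this substitution; the cross-Gaussian block (its fourth line) is never needed here.

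For a self-contained derivation I would instead replay the relevant slice of the chain-rule computation. Writing $\wt{\sigma}_{\Pi}(\Mat{x}; \Mat{\vartheta}^{(i)}) = \sum_{j=1}^{m_i} w^{(i)}_j \sigma_{\Pi}(\Mat{x}; \Mat{\vartheta}^{(i)}_j)$, the function $\cL(\Mat{\theta}^{(\setminus i)}, \Mat{\vartheta}^{(i)}, \Mat{w}^{(i)})$ is the expectation of $\ell$ evaluated with its $i$-th argument replaced by $\wt{\sigma}_{\Pi}(\Mat{x}; \Mat{\vartheta}^{(i)})$ and all other arguments left as $\sigma_{\Pi}(\Mat{x}; \Mat{\theta}^{(i')})$. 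Differentiating once in $\Mat{\vartheta}^{(i)}_j$ and using $\partial_{\Mat{\vartheta}^{(i)}_j} \wt{\sigma}_{\Pi}(\Mat{x}; \Mat{\vartheta}^{(i)}) = w^{(i)}_j \nabla \sigma_{\Pi}(\Mat{x}; \Mat{\vartheta}^{(i)}_j)$ pulls out a single factor $w^{(i)}_j$; setting $\epsilon = 0$ collapses every off-spring onto its parent ($\Mat{\vartheta}^{(i)}_j = \Mat{\theta}^{(i)}$, so $\wt{\sigma}_{\Pi}(\Mat{x}; \Mat{\vartheta}^{(i)}) = \sigma_{\Pi}(\Mat{x}; \Mat{\theta}^{(i)})$), and the surviving expectation is $\partial_{\Mat{\theta}^{(i)}}\cL(\Mat{\theta})$ by Lemma~\ref{lem:derivatives_orig}, which is the first identity.

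For the second derivative I would differentiate once more in $\Mat{\vartheta}^{(i)}_{j'}$; the product rule gives a term with $\partial^2_{\sigma^{(i)}\sigma^{(i)}}\ell$ carrying both outer chain-rule factors $w^{(i)}_j w^{(i)}_{j'}$, plus a term with $\partial_{\sigma^{(i)}}\ell$ times $\partial_{\Mat{\vartheta}^{(i)}_{j'}} \nabla \sigma_{\Pi}(\Mat{x}; \Mat{\vartheta}^{(i)}_j)$. The point to record is that distinct off-springs are independent parameters, so this mixed derivative is $\Mat{0}$ when $j \ne j'$ and equals $\nabla^2 \sigma_{\Pi}(\Mat{x}; \Mat{\vartheta}^{(i)}_j)$ (with the factor $w^{(i)}_j$ from the inner chain rule) when $j = j'$. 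Evaluating at $\epsilon = 0$ and reading off the definitions of $\Mat{T}^{(i)}(\Mat{\theta})$ (the $\partial^2\ell \cdot \nabla\sigma\,\nabla\sigma^\top$ piece) and $\Mat{S}^{(i)}(\Mat{\theta})$ (the $\partial\ell \cdot \nabla^2\sigma$ piece) then gives $w^{(i)}_j \Mat{S}^{(i)}(\Mat{\theta}) + w^{(i)2}_j \Mat{T}^{(i)}(\Mat{\theta})$ for $j = j'$ and $w^{(i)}_j w^{(i)}_{j'} \Mat{T}^{(i)}(\Mat{\theta})$ for $j \ne j'$. I do not expect a genuine obstacle: this is a direct transcription of the single-Gaussian restriction of the Lemma~\ref{lem:derivatives_split_all} argument, and the only thing needing care is bookkeeping of the $w$ factors together with the vanishing of the cross-offspring second derivative of $\sigma_{\Pi}$.
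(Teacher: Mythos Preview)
Your proposal is correct and follows essentially the same approach as the paper: the paper itself opens by stating that the proof is identical to that of Lemma~\ref{lem:derivatives_split_all} and then replays the chain-rule computation (define $\wt{\sigma}_{\Pi}$, differentiate once, evaluate at $\epsilon=0$ via Lemma~\ref{lem:derivatives_orig}, differentiate again and split into $j=j'$ and $j\ne j'$). Your opening observation that the single-split loss is literally the $m_{i'}=1$, $w^{(i')}_1=1$ specialization of the fully-split loss, so that the three identities are the ``$i=i'$'' lines of Lemma~\ref{lem:derivatives_split_all}, is a clean shortcut the paper gestures at but does not spell out.
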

\begin{proof}
The proof is identical to Lemma \ref{lem:derivatives_split_all}.
We outline the details for completeness.
Let $\wt{\sigma}_{\Pi}(\Mat{x}; \Mat{\vartheta}^{(i)}) = \sum_{j=1}^{m_i} w^{(i)}_j \sigma_{\Pi}(\Mat{x}; \Mat{\vartheta}^{(i)}_j)$ and we can express $\cL(\Mat{\theta}^{(\setminus i)}, \Mat{\vartheta}^{(i)}, \Mat{w}^{(i)})$ as:
\begin{align}
\cL(\Mat{\theta}^{(\setminus i)}, \Mat{\vartheta}^{(i)}, \Mat{w}^{(i)}) = \mean_{\Pi, \Mat{x} \sim \Set{D}(\Set{X})} \left[\ell\left(\sigma_{\Pi}(\Mat{x}; \Mat{\theta}^{(1)}), \cdots, \wt{\sigma}_{\Pi}(\Mat{x}; \Mat{\vartheta}^{(i)}), \cdots, \sigma_{\Pi}(\Mat{x}; \Mat{\theta}^{(n)}) \right) \right].
\end{align}
By chain rule, for every $j \in [m_i]$:
\begin{align}
&\partial_{\Mat{\vartheta}^{(i)}_j} \cL(\Mat{\theta}^{(\setminus i)}, \Mat{\vartheta}^{(i)}, \Mat{w}^{(i)})
= \partial_{\Mat{\vartheta}^{(i)}_j} \mean_{\Pi, \Mat{x} \sim \Set{D}(\Set{X})} \left[\ell\left( \sigma_{\Pi}(\Mat{x}; \Mat{\theta}^{(1)}), \cdots, \wt{\sigma}_{\Pi}(\Mat{x}; \Mat{\vartheta}^{(i)}), \cdots, \sigma_{\Pi}(\Mat{x}; \Mat{\theta}^{(n)}) \right) \right] \nonumber \\
&= \mean_{\Pi, \Mat{x} \sim \Set{D}(\Set{X})} \left[\partial_{\sigma^{(i)}} \ell\left(\sigma_{\Pi}(\Mat{x}; \Mat{\theta}^{(1)}), \cdots, \wt{\sigma}_{\Pi}(\Mat{x}; \Mat{\vartheta}^{(i)}), \cdots, \sigma_{\Pi}(\Mat{x}; \Mat{\theta}^{(n)}) \right) \partial_{\Mat{\vartheta}^{(i)}_{j}} \wt{\sigma}_{\Pi}(\Mat{x}; \Mat{\vartheta}^{(i)})  \right] \nonumber \\
&= w^{(i)}_j \mean_{\Pi, \Mat{x} \sim \Set{D}(\Set{X})} \left[\partial_{\sigma^{(i)}} \ell\left(\sigma_{\Pi}(\Mat{x}; \Mat{\theta}^{(1)}), \cdots, \wt{\sigma}_{\Pi}(\Mat{x}; \Mat{\vartheta}^{(i)}), \cdots, \sigma_{\Pi}(\Mat{x}; \Mat{\theta}^{(n)}) \right) \nabla \sigma_{\Pi}(\Mat{x}; \Mat{\vartheta}^{(i)}_j)  \right]. \label{eqn:d_vartheta_chain_split_one}
\end{align}
Using the fact that $\Mat{\vartheta}^{(i)}_j = \Mat{\theta}^{(i)}$ and $\wt{\sigma}_{\Pi}(\Mat{x}; \Mat{\vartheta}^{(i)}) = \sum_{j=1}^{m_i} w^{(i)}_j \sigma_{\Pi}(\Mat{x}; \Mat{\theta}^{(i)}) = \sigma_{\Pi}(\Mat{x}; \Mat{\theta}^{(i)})$ when $\epsilon = 0$, Eq. \ref{eqn:d_vartheta_chain_split_one} can be rewritten as:
\begin{align*}
\left.\partial_{\Mat{\vartheta}^{(i)}_j} \cL(\Mat{\theta}^{(\setminus i)}, \Mat{\vartheta}^{(i)}, \Mat{w}^{(i)}) \right\vert_{\epsilon=0} &= w^{(i)}_j \left.\mean_{\Pi, \Mat{x} \sim \Set{D}(\Set{X})} \left[\partial_{\sigma^{(i)}} \ell\left(\sigma_{\Pi}(\Mat{x}; \Mat{\theta}^{(1)}), \cdots, \wt{\sigma}_{\Pi}(\Mat{x}; \Mat{\vartheta}^{(i)}), \cdots, \sigma_{\Pi}(\Mat{x}; \Mat{\theta}^{(n)}) \right) \nabla \sigma_{\Pi}(\Mat{x}; \Mat{\vartheta}^{(i)}_j)  \right] \right\vert_{\epsilon=0} \\
&= w^{(i)}_j \mean_{\Pi, \Mat{x} \sim \Set{D}(\Set{X})} \left[\partial_{\sigma^{(i)}} \ell\left(\sigma_{\Pi}(\Mat{x}; \Mat{\theta}^{(1)}), \cdots, \sigma_{\Pi}(\Mat{x}; \Mat{\theta}^{(i)}), \cdots, \sigma_{\Pi}(\Mat{x}; \Mat{\theta}^{(n)}) \right)  \nabla \sigma_{\Pi}(\Mat{x}; \Mat{\theta}^{(i)}) \right] \\
&= w^{(i)}_j \partial_{\Mat{\theta}^{(i)}} \cL(\Mat{\theta}),
\end{align*}
where the last step is due to Lemma \ref{lem:derivatives_orig}.

Next we derive second-order derivatives.
Taking derivatives of Eq. \ref{eqn:d_vartheta_chain_split_one} in terms of $\Mat{\vartheta}^{(i)}_{j'}$ for some $j' \in [m_i]$, and by chain rule:
\begin{align*}
&\partial^2_{\Mat{\vartheta}^{(i)}_j\Mat{\vartheta}^{(i)}_{j'}} \cL(\Mat{\theta}^{(\setminus i)}, \Mat{\vartheta}^{(i)}, \Mat{w}^{(i)}) = \partial_{\Mat{\vartheta}^{(i)}_{j'}}  w^{(i)}_j \mean_{\Pi, \Mat{x} \sim \Set{D}(\Set{X})} \left[\partial_{\sigma^{(i)}} \ell\left(\cdots, \wt{\sigma}_{\Pi}(\Mat{x}; \Mat{\vartheta}^{(i)}), \cdots\right) \nabla \sigma_{\Pi}(\Mat{x}; \Mat{\vartheta}^{(i)}_j)  \right] \\
&= w^{(i)}_j \mean_{\Pi, \Mat{x} \sim \Set{D}(\Set{X})} \left[\partial^2_{\sigma^{(i)}\sigma^{(i)}} \ell\left(\cdots, \wt{\sigma}_{\Pi}(\Mat{x}; \Mat{\vartheta}^{(i)}), \cdots\right) \nabla \sigma_{\Pi}(\Mat{x}; \Mat{\vartheta}^{(i)}_j) \partial_{\Mat{\vartheta}^{(i)}_{j'}} \wt{\sigma}_{\Pi}(\Mat{x}; \Mat{\vartheta}^{(i')})^{\top} \right] \\
&\quad + w^{(i)}_j \mean_{\Pi, \Mat{x} \sim \Set{D}(\Set{X})} \left[\partial_{\sigma^{(i)}} \ell\left(\cdots, \wt{\sigma}_{\Pi}(\Mat{x}; \Mat{\vartheta}^{(i)}), \cdots\right) \partial_{\Mat{\vartheta}^{(i)}_{j'}} \nabla \sigma_{\Pi}(\Mat{x}; \Mat{\vartheta}^{(i)}_j) \right] \\
&= w^{(i)}_j w^{(i)}_{j'} \mean_{\Pi, \Mat{x} \sim \Set{D}(\Set{X})} \left[\partial^2_{\sigma^{(i)}\sigma^{(i)}} \ell\left(\cdots, \wt{\sigma}_{\Pi}(\Mat{x}; \Mat{\vartheta}^{(i)}), \cdots\right) \nabla \sigma_{\Pi}(\Mat{x}; \Mat{\vartheta}^{(i)}_j) \nabla \sigma_{\Pi}(\Mat{x}; \Mat{\vartheta}^{(i)}_{j'})^{\top} \right] \\
&\quad + w^{(i)}_j \mean_{\Pi, \Mat{x} \sim \Set{D}(\Set{X})} \left[\partial_{\sigma^{(i)}} \ell\left(\cdots, \wt{\sigma}_{\Pi}(\Mat{x}; \Mat{\vartheta}^{(i)}), \cdots\right) \partial_{\Mat{\vartheta}^{(i)}_{j'}} \nabla \sigma_{\Pi}(\Mat{x}; \Mat{\vartheta}^{(i)}_j) \right].
\end{align*}
Now we consider two scenarios:
\begin{enumerate}
\item When $j = j'$, $\partial_{\Mat{\vartheta}^{(i)}_{j'}} \nabla \sigma_{\Pi}(\Mat{x}; \Mat{\vartheta}^{(i)}_j) = \nabla^2 \sigma_{\Pi}(\Mat{x}; \Mat{\vartheta}^{(i)}_j)$, and then
\begin{align}
\left.\partial^2_{\Mat{\vartheta}^{(i)}_j\Mat{\vartheta}^{(i)}_{j}} \cL(\Mat{\theta}^{(\setminus i)}, \Mat{\vartheta}^{(i)}, \Mat{w}^{(i)})\right\vert_{\epsilon=0} &= w^{(i)2}_j \mean_{\Pi, \Mat{x} \sim \Set{D}(\Set{X})} \left[\partial^2_{\sigma^{(i)}\sigma^{(i)}} \ell\left(\cdots, \wt{\sigma}_{\Pi}(\Mat{x}; \Mat{\vartheta}^{(i)}), \cdots\right) \nabla \sigma_{\Pi}(\Mat{x}; \Mat{\vartheta}^{(i)}_j) \nabla \sigma_{\Pi}(\Mat{x}; \Mat{\vartheta}^{(i)}_j)^{\top} \right] \nonumber \\
&\quad + w^{(i)}_j \mean_{\Pi, \Mat{x} \sim \Set{D}(\Set{X})} \left[\partial_{\sigma^{(i)}} \ell\left(\cdots, \wt{\sigma}_{\Pi}(\Mat{x}; \Mat{\vartheta}^{(i)}), \cdots\right) \nabla^2 \sigma_{\Pi}(\Mat{x}; \Mat{\vartheta}^{(i)}_j) \right] \label{eqn:d2_vartheta_chain_split_one_same_ij}
\end{align}

\item When $j \ne j'$, $\partial_{\Mat{\vartheta}^{(i)}_{j'}} \nabla \sigma_{\Pi}(\Mat{x}; \Mat{\vartheta}^{(i)}_j) = \Mat{0}$, and thus
\begin{align}
\left.\partial^2_{\Mat{\vartheta}^{(i)}_j\Mat{\vartheta}^{(i)}_{j'}} \cL(\Mat{\theta}^{(\setminus i)}, \Mat{\vartheta}^{(i)}, \Mat{w}^{(i)})\right\vert_{\epsilon=0} &= w^{(i)}_j w^{(i)}_{j'} \mean_{\Pi, \Mat{x} \sim \Set{D}(\Set{X})} \left[\partial^2_{\sigma^{(i)}\sigma^{(i)}} \ell\left(\cdots, \wt{\sigma}_{\Pi}(\Mat{x}; \Mat{\vartheta}^{(i)}), \cdots\right) \nabla \sigma_{\Pi}(\Mat{x}; \Mat{\vartheta}^{(i)}_j) \nabla \sigma_{\Pi}(\Mat{x}; \Mat{\vartheta}^{(i)}_{j'})^{\top} \right] \label{eqn:d2_vartheta_chain_split_one_same_i}
\end{align}
\end{enumerate}
Using this fact again: $\Mat{\vartheta}^{(i)}_j = \Mat{\theta}^{(i)}$ and $\wt{\sigma}_{\Pi}(\Mat{x}; \Mat{\vartheta}^{(i)}) = \sum_{j=1}^{m_i} w^{(i)}_j \sigma_{\Pi}(\Mat{x}; \Mat{\theta}^{(i)}) = \sigma_{\Pi}(\Mat{x}; \Mat{\theta}^{(i)})$ when $\epsilon = 0$, Eq. \ref{eqn:d2_vartheta_chain_split_one_same_ij} becomes:
\begin{align*}
\partial^2_{\Mat{\vartheta}^{(i)}_j\Mat{\vartheta}^{(i)}_{j}} \cL(\Mat{\theta}^{(\setminus i)}, \Mat{\vartheta}^{(i)}, \Mat{w}^{(i)}) &= w^{(i)2}_j \mean_{\Pi, \Mat{x} \sim \Set{D}(\Set{X})} \left[\partial^2_{\sigma^{(i)}\sigma^{(i)}} \ell\left(\cdots, \sigma_{\Pi}(\Mat{x}; \Mat{\theta}^{(i)}), \cdots\right) \nabla \sigma_{\Pi}(\Mat{x}; \Mat{\theta}^{(i)}) \nabla \sigma_{\Pi}(\Mat{x}; \Mat{\theta}^{(i)})^{\top} \right] \\
&\quad + w^{(i)}_j \mean_{\Pi, \Mat{x} \sim \Set{D}(\Set{X})} \left[\partial_{\sigma^{(i)}} \ell\left(\cdots, \sigma_{\Pi}(\Mat{x}; \Mat{\theta}^{(i)}), \cdots\right) \nabla^2 \sigma_{\Pi}(\Mat{x}; \Mat{\theta}^{(i)}) \right] \\
&= w^{(i)2}_j \Mat{T}^{(i)}(\Mat{\theta}) + w^{(i)}_j \Mat{S}^{(i)}(\Mat{\theta}),
\end{align*}
and Eq. \ref{eqn:d2_vartheta_chain_split_one_same_i} can be simplified as:
\begin{align*}
\partial^2_{\Mat{\vartheta}^{(i)}_j\Mat{\vartheta}^{(i)}_{j'}} \cL(\Mat{\theta}^{(\setminus i)}, \Mat{\vartheta}^{(i)}, \Mat{w}^{(i)}) &= w^{(i)}_j w^{(i)}_{j'} \mean_{\Pi, \Mat{x} \sim \Set{D}(\Set{X})} \left[\partial^2_{\sigma^{(i)}\sigma^{(i)}} \ell\left(\cdots, \sigma_{\Pi}(\Mat{x}; \Mat{\theta}^{(i)}), \cdots\right) \nabla \sigma_{\Pi}(\Mat{x}; \Mat{\theta}^{(i)}) \nabla \sigma_{\Pi}(\Mat{x}; \Mat{\theta}^{(i)})^{\top} \right] \\
&= w^{(i)}_j w^{(i)}_{j'} \Mat{T}^{(i)}(\Mat{\theta}),
\end{align*}
both as desired.
\end{proof}

\begin{lemma} \label{lem:split_one_vs_split_all}
Assume $\cL(\Mat{\vartheta}, \Mat{w})$ has bounded third-order derivatives with respect to $\Mat{\vartheta}$, then we have
\begin{align*}
\left(\cL(\Mat{\vartheta}, \Mat{w}) - \cL(\Mat{\theta})\right) =
\sum_{i=1}^{n} \left(\cL(\Mat{\theta}^{(\setminus i)}, \Mat{\vartheta}^{(i)}, \Mat{w}^{(i)}) - \cL(\Mat{\theta}) \right) + \frac{\epsilon^2}{2} \sum_{\substack{i, i' \in [n] \\ i \ne i'}} \Mat{\mu}^{(i)\top} \partial^2_{\Mat{\theta}^{(i)} \Mat{\theta}^{(i')}} \cL(\Mat{\theta}) \Mat{\mu}^{(i')} + \bigO(\epsilon^3),
\end{align*}
where $\cL(\Mat{\theta}^{(\setminus i)}, \Mat{\vartheta}^{(i)}, \Mat{w}^{(i)})$ and $\Mat{\mu}^{(i)}$ are as defined in Theorem \ref{thm:sec_ord_approx}.
\end{lemma}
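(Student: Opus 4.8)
The plan is to Taylor-expand both sides to second order in $\epsilon$ and match terms, exploiting the fact that at $\epsilon = 0$ every off-spring collapses onto its parent, so that $\cL(\Mat{\vartheta}, \Mat{w})|_{\epsilon=0} = \cL(\Mat{\theta}^{(\setminus i)}, \Mat{\vartheta}^{(i)}, \Mat{w}^{(i)})|_{\epsilon=0} = \cL(\Mat{\theta})$. Concretely, parameterize $\Mat{\vartheta}^{(i)}_j = \Mat{\theta}^{(i)} + \epsilon \Mat{\Delta}^{(i)}_j$ with $\Mat{\Delta}^{(i)}_j \triangleq \Mat{\mu}^{(i)} + \Mat{\delta}^{(i)}_j$, and regard $f(\epsilon) \triangleq \cL(\Mat{\vartheta}, \Mat{w})$ and $g_i(\epsilon) \triangleq \cL(\Mat{\theta}^{(\setminus i)}, \Mat{\vartheta}^{(i)}, \Mat{w}^{(i)})$ as scalar functions of $\epsilon$. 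The bounded-third-derivative hypothesis together with the constraint $\lVert \Mat{\vartheta}^{(i)}_j - \Mat{\theta}^{(i)} \rVert \le \epsilon$ supplies a uniform Lagrange remainder $\bigO(\epsilon^3)$ for $f$, and since there are only $n$ functions $g_i$, their remainders likewise sum to $\bigO(\epsilon^3)$.

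Next I would read off the linear and quadratic coefficients by the chain rule: $f'(0) = \sum_{i,j} \partial_{\Mat{\vartheta}^{(i)}_j}\cL(\Mat{\vartheta},\Mat{w})|_{\epsilon=0}^\top \Mat{\Delta}^{(i)}_j$, with an analogous expression for $f''(0)$, and similarly for each $g_i$. This is where Lemma~\ref{lem:derivatives_split_all} and Lemma~\ref{lem:derivatives_split_one} do the work: they show the first-order derivatives agree, $\partial_{\Mat{\vartheta}^{(i)}_j}\cL(\Mat{\vartheta},\Mat{w})|_{\epsilon=0} = \partial_{\Mat{\vartheta}^{(i)}_j}\cL(\Mat{\theta}^{(\setminus i)},\Mat{\vartheta}^{(i)},\Mat{w}^{(i)})|_{\epsilon=0} = w^{(i)}_j \partial_{\Mat{\theta}^{(i)}}\cL(\Mat{\theta})$, and that the block-diagonal second-order derivatives — those indexed by two off-springs of the \emph{same} parent $i$ — also coincide between $f$ and $g_i$. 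Hence $f'(0) = \sum_i g_i'(0)$, and the portion of $f''(0)$ arising from index pairs with $i = i'$ equals $\sum_i g_i''(0)$.

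The only residual piece is the off-diagonal block of $f''(0)$, namely the terms with $i \ne i'$. By Lemma~\ref{lem:derivatives_split_all}, $\partial^2_{\Mat{\vartheta}^{(i)}_j \Mat{\vartheta}^{(i')}_{j'}}\cL(\Mat{\vartheta},\Mat{w})|_{\epsilon=0} = w^{(i)}_j w^{(i')}_{j'}\, \partial^2_{\Mat{\theta}^{(i)}\Mat{\theta}^{(i')}}\cL(\Mat{\theta})$, so this contribution is $\sum_{i \ne i'}\sum_{j,j'} w^{(i)}_j w^{(i')}_{j'} \Mat{\Delta}^{(i)\top}_j \partial^2_{\Mat{\theta}^{(i)}\Mat{\theta}^{(i')}}\cL(\Mat{\theta})\, \Mat{\Delta}^{(i')}_{j'}$. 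Now invoke the two normalization identities $\sum_{j=1}^{m_i} w^{(i)}_j = 1$ and $\sum_{j=1}^{m_i} w^{(i)}_j \Mat{\delta}^{(i)}_j = \Mat{0}$ (Eq.~\ref{eqn:delta_sum_zero}) to collapse $\sum_j w^{(i)}_j \Mat{\Delta}^{(i)}_j = \Mat{\mu}^{(i)}$, reducing the double sum to $\sum_{i \ne i'} \Mat{\mu}^{(i)\top} \partial^2_{\Mat{\theta}^{(i)}\Mat{\theta}^{(i')}}\cL(\Mat{\theta})\, \Mat{\mu}^{(i')}$. Assembling, $f(\epsilon) - \cL(\Mat{\theta}) = \sum_i\bigl(\epsilon g_i'(0) + \tfrac{\epsilon^2}{2}g_i''(0)\bigr) + \tfrac{\epsilon^2}{2}\sum_{i \ne i'}\Mat{\mu}^{(i)\top}\partial^2_{\Mat{\theta}^{(i)}\Mat{\theta}^{(i')}}\cL(\Mat{\theta})\,\Mat{\mu}^{(i')} + \bigO(\epsilon^3)$, and recognizing $\epsilon g_i'(0) + \tfrac{\epsilon^2}{2}g_i''(0) = g_i(\epsilon) - \cL(\Mat{\theta}) + \bigO(\epsilon^3)$ gives precisely the stated identity.

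I do not anticipate a substantive obstacle: the statement is fundamentally a careful accounting of which second-order cross-terms survive when one splits all Gaussians simultaneously versus one at a time. The only points demanding care are (i) ensuring the $\bigO(\epsilon^3)$ remainder is genuinely uniform over the fixed, finitely many directions $\Mat{\Delta}^{(i)}_j$ — which the bounded-third-derivative assumption guarantees — and (ii) bookkeeping the index sets so that the ``same-parent'' second-order terms are cleanly isolated from the ``different-parent'' ones before applying the weight normalizations.
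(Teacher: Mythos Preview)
Your proposal is correct and follows essentially the same approach as the paper: the paper defines $F(\epsilon) \triangleq f(\epsilon) - \cL(\Mat{\theta}) - \sum_i (g_i(\epsilon) - \cL(\Mat{\theta}))$ and Taylor-expands it directly, whereas you expand $f$ and each $g_i$ separately and then match, but the substance is identical --- invoking Lemmas~\ref{lem:derivatives_split_all} and~\ref{lem:derivatives_split_one} to cancel the first-order and same-parent second-order terms, then collapsing the surviving $i \ne i'$ cross-block via $\sum_j w^{(i)}_j \Mat{\Delta}^{(i)}_j = \Mat{\mu}^{(i)}$.
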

\begin{proof}
Define an auxiliary function:
\begin{align*}
F(\epsilon) = \left(\cL(\Mat{\vartheta}, \Mat{w}) - \cL(\Mat{\theta})\right) -
\sum_{i=1}^{n} \left(\cL(\Mat{\theta}^{(\setminus i)}, \Mat{\vartheta}^{(i)}, \Mat{w}^{(i)}) - \cL(\Mat{\theta}) \right).
\end{align*}
Note that $F(\epsilon)$ also has bounded third-order derivatives.
Hence, by Taylor expansion:
\begin{align}
F(\epsilon) = F(0) + \epsilon \frac{d}{d \epsilon} F(0) + \frac{\epsilon^2}{2} \frac{d^2}{d \epsilon^2}F(0) + \bigO(\epsilon^3). \label{eqn:F_taylor_expand}
\end{align}
Compute the first-order derivatives of $F$ via path derivatives, we can derive
\begin{align}
&\frac{d}{d \epsilon} \left(\cL(\Mat{\vartheta}, \Mat{w}) - \cL(\Mat{\theta})\right) = \sum_{i=1}^{n}\sum_{j=1}^{m_i} \partial_{\Mat{\vartheta}^{(i)}_j} \cL(\Mat{\vartheta}, \Mat{w})^\top \frac{d \Mat{\vartheta}^{(i)}_j}{d \epsilon}
= \sum_{i=1}^{n}\sum_{j=1}^{m_i} \partial_{\Mat{\vartheta}^{(i)}_j} \cL(\Mat{\vartheta}, \Mat{w})^\top \Mat{\Delta}^{(i)}_j, \label{eqn:d_eps_split_all} 
\end{align}
and for every $i \in [n]$:
\begin{align}
\frac{d}{d \epsilon} \left(\cL(\Mat{\theta}^{(\setminus i)}, \Mat{\vartheta}^{(i)}, \Mat{w}^{(i)}) - \cL(\Mat{\theta})\right) &= \sum_{j=1}^{m_i} \partial_{\Mat{\vartheta}^{(i)}_j} \cL(\Mat{\theta}^{(\setminus i)}, \Mat{\vartheta}^{(i)}, \Mat{w}^{(i)})^\top \frac{d \Mat{\vartheta}^{(i)}_j}{d \epsilon} \nonumber \\
&= \sum_{j=1}^{m_i} \partial_{\Mat{\vartheta}^{(i)}_j} \cL(\Mat{\theta}^{(\setminus i)}, \Mat{\vartheta}^{(i)}, \Mat{w}^{(i)})^\top \Mat{\Delta}^{(i)}_j, \label{eqn:d_eps_split_one}
\end{align}
By Lemma \ref{lem:derivatives_split_all} and Lemma \ref{lem:derivatives_split_one}, $\left.\partial_{\Mat{\vartheta}^{(i)}_j} \cL(\Mat{\theta}^{(\setminus i)}, \Mat{\vartheta}^{(i)}, \Mat{w}^{(i)}) \right\vert_{\epsilon=0} = \left.\partial_{\Mat{\vartheta}^{(i)}_j} \cL(\Mat{\vartheta}, \Mat{w})\right\vert_{\epsilon=0}$, hence combining Eq. \ref{eqn:d_eps_split_all} and \ref{eqn:d_eps_split_one}:
\begin{align}
\frac{d}{d\epsilon}F(0) = \left.\left[\sum_{i=1}^{n}\sum_{j=1}^{m_i} \partial_{\Mat{\vartheta}^{(i)}_j} \cL(\Mat{\vartheta}, \Mat{w})^\top \Mat{\Delta}^{(i)}_j - \sum_{i=1}^{n}\sum_{j=1}^{m_i} \partial_{\Mat{\vartheta}^{(i)}_j} \cL(\Mat{\theta}^{(\setminus i)}, \Mat{\vartheta}^{(i)}, \Mat{w}^{(i)})^\top \Mat{\Delta}^{(i)}_j  \right]\right\vert_{\epsilon=0}
= 0. \label{eqn:d_F_epsilon}
\end{align}
We can also compute the second-order derivatives via path derivatives:
\begin{align}
\frac{d^2}{d \epsilon^2} \left(\cL(\Mat{\vartheta}, \Mat{w}) - \cL(\Mat{\theta})\right)  &= \frac{d}{d \epsilon} \sum_{i=1}^{n}\sum_{j=1}^{m_i} \partial_{\Mat{\vartheta}^{(i)}_j} \cL(\Mat{\vartheta}, \Mat{w})^\top \Mat{\Delta}^{(i)}_j  \nonumber \\
&= \sum_{i=1}^{n} \sum_{i'=1}^{n} \sum_{j=1}^{m_i} \sum_{j'=1}^{m_{i'}} \Mat{\Delta}^{(i)\top}_j \partial^2_{\Mat{\vartheta}^{(i)}_j\Mat{\vartheta}^{(i')}_{j'}} \cL(\Mat{\vartheta}, \Mat{w})  \frac{d \Mat{\vartheta}^{(i')}_{j'}}{d \epsilon} \nonumber \\
&= \sum_{i=1}^{n} \sum_{i'=1}^{n} \sum_{j=1}^{m_i} \sum_{j'=1}^{m_{i'}} \Mat{\Delta}^{(i)\top}_j \partial^2_{\Mat{\vartheta}^{(i)}_j\Mat{\vartheta}^{(i')}_{j'}} \cL(\Mat{\vartheta}, \Mat{w}) \Mat{\Delta}^{(i')}_{j'},
\label{eqn:d2_eps_split_all}
\end{align}
and similarly for every $i \in [n]$,
\begin{align}
\frac{d^2}{d \epsilon^2} \left(\cL(\Mat{\theta}^{(\setminus i)}, \Mat{\vartheta}^{(i)}, \Mat{w}^{(i)}) - \cL(\Mat{\theta})\right)  &= \frac{d}{d \epsilon} \sum_{j=1}^{m_i} \partial_{\Mat{\vartheta}^{(i)}_j} \cL(\Mat{\theta}^{(\setminus i)}, \Mat{\vartheta}^{(i)}, \Mat{w}^{(i)})^\top \Mat{\Delta}^{(i)}_j  \nonumber \\
&= \sum_{j=1}^{m_i} \sum_{j'=1}^{m_i} \Mat{\Delta}^{(i)\top}_j \partial^2_{\Mat{\vartheta}^{(i)}_j\Mat{\vartheta}^{(i)}_{j'}} \cL(\Mat{\theta}^{(\setminus i)}, \Mat{\vartheta}^{(i)}, \Mat{w}^{(i)}) \frac{d \Mat{\vartheta}^{(i)}_{j'}}{d \epsilon} \nonumber \\
&= \sum_{j=1}^{m_i} \sum_{j'=1}^{m_i} \Mat{\Delta}^{(i)^\top}_j \partial^2_{\Mat{\vartheta}^{(i)}_j\Mat{\vartheta}^{(i)}_{j'}} \cL(\Mat{\theta}^{(\setminus i)}, \Mat{\vartheta}^{(i)}, \Mat{w}^{(i)}) \Mat{\Delta}^{(i)}_{j'}.
\label{eqn:d2_eps_split_one}
\end{align}
By Lemma \ref{lem:derivatives_split_all} and Lemma \ref{lem:derivatives_split_one}, $\left.\partial^2_{\Mat{\vartheta}^{(i)}_j\Mat{\vartheta}^{(i)}_{j'}} \cL(\Mat{\theta}^{(\setminus i)}, \Mat{\vartheta}^{(i)}, \Mat{w}^{(i)}) \right\vert_{\epsilon=0} = \left.\partial^2_{\Mat{\vartheta}^{(i)}_j\Mat{\vartheta}^{(i)}_{j'}} \cL(\Mat{\vartheta}, \Mat{w})\right\vert_{\epsilon=0}$ for any $i \in [n]$ and $j, j' \in [m_i]$, hence we can cancel all terms in Eq. \ref{eqn:d2_eps_split_one} by:
\begin{align}
\frac{d^2}{d\epsilon^2}F(0) &= \left.\left[\sum_{i,i' \in [n]} \sum_{j=1}^{m_i} \sum_{j'=1}^{m_{i'}} \Mat{\Delta}^{(i)\top}_j \partial^2_{\Mat{\vartheta}^{(i)}_j\Mat{\vartheta}^{(i')}_{j'}} \cL(\Mat{\vartheta}, \Mat{w}) \Mat{\Delta}^{(i')}_{j'} - \sum_{i=1}^{n}\sum_{j=1}^{m_i} \sum_{j'=1}^{m_i} \Mat{\Delta}^{(i)^\top}_j \partial^2_{\Mat{\vartheta}^{(i)}_j\Mat{\vartheta}^{(i)}_{j'}} \cL(\Mat{\theta}^{(\setminus i)}, \Mat{\vartheta}^{(i)}, \Mat{w}^{(i)}) \Mat{\Delta}^{(i)}_{j'}  \right]\right\vert_{\epsilon=0} \nonumber \\
&= \left.\left[\sum_{i=1}^{n} \sum_{j=1}^{m_i} \sum_{j'=1}^{m_{i'}} \Mat{\Delta}^{(i)\top}_j \left(\partial^2_{\Mat{\vartheta}^{(i)}_j\Mat{\vartheta}^{(i)}_{j'}} \cL(\Mat{\vartheta}, \Mat{w}) - \partial^2_{\Mat{\vartheta}^{(i)}_j\Mat{\vartheta}^{(i)}_{j'}} \cL(\Mat{\theta}^{(\setminus i)}, \Mat{\vartheta}^{(i)}, \Mat{w}^{(i)}) \right) \Mat{\Delta}^{(i)}_{j'}  \right]\right\vert_{\epsilon=0} \nonumber \\
&\quad +\left.\left[\sum_{i \ne i'} \sum_{j=1}^{m_i} \sum_{j'=1}^{m_{i'}} \Mat{\Delta}^{(i)\top}_j \partial^2_{\Mat{\vartheta}^{(i)}_j\Mat{\vartheta}^{(i')}_{j'}} \cL(\Mat{\vartheta}, \Mat{w}) \Mat{\Delta}^{(i')}_{j'}  \right]\right\vert_{\epsilon=0} \nonumber \\
&= \sum_{i \ne i'} \sum_{j=1}^{m_i} \sum_{j'=1}^{m_{i'}} w^{(i)}_j w^{(i')}_{j'} \Mat{\Delta}^{(i)\top}_j \partial^2_{\Mat{\vartheta}^{(i)}_j\Mat{\vartheta}^{(i')}_{j'}} \cL(\Mat{\theta}) \Mat{\Delta}^{(i')}_{j'} \nonumber \\
&= \sum_{i \ne i'} \sum_{j=1}^{m_i} \sum_{j'=1}^{m_{i'}} \Mat{\mu}^{(i)\top} \partial^2_{\Mat{\vartheta}^{(i)}_j\Mat{\vartheta}^{(i')}_{j'}} \cL(\Mat{\theta}) \Mat{\mu}^{(i')}, \label{eqn:d2_F_epsilon}
\end{align}
where we use Eq. \ref{eqn:derivatives_split_all_cross} in Lemma \ref{lem:derivatives_split_all} for the last second equality, and we use the fact: $\sum_{j=1}^{m_i} w^{(i)}_j \Mat{\Delta}^{(i)}_j = \Mat{\mu}^{(i)}$ to get the last equality.
Merging Eq. \ref{eqn:F_taylor_expand}, \ref{eqn:d_F_epsilon}, \ref{eqn:d2_F_epsilon}, we obtain the result as desired. 
\end{proof}

\begin{lemma} \label{lem:loss_decomp_split_one}
Assume $\cL(\Mat{\vartheta}, \Mat{w})$ has bounded third-order derivatives with respect to $\Mat{\vartheta}$, then we have
\begin{align*}
\cL(\Mat{\theta}^{(\setminus i)}, \Mat{\vartheta}^{(i)}, \Mat{w}^{(i)}) - \cL(\Mat{\theta}) &= \epsilon \partial_{\Mat{\theta}^{(i)}} \cL(\Mat{\theta})^{\top} \Mat{\mu}^{(i)} + \frac{\epsilon^2}{2} \Mat{\mu}^{(i)\top} 
\partial^2_{\Mat{\theta}^{(i)} \Mat{\theta}^{(i)}} \cL(\Mat{\theta}) \Mat{\mu}^{(i)} \\
& \quad + \frac{\epsilon^2}{2} \sum_{j=1}^{m_i} w^{(i)}_{j} \Mat{\delta}^{(i)\top}_j \Mat{S}^{(i)}(\Mat{\theta}) \Mat{\delta}^{(i)}_j + \bigO(\epsilon^3).
\end{align*}
\end{lemma}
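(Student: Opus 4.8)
\textbf{Proof proposal for Lemma~\ref{lem:loss_decomp_split_one}.}

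The plan is to treat the left-hand side as a smooth scalar function of the scaling parameter $\epsilon$ and Taylor-expand it to second order. Concretely, recall from the setup that $\Mat{\vartheta}^{(i)}_j = \Mat{\theta}^{(i)} + \epsilon \Mat{\Delta}^{(i)}_j$ with $\Mat{\Delta}^{(i)}_j = \Mat{\mu}^{(i)} + \Mat{\delta}^{(i)}_j$, and define $g(\epsilon) \triangleq \cL(\Mat{\theta}^{(\setminus i)}, \Mat{\vartheta}^{(i)}, \Mat{w}^{(i)})$. At $\epsilon = 0$ every off-spring collapses onto $\Mat{\theta}^{(i)}$, and since $\sum_{j} w^{(i)}_j = 1$ the re-weighted sum $\sum_j w^{(i)}_j \sigma_{\Pi}(\Mat{x};\Mat{\vartheta}^{(i)}_j)$ reduces to $\sigma_{\Pi}(\Mat{x};\Mat{\theta}^{(i)})$, so $g(0) = \cL(\Mat{\theta})$. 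Since $g$ inherits bounded third-order derivatives from $\cL$, Taylor's theorem gives $g(\epsilon) - g(0) = \epsilon\, g'(0) + \tfrac{\epsilon^2}{2} g''(0) + \bigO(\epsilon^3)$, and it remains to compute $g'(0)$ and $g''(0)$.

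For the first derivative, differentiating along the path $\epsilon \mapsto \Mat{\vartheta}^{(i)}_j$ gives $g'(\epsilon) = \sum_{j=1}^{m_i} \partial_{\Mat{\vartheta}^{(i)}_j} \cL(\Mat{\theta}^{(\setminus i)}, \Mat{\vartheta}^{(i)}, \Mat{w}^{(i)})^\top \Mat{\Delta}^{(i)}_j$. Evaluating at $\epsilon = 0$ and substituting the first identity of Lemma~\ref{lem:derivatives_split_one}, namely $\partial_{\Mat{\vartheta}^{(i)}_j}\cL\vert_{\epsilon=0} = w^{(i)}_j \partial_{\Mat{\theta}^{(i)}}\cL(\Mat{\theta})$, yields $g'(0) = \partial_{\Mat{\theta}^{(i)}}\cL(\Mat{\theta})^\top \sum_j w^{(i)}_j \Mat{\Delta}^{(i)}_j$. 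Using Eq.~\ref{eqn:delta_sum_zero} ($\sum_j w^{(i)}_j \Mat{\delta}^{(i)}_j = \Mat{0}$) together with $\sum_j w^{(i)}_j = 1$, this sum equals $\Mat{\mu}^{(i)}$, so $g'(0) = \partial_{\Mat{\theta}^{(i)}}\cL(\Mat{\theta})^\top \Mat{\mu}^{(i)}$, matching the first term of the claim.

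For the second derivative, differentiating once more gives $g''(\epsilon) = \sum_{j,j'} \Mat{\Delta}^{(i)\top}_j \partial^2_{\Mat{\vartheta}^{(i)}_j \Mat{\vartheta}^{(i)}_{j'}}\cL\, \Mat{\Delta}^{(i)}_{j'}$. Splitting into diagonal ($j=j'$) and off-diagonal ($j\ne j'$) blocks and plugging in the second-order identities of Lemma~\ref{lem:derivatives_split_one}, the $\Mat{T}^{(i)}(\Mat{\theta})$ pieces (carrying $w^{(i)2}_j$ on the diagonal and $w^{(i)}_j w^{(i)}_{j'}$ off the diagonal) reassemble into the single complete double sum $\sum_{j,j'} w^{(i)}_j w^{(i)}_{j'} \Mat{\Delta}^{(i)\top}_j \Mat{T}^{(i)} \Mat{\Delta}^{(i)}_{j'} = \big(\sum_j w^{(i)}_j \Mat{\Delta}^{(i)}_j\big)^\top \Mat{T}^{(i)} \big(\sum_{j'} w^{(i)}_{j'} \Mat{\Delta}^{(i)}_{j'}\big) = \Mat{\mu}^{(i)\top}\Mat{T}^{(i)}\Mat{\mu}^{(i)}$, while the $\Mat{S}^{(i)}(\Mat{\theta})$ piece gives $\sum_j w^{(i)}_j \Mat{\Delta}^{(i)\top}_j \Mat{S}^{(i)} \Mat{\Delta}^{(i)}_j$. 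Expanding $\Mat{\Delta}^{(i)}_j = \Mat{\mu}^{(i)} + \Mat{\delta}^{(i)}_j$ in the latter, the cross terms cancel by $\sum_j w^{(i)}_j \Mat{\delta}^{(i)}_j = \Mat{0}$, leaving $\Mat{\mu}^{(i)\top}\Mat{S}^{(i)}\Mat{\mu}^{(i)} + \sum_j w^{(i)}_j \Mat{\delta}^{(i)\top}_j \Mat{S}^{(i)} \Mat{\delta}^{(i)}_j$. Finally, using $\partial^2_{\Mat{\theta}^{(i)}\Mat{\theta}^{(i)}}\cL(\Mat{\theta}) = \Mat{T}^{(i)}(\Mat{\theta}) + \Mat{S}^{(i)}(\Mat{\theta})$ from Lemma~\ref{lem:derivatives_orig}, the two $\Mat{\mu}^{(i)}$-quadratic terms combine into $\Mat{\mu}^{(i)\top}\partial^2_{\Mat{\theta}^{(i)}\Mat{\theta}^{(i)}}\cL(\Mat{\theta})\Mat{\mu}^{(i)}$, so $g''(0) = \Mat{\mu}^{(i)\top}\partial^2_{\Mat{\theta}^{(i)}\Mat{\theta}^{(i)}}\cL(\Mat{\theta})\Mat{\mu}^{(i)} + \sum_j w^{(i)}_j \Mat{\delta}^{(i)\top}_j \Mat{S}^{(i)} \Mat{\delta}^{(i)}_j$. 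Substituting $g'(0)$ and $g''(0)$ into the Taylor expansion yields the stated decomposition.

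The main obstacle is the bookkeeping in the $g''(0)$ step: one must carefully track the $w^{(i)}_j$ versus $w^{(i)2}_j$ weights, recognize that the diagonal and off-diagonal $\Mat{T}^{(i)}$ contributions merge into a single outer-product form that collapses to $\Mat{\mu}^{(i)\top}\Mat{T}^{(i)}\Mat{\mu}^{(i)}$, and then observe that $\Mat{S}^{(i)}+\Mat{T}^{(i)}$ is precisely the Hessian block so that the two $\Mat{\mu}^{(i)}$-quadratic pieces coalesce. Everything else is a routine chain-rule computation.
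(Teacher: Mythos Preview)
Your proposal is correct and follows essentially the same route as the paper: both set up a Taylor expansion around $\epsilon=0$ (you via the scalar function $g(\epsilon)$, the paper via a multivariate expansion in the displacements $\Mat{\vartheta}^{(i)}_j-\Mat{\theta}^{(i)}=\epsilon\Mat{\Delta}^{(i)}_j$), invoke the derivative identities of Lemma~\ref{lem:derivatives_split_one}, reassemble the $\Mat{T}^{(i)}$ pieces into $\Mat{\mu}^{(i)\top}\Mat{T}^{(i)}\Mat{\mu}^{(i)}$, and then use $\partial^2_{\Mat{\theta}^{(i)}\Mat{\theta}^{(i)}}\cL=\Mat{S}^{(i)}+\Mat{T}^{(i)}$ from Lemma~\ref{lem:derivatives_orig} to isolate the $\Mat{\delta}^{(i)}_j$ terms. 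The only cosmetic difference is that the paper extracts the $\Mat{\delta}$-quadratic by subtracting and re-adding $\Mat{\mu}^{(i)\top}\Mat{S}^{(i)}\Mat{\mu}^{(i)}$ before completing the square, whereas you directly expand $\Mat{\Delta}^{(i)}_j=\Mat{\mu}^{(i)}+\Mat{\delta}^{(i)}_j$; both arrive at the same identity.
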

\begin{proof}
Let $\wb{\Mat{\theta}}^{(i)} = \{ \Mat{\theta}^{(i)}, \cdots, \Mat{\theta}^{(i)}\}$ such that $\lvert \wb{\Mat{\theta}}^{(i)} \rvert = m_i$.
This is we split the $i$-th Gaussian into $m_i$ off-springs with parameters identical to the original one, or namely we let $\epsilon = 0$.
If we replace $\Mat{\vartheta}^{(i)}$ with $\wb{\Mat{\theta}}^{(i)}$, it holds that: 
\begin{align*}
\cL(\Mat{\theta}^{(\setminus i)}, \wb{\Mat{\theta}}^{(i)}, \Mat{w}^{(i)}) &= \mean_{\Pi, \Mat{x} \sim \Set{D}(\Set{X})} \left[\ell\left( \sigma_{\Pi}(\Mat{x}; \Mat{\theta}^{(1)}), \cdots, \sum_{j=1}^{m_i} w^{(i)}_j \sigma_{\Pi}(\Mat{x}; \Mat{\theta}^{(i)}), \cdots, \sigma_{\Pi}(\Mat{x}; \Mat{\theta}^{(n)}) \right) \right] \\
&= \mean_{\Pi, \Mat{x} \sim \Set{D}(\Set{X})} \left[\ell\left( \sigma_{\Pi}(\Mat{x}; \Mat{\theta}^{(1)}), \cdots, \sigma_{\Pi}(\Mat{x}; \Mat{\theta}^{(i)}), \cdots, \sigma_{\Pi}(\Mat{x}; \Mat{\theta}^{(n)}) \right) \right] = \cL(\Mat{\theta}),
\end{align*}
and
\begin{align*}
\partial_{\Mat{\vartheta}^{(i)}_j} \cL(\Mat{\theta}^{(\setminus i)}, \wb{\Mat{\theta}}^{(i)}, \Mat{w}^{(i)}) &= \partial_{\Mat{\vartheta}^{(i)}_j} \left.\cL(\Mat{\theta}^{(\setminus i)}, \Mat{\vartheta}^{(i)}, \Mat{w}^{(i)}) \right\vert_{\epsilon=0}.
\end{align*}
By Taylor expansion,
\begin{align*}
& \cL(\Mat{\theta}^{(\setminus i)}, \Mat{\vartheta}^{(i)}, \Mat{w}^{(i)}) - \cL(\Mat{\theta}) = \cL(\Mat{\theta}^{(\setminus i)}, \Mat{\vartheta}^{(i)}, \Mat{w}^{(i)}) - \cL(\Mat{\theta}^{(\setminus i)}, \wb{\Mat{\theta}}^{(i)}, \Mat{w}^{(i)}) \\
&= \sum_{j=1}^{m_i} \left.\cL(\Mat{\theta}^{(\setminus i)}, \Mat{\vartheta}^{(i)}, \Mat{w}^{(i)}) \right\vert_{\epsilon=0}^{\top} (\Mat{\vartheta}^{(i)}_{j} - \Mat{\theta}^{(i)}) \\
& \quad + \sum_{j, j' \in [m_i]} (\Mat{\vartheta}^{(i)}_{j} - \Mat{\theta}^{(i)})^{\top}\left.\partial_{\Mat{\vartheta}^{(i)}_j\Mat{\vartheta}^{(i)}_{j'}}\cL(\Mat{\theta}^{(\setminus i)}, \Mat{\vartheta}^{(i)}, \Mat{w}^{(i)}) \right\vert_{\epsilon=0} (\Mat{\vartheta}^{(i)}_{j'} - \Mat{\theta}^{(i)}) + \bigO(\epsilon^3).
\end{align*}
By Lemma \ref{lem:derivatives_split_one} and \ref{lem:derivatives_orig}:
\begin{align*}
& \cL(\Mat{\theta}^{(\setminus i)}, \Mat{\vartheta}^{(i)}, \Mat{w}^{(i)}) - \cL(\Mat{\theta}) \\
&= \epsilon \sum_{j=1}^{m_i} w^{(i)}_j \partial_{\Mat{\theta}^{(i)}} \cL(\Mat{\theta})^{\top} \Mat{\Delta}^{(i)}_{j} + \frac{\epsilon^2}{2} \sum_{j=1}^{m_i} \Mat{\Delta}^{(i)\top}_{j} \left(w^{(i)}_j \Mat{S}^{(i)}(\Mat{\theta}) + w^{(i)^2}_j \Mat{T}^{(i)}(\Mat{\theta})\right)\Mat{\Delta}^{(i)}_{j} \\
&\quad + \frac{\epsilon^2}{2} \sum_{j, j' \in [m_i], j \ne j'} w^{(i)}_j w^{(i)}_{j'} \Mat{\Delta}^{(i)\top}_{j}  \Mat{T}^{(i)}(\Mat{\theta})\Mat{\Delta}^{(i)}_{j'} + \bigO(\epsilon^3) \\
&= \epsilon \sum_{j=1}^{m_i} w^{(i)}_j \partial_{\Mat{\theta}^{(i)}} \cL(\Mat{\theta})^{\top} \Mat{\Delta}^{(i)}_{j} + \frac{\epsilon^2}{2} \sum_{j=1}^{m_i} w^{(i)}_j 
\Mat{\Delta}^{(i)\top}_{j} \Mat{S}^{(i)}(\Mat{\theta}) \Mat{\Delta}^{(i)}_{j} \\
& \quad + \frac{\epsilon^2}{2} \sum_{j, j' \in [m_i]} w^{(i)}_j w^{(i)}_{j'} \Mat{\Delta}^{(i)\top}_{j}  \Mat{T}^{(i)}(\Mat{\theta})\Mat{\Delta}^{(i)}_{j} + \bigO(\epsilon^3) \\
&= \epsilon \partial_{\Mat{\theta}^{(i)}} \cL(\Mat{\theta})^{\top} \Mat{\mu}^{(i)} + \frac{\epsilon^2}{2} \sum_{j=1}^{m_i} w^{(i)}_j \Mat{\Delta}^{(i)\top}_{j} \Mat{S}^{(i)}(\Mat{\theta}) \Mat{\Delta}^{(i)}_{j} + \Mat{\mu}^{(i)^\top}  \Mat{T}^{(i)}(\Mat{\theta}) \Mat{\mu}^{(i)} + \bigO(\epsilon^3) \\
&= \epsilon \partial_{\Mat{\theta}^{(i)}} \cL(\Mat{\theta})^{\top} \Mat{\mu}^{(i)} + \frac{\epsilon^2}{2} \Mat{\mu}^{(i)^\top} \left( \Mat{S}^{(i)}(\Mat{\theta}) + \Mat{T}^{(i)}(\Mat{\theta}) \right) \Mat{\mu}^{(i)} \\
& \quad + \frac{\epsilon^2}{2} \sum_{j=1}^{m_i} w^{(i)}_j \left(\Mat{\Delta}^{(i)\top}_{j} \Mat{S}^{(i)}(\Mat{\theta}) \Mat{\Delta}^{(i)}_{j} - \Mat{\mu}^{(i)\top} \Mat{S}^{(i)}(\Mat{\theta}) \Mat{\mu}^{(i)} \right) + \bigO(\epsilon^3) \\
&= \epsilon \partial_{\Mat{\theta}^{(i)}} \cL(\Mat{\theta})^{\top} \Mat{\mu}^{(i)} + \frac{\epsilon^2}{2} \Mat{\mu}^{(i)^\top}  \partial^2_{\Mat{\theta}^{(i)}\Mat{\theta}^{(i)}} \cL(\Mat{\theta}) \Mat{\mu}^{(i)} + \frac{\epsilon^2}{2} \sum_{j=1}^{m_i} w^{(i)}_j \left(\Mat{\Delta}^{(i)\top}_{j} \Mat{S}^{(i)}(\Mat{\theta}) \Mat{\Delta}^{(i)}_{j} - \Mat{\mu}^{(i)\top} \Mat{S}^{(i)}(\Mat{\theta}) \Mat{\mu}^{(i)} \right) + \bigO(\epsilon^3).
\end{align*}
Finally, we conclude the proof by showing that:
\begin{align*}
&\sum_{j=1}^{m_i} w^{(i)}_j \left(\Mat{\Delta}^{(i)\top}_{j} \Mat{S}^{(i)}(\Mat{\theta}) \Mat{\Delta}^{(i)}_{j} - \Mat{\mu}^{(i)\top} \Mat{S}^{(i)}(\Mat{\theta}) \Mat{\mu}^{(i)} \right) \\
&= \sum_{j=1}^{m_i} w^{(i)}_j \Mat{\Delta}^{(i)\top}_{j} \Mat{S}^{(i)}(\Mat{\theta}) \Mat{\Delta}^{(i)}_{j} + \Mat{\mu}^{(i)\top} \Mat{S}^{(i)}(\Mat{\theta}) \Mat{\mu}^{(i)} - 2\left(\sum_{j=1}^{m_i} w^{(i)}_j \Mat{\Delta}^{(i)}_{j} \right)^\top \Mat{S}^{(i)}(\Mat{\theta}) \Mat{\mu}^{(i)} \\
&= \sum_{j=1}^{m_i} w^{(i)}_j \left((\Mat{\Delta}^{(i)}_{j} - \Mat{\mu}^{(i)})^\top \Mat{S}^{(i)}(\Mat{\theta}) (\Mat{\Delta}^{(i)}_{j} - \Mat{\mu}^{(i)}) \right) = \sum_{j=1}^{m_i} w^{(i)}_j \Mat{\delta}^{(i)^\top}_{j} \Mat{S}^{(i)}(\Mat{\theta}) \Mat{\delta}^{(i)}_{j}.
\end{align*}
\end{proof}

\subsection{Deriving Hessian of Gaussian}
\label{sec:hessian}
In Sec. \ref{sec:impl}, we discussed that SteepGS requires the computation of Hessian matrices for $\sigma_{\Pi}(\Mat{x}; \Mat{\theta})$.
We make the following simplifications:
\textit{(i)} We only consider position parameters as the optimization variable when computing the steepest descent directions.
\textit{(ii)} Although other variables may have a dependency on the mean parameters, \eg the projection matrix and view-dependent RGB colors, we break this dependency for ease of derivation.
Now suppose we have a 3D Gaussian point with parameters $\Mat{\theta} = (\Mat{p}, \Mat{\Sigma}, o)$, where we omit RGB colors as it can be handled similarly to opacity $o$.
Given the affine transformation $\Pi: \Mat{p} \mapsto \Mat{P}\Mat{p} + \Mat{b}$ with $\Mat{P} \in \real^{2 \times 3}$ and $\Mat{b} \in \real^2$, then $\sigma_{\Pi}(\Mat{x}; \Mat{\theta})$ can be expressed as:
\begin{align*}
\sigma_{\Pi}(\Mat{x}; \Mat{\theta}) &= o \exp\left(-\frac{1}{2}(\Mat{x} - \Mat{P}\Mat{p} - \Mat{b})^\top (\Mat{P}\Mat{\Sigma}\Mat{P}^\top)^{-1} (\Mat{x} - \Mat{P}\Mat{p} - \Mat{b}) \right) = o \gauss(\Mat{x}; \Mat{P}\Mat{p} + \Mat{b}, \Mat{P}\Mat{\Sigma}\Mat{P}^\top).
\end{align*}
Its gradient can be derived as:
\begin{align*}
\nabla_{\Mat{p}} \sigma_{\Pi}(\Mat{x}; \Mat{\theta}) &= o \gauss(\Mat{x}; \Mat{P}\Mat{p} + \Mat{b}, \Mat{P}\Mat{\Sigma}\Mat{P}^\top) \nabla_{\Mat{p}}\left[-\frac{1}{2}(\Mat{x} - \Mat{P}\Mat{p} - \Mat{b})^\top (\Mat{P}\Mat{\Sigma}\Mat{P}^\top)^{-1} (\Mat{x} - \Mat{P}\Mat{p} - \Mat{b}) \right] \\
&= o \gauss(\Mat{x}; \Mat{P}\Mat{p} + \Mat{b}, \Mat{P}\Mat{\Sigma}\Mat{P}^\top) \Mat{P}^\top (\Mat{P}\Mat{\Sigma}\Mat{P}^\top)^{-1} (\Mat{x} - \Mat{P}\Mat{p} - \Mat{b}).
\end{align*}
Now we can compute the Hessian matrix as:
\begin{align*}
\nabla^2_{\Mat{p}} \sigma_{\Pi}(\Mat{x}; \Mat{\theta}) &= \Mat{P}^\top(\Mat{P}\Mat{\Sigma}\Mat{P}^\top)^{-1} (\Mat{x} - \Mat{P}\Mat{p} + \Mat{b}) \nabla_{\Mat{p}}\left[o \gauss(\Mat{x}; \Mat{P}\Mat{p} + \Mat{b}, \Mat{P}\Mat{\Sigma}\Mat{P}^\top)\right]^\top
- o \gauss(\Mat{x}; \Mat{P}\Mat{p} + \Mat{b}, \Mat{P}\Mat{\Sigma}\Mat{P}^\top) (\Mat{P}\Mat{\Sigma}\Mat{P}^\top)^{-1}\Mat{P} \\
&= o \gauss(\Mat{x}; \Mat{P}\Mat{p} + \Mat{b}, \Mat{P}\Mat{\Sigma}\Mat{P}^\top) \Mat{P}^\top (\Mat{P}\Mat{\Sigma}\Mat{P}^\top)^{-1} (\Mat{x} - \Mat{P}\Mat{p} - \Mat{b}) (\Mat{x} - \Mat{P}\Mat{p} - \Mat{b})^\top (\Mat{P}\Mat{\Sigma}\Mat{P}^\top)^{-1}\Mat{P} \\
& \quad - o \gauss(\Mat{x}; \Mat{P}\Mat{p} + \Mat{b}, \Mat{P}\Mat{\Sigma}\Mat{P}^\top) \Mat{P}^\top (\Mat{P}\Mat{\Sigma}\Mat{P}^\top)^{-1}\Mat{P} \\
&= \sigma_{\Pi}(\Mat{x}; \Mat{\theta}) \left( \Mat{P}^\top (\Mat{P}\Mat{\Sigma}\Mat{P}^\top)^{-1} (\Mat{x} - \Mat{P}\Mat{p} - \Mat{b}) (\Mat{x} - \Mat{P}\Mat{p} - \Mat{b})^\top (\Mat{P}\Mat{\Sigma}\Mat{P}^\top)^{-1}\Mat{P} - \Mat{P}^\top (\Mat{P}\Mat{\Sigma}\Mat{P}^\top)^{-1}\Mat{P} \right)
\end{align*}
as desired.

\end{document}